\newlength\myindent
\newcommand{\Rmnum}[1]{\expandafter\@slowromancap\romannumeral #1@}
\newtheorem{definition}{Definition}[section]
\newtheorem{theorem}{Theorem}[section]
\newtheorem{lemma}{Lemma}[section]
\newtheorem{assumption}{Assumption}[section]
\newtheorem{remark}{Remark}[section]
\numberwithin{equation}{section}
\DeclareMathOperator*{\argmin}{arg\,min}
\providecommand{\keywords}[1]
{
\small 
\textbf{Keywords:}
}
\begin{document}

\title{Riemannian Federated Learning via Averaging Gradient Streams}

\author[1]{Zhenwei Huang}
\author[1]{Wen Huang\thanks{Corresponding author: wen.huang@xmu.edu.cn}}
\author[2]{Pratik Jawanpuria}
\author[3]{Bamdev Mishra}
\affil[1]{ Xiamen University, Xiamen, China.\vspace{.15cm} }
\affil[2]{ Indian Institute of Technology Bombay, Bombay, India \vspace{.15cm} }
\affil[3]{ Microsoft, Hyderabad, India. }

\date{}

\maketitle

\begin{abstract}
Federated learning (FL) as a distributed learning paradigm has a significant advantage in addressing large-scale machine learning tasks. 
In the Euclidean setting, FL algorithms have been extensively studied with both theoretical and empirical success. 
However, there exist few works that investigate federated learning algorithms in the Riemannian setting. 
In particular, critical challenges such as partial participation and data heterogeneity among agents are not explored in the Riemannian federated setting. 
This paper presents and analyzes a Riemannian FL algorithm, called RFedAGS, based on a new efficient server aggregation---averaging gradient streams, which can simultaneously handle partial participation and data heterogeneity.
 We theoretically show that the proposed RFedAGS has global convergence and sublinear convergence rate under decaying step sizes cases; and converges sublinearly/linearly to a neighborhood of a stationary point/solution under fixed step sizes cases. 
 These analyses are based on a vital and non-trivial assumption induced by partial participation, which is shown to hold with high probability. Extensive experiments conducted on synthetic and real-world data demonstrate the good performance of RFedAGS.

\begin{keywords}

Riemannian federated learning, Averaging gradient streams, Partial participation, Heterogeneity data, Riemannian distributed optimization
\end{keywords}

\end{abstract}

\section{Introduction} \label{sec1}

Modern learning tasks handle massive amounts of data, which are geographically distributed across heterogeneous devices. Conventional centralized algorithms, e.g., stochastic gradient descent (SGD), need to collect the data into single device for training, which consumes significant storage and computing resource. Additionally, from the perspective of privacy security, transmitting raw training data may leak data privacy. 
A promising distributed learning paradigm---federated learning (FL)---allows a center server to coordinate with multiple agents (e.g., mobile phones and tablets) to train a desired model parameter without raw data sharing, which is an ideal solution to the issues aforementioned.  

In recent years, with the development of Riemannian optimization, many machine learning problems have data structures that can be inscribed by low-dimensional smooth manifolds, and thus they can be modeled on manifolds. There are such examples including but not limited to principal component analysis~\cite{YZ21}, Fr{\'e}chet mean computation~\cite{HMJG21}, hyperbolic structured prediction~\cite{XCNS22}, low-rank matrix completion~\cite{JM18,MKJS19}, multitask feature learning~\cite{JM18,MKJS19}, and neural network training~\cite{Mag23}. This motives us to develop an efficient Riemannian FL algorithm.

This paper focuses on the following Riemannian federated optimization problem
\begin{align} \label{sec1:eq01}
    \argmin_{x\in \mathcal{M}} F(x):=\frac{1}{N}\sum_{i=1}^Nf_i(x), \hbox{ with }f_i(x)=\mathbb{E}_{\xi\sim \mathcal{D}_i}[f_i(x;\xi)],
\end{align}
where $\mathcal{M}$ is a $d$-dimensional Riemannian manifold, $N$ is the number of agents, $F:\mathcal{M} \rightarrow \mathbb{R}$ is the global objective, and $f_i:\mathcal{M}\rightarrow \mathbb{R}$ and $\mathcal{D}_i$ are local objectives and the data distribution held by agent $i$, $\forall i\in[N]=\{1,2,\dots,N\}$. Throughout this paper, we focus on the expected minimization~(\ref{sec1:eq01}), but the resulting conclusions are also true for the finite sum minimization in which the local objective is defined by $f_i(x)=\frac{1}{N_i}\sum_{j=1}^{N_i}f_i(x;z_{i,j})$ with $\mathcal{D}_i=\{z_{i,1},z_{i,2},\dots,z_{i,N_i}\}$ the local dataset held by agent $i$. 
We may not necessarily assume that $\mathcal{D}_i$, $\forall i\in[N]$, are the independently identical distribution (I.I.D.), i.e., the data distributions across different agents are non-I.I.D. 

A well-known  Euclidean FL algorithm is Federated Averaging (FedAvg)~\cite{MMRHA17}, which is adapted from the local stochastic gradient descent (local SGD) method. Specifically, at the beginning, FedAvg takes an initial guess $x_1$ as input and then sends it to all agents. Subsequently, the following steps are performed alternately: 
\begin{enumerate}[(i)]
	\item agent $j$ updates its the local parameter via performing $K$-step SGD with $x_t$ being the initial guess and generates the trained local parameter $x_{t,K}^{j}$ (this is called ``local update'' or ``inner iteration''), and then the local parameter $x_{t,K}^j$ is uploaded to the server;
	\item the server at random samples a subset of size $S$ from all agents, denoted by $\mathcal{S}_t$, and then averages the received local parameters to generate the next global parameter $x_{t+1}$, i.e., 
	\begin{align} \label{sec1:eq02}
		x_{t+1} \gets \frac{1}{S} \sum_{j\in \mathcal{S}_t} x_{t,K}^j,
	\end{align}
	which is called ``server aggregation'', and then sends $x_{t+1}$ to all agents. 
\end{enumerate}
The two steps above constitute a round of communication (or outer iteration).

\paragraph{Related works.} 
Early works primarily analyzed the convergence of FedAvg and its variants in limited settings, typically relying on one or both of the following assumptions: (i) full participation (i.e., $S=N$) and (ii) I.I.D. data distributions; see, e.g.,~\cite{ZC18,Sti19,YYSZ19,HKMC19,WJ18,GLHA23} and references therein. 
Subsequently, numerous works have studied the convergence of FL algorithms under (iii) partial participation and (iv) non-I.I.D. data assumption; see e.g.,~\cite{LHYWZ20,LSZSTS20,RVS22} and references therein. In these works, partial participation is implemented by random sampling---the server randomly selects a subset of agents to perform local updates in each outer iteration. 

Due to heterogeneity in the computational capabilities and the environment conditions across agents, 
their availability and response speeds are hardly predictable. This unpredictability makes random sampling-based approaches unsuitable for such scenarios.
Recent works have instead adopted an arbitrary participation model, where agents may respond to the server in a stochastic and uncontrolled manner~\cite{GHZH21,WJ22,RVV23,XTY23,YND23,WJ24,XTY25}. These works can be roughly divided into three categories: 
(i)~\textbf{time-varying statistic}, i.e. agent $i$ participates in the $t$-th outer iteration with probability $p_t^i$ varying over time~\cite{WJ22,RVV23,XTY23,WJ24,XTY25}; 
(ii)~\textbf{time-invariant statistic}, i.e., the participation probability for agent $i$ is not varying over time (meaning $p_t^i=p_i$ for all $t\ge 1$)~\cite{WJ24};
and (iii)~\textbf{periodic participation}, i.e., 
each agent $i$ must participate in at least one communication round within a fixed iteration interval~\cite{GHZH21,YND23}. 

The FL algorithms mentioned earlier operate solely in Euclidean space and thus cannot directly handle such problems whose parameters are located in manifolds due to the inherent curvature effects of manifolds. 
Only a limited number of studies have explored the design and analysis of FL algorithms on Riemannian manifolds. Li and Ma \cite{LM23} proposed a Riemannian counterpart of~(\ref{sec1:eq02}) and thus developed a Riemannian FL algorithm. Their algorithm involves in exponential mapping, its inverse, and parallel transport. Nevertheless, for some manifolds, e.g., the Stiefel manifold, the inverse of the exponential mapping and parallel transport have no closed forms, and only iterative methods can be used to compute them, which brings an extra computation burden. 
Huang et al.~\cite{HHJM24} adopted a framework similar to that of~\cite{LM23} but integrate differential privacy to strengthen privacy guarantees. 
Under the non-I.I.D. setting, most convergence results in~\cite{LM23,HHJM24} are established for the case $K=1$ and full participation, i.e., all agents just perform one step local update (notably, for $K>1$, the convergence analyses of both algorithms further assume that only one agent participates in communication). 
The algorithm proposed in~\cite{ZHSJ24} supports general settings where $K>1$ and $S>1$, but its convergence analysis relies on the full participation assumption. Additionally, the algorithm therein involves an orthogonal projector onto the manifold and requires that this projector is a singleton. Thus, its applicability is restricted to problems on compact Riemannian submanifolds embedded in Euclidean spaces. The algorithms in~\cite{XYW24,XYZ25} incorporated the Barzilai-Borwein method into the framework of~\cite{LM23}. Despite the efforts of some, all of the Riemannian FL algorithms above have no theoretical guarantee under both partial participation and data heterogeneity setting. 
See Table~\ref{table:2} 
for comprehensive comparisons of existing Riemannian FL algorithms and the proposed RFedAGS.

\begin{table}[htp]
\centering
\caption{Summary of existing algorithms and the proposed RFedAGS.}
\resizebox{16.6cm}{!}{
\begin{threeparttable}
\begin{tabular}{llllll}
\toprule
Algorithms  & Manifold            & Partial Participation & Non-I.I.D. & Retraction & Vector transport          \\ 
\midrule
RFedSVRG~\cite{LM23}    & General~$^{1}$             & \ding{55} $^{2}$         &   Conditioned~$^{3}$          & Exponential mapping                   & Parallel transport        \\
RPriFed~\cite{HHJM24}     & General~$^{1}$             & \ding{55}           &   Conditioned~$^{3}$          & Exponential mapping                   & Parallel transport        \\
RFedProj~\cite{ZHSJ24}    & Compact submanifold & \ding{55}       & \ding{52}             & N/A                   & N/A                       \\
RFedSVRG-2BBS~\cite{XYW24} & General~$^{1}$             & \ding{55} $^{2} $       &   Conditioned~$^{3}$              & Exponential mapping                   & Parallel transport        \\
RFedSVRG-BB~\cite{XYZ25} & General~$^{1}$             & \ding{55} $^{2} $    &   Conditioned~$^{3}$                 & Exponential mapping                   & Parallel transport        \\
\textbf{RFedAGS (this paper)}      & \textbf{General}             & {\LARGE \ding{52}}         & {\LARGE \ding{52}}            & \textbf{General retraction}                   & \textbf{Bounded} \\
\bottomrule 
\end{tabular}
\begin{tablenotes}
\footnotesize
\item[1] Although these methods are suitable for general manifolds, due to the usage of exponential mapping and its inverse, they may not work in some manifolds in where the inverses of exponential mappings have no closed-form expressions, for example, the Stiefel manifold.
\item[2]  These algorithms at each outer iteration compute a full gradient at current global iterate and then it is used by agents to perform local SVRG step. Hence, these algorithms are not suitable for partial participation.
\item[3] We highlight that these methods overcome the non-I.I.D. data challenge only when $K=1$ and $S=N$, i.e., all agents perform one-step local update. For $K>1$ cases, the I.I.D. and $S=1$ assumptions are indispensable. Hence, these algorithms are suitable for the non-I.I.D. data setting conditioned on $K=1$ and $S=N$.
\end{tablenotes}
\end{threeparttable}
}
\label{table:2}
\end{table}

\paragraph{Challenges.} 
In this paper, we focus on investigating a FL algorithms on general Riemannian manifolds, which works under  arbitrary participation and data heterogeneity setting. In that case, the challenges of designing and analyzing such an algorithm mainly arise from (i) the curvature effects of manifolds, (ii) multiple-step local updates at each agent,  (iii) stochastic error of arbitrary participation, and (iv) data heterogeneity across agents. 
The biggest challenge brought by (i) and (iii) is how the server generates new global parameters based on the local update information from multiple agents, which directly affects the design of the algorithm.  While (ii) and (iv) will bring local errors into the global parameter even make algorithms diverge, which is called agent drift effects. 
These issues often couple together and make convergence analysis more complicated. 

\paragraph{Contributions.}
The main contributions of this paper are summarized as follows.
\begin{enumerate}[1.]
\item The server aggregation (SA) proposed in \cite{LM23} is inspired by the Euclidean weighted average~(\ref{sec2:eq02}). Although this SA is feasible in practice, it has significant challenges in terms of theory analysis and computation efficiency. This paper presents a new SA which can avoid the issues mentioned above. The idea behind the presented SA is that it does not handle local parameters but rather averages local gradient information, which retains linearity to some extent. 
\item We investigate the availability of the proposed RFedAGS under arbitrary participation and non-I.I.D. data, where the arbitrary participation setting is based on the time-invariant statistic model without requiring prior knowledge of the participation probabilities. This model encompasses many practical scenarios, including random sampling. 
\item We establish the convergence guarantees of the proposed RFedAGS under the arbitrary participation and non-I.I.D data setting with the standard assumptions in FL and Riemannian optimization except Assumption~\ref{sec3:ass8} which is important and nontrivial. We also discuss the reasonability of this assumption when using the frequencies to estimate the true probabilities. 
\item Extensive numerical experiments with synthetic/real-world data are conducted to demonstrate the efficacy of the proposed RFedAGS. 
\end{enumerate}

\paragraph{Notations.} 
Throughout this paper, we use $\mathbb{R},\mathbb{R}^{n}$, and $\mathbb{R}^{m\times n}$ to denote the real numbers, the space real vectors of dimension $n$, and the space real matrices of size $m\times n$, respectively. 
We use $\mathcal{M}$ to denote the Riemannian manifold and the equipped Riemannian metric is denoted by $\left<\cdot,\cdot\right>$, whose the induced norm on the tangent space $\mathrm{T}_x \mathcal{M}$ is denoted by $\|\cdot\|_x$ (omitting the subscript sometimes). $\mathrm{Exp}$, $\mathrm{R}$, $\mathcal{T}$, and $\mathrm{grad}f$ denote exponential mapping, retraction, vector transport, and the gradient of $f:\mathcal{M} \rightarrow \mathbb{R}$, respectively. Also, $\left<\cdot,\cdot\right>_{\mathrm{F}}$, $\|\cdot\|_{\mathrm{F}}$, and $\nabla f$ denote the Euclidean inner product, the norm induced by the Euclidean inner product, and the Euclidean gradient of $f$.

\section{RFedAGS: \texorpdfstring{\underline{R}iemannian \underline{Fed}erated  \underline{A}veraging \underline{G}radient \underline{S}treams}{}}   \label{sec2}

A basic background in Riemannian geometry and optimization is assumed, and the details can be found in Appendix~\ref{app:B}.
The proposed RFedAGS (stated in Algorithm~\ref{alg:RFedAGS}) is explained as follows. 

\begin{algorithm}[ht]
\caption{Riemannian Federated Learning via Averaging Gradient Streams: RFedAGS}
\begin{algorithmic}[1] 
  \Require Initial global model ${x}_1\in \mathcal{M}$, number of aggregations $T$, numbers of local iterations $K$, local step size sequence $\{\alpha_{t}\}_{t=1}^T$, global step size $\varpi$, batch size sequence $\{B_t\}_{t=1}^{T}$;
  \Ensure $ \{x_t\}_{t=1}^{T+1}$.
  \For{$t=1,2,\dots,T$}    \Comment{Outer iteration}
  \State The server broadcasts ${x}_t$ to all agents, i.e., $x_{t,0}^j\gets {x}_t$, $j\in \mathcal{N}$;
  \For{ Agent $j \in \mathcal{N}$ in parallel}  \Comment{Inner iteration}
  \State Set $\zeta_{t,0}^j\gets 0_{{x}_t}$; 
  \For{$k=0,1,\dots,K-1$} 
  \State Agent $j$ finds indices of the mini-batch sample $\mathcal{B}_{t,k}^j$ by sampling $B_{t}$ times;
  \State Set $\eta_{t,k}^j \gets \frac{1}{B_{t}}\sum_{b\in \mathcal{B}_{t,k}^j}\mathrm{grad}f_j(x_{t,k}^j;\xi_{t,k,b}^j)$; 
  \State Set ${x}_{t,k+1}^j \gets \mathrm{R}_{x_{t,k}^j} ( - \alpha_t \eta_{t,k}^j )$; 
  \State Set $\zeta_{t,k+1}^j \gets \zeta_{t,k}^j + \mathcal{T}_{\tilde{\eta}_{t,k-1}^j}(\alpha_t\eta_{t,k}^j)$ with $\tilde{\eta}_{t,k}^j$ satisfying $\mathrm{R}_{x_{t,k}^j}(\tilde{\eta}_{t,k}^j)=x_{t}$; 
  \EndFor 
  \State Upload the gradient stream $\zeta_{t,K}^j$ to the server with an unknown but fixed probability $p_j$; 
  \EndFor
  \State \label{alg:RFedAGS:14} The server computes the approximate probability $q_t^j$,  $\forall j\in \mathcal{S}_t$;
  \State  \label{alg:RFedAGS:15} The server updates the new global model $x_{t+1}$ by (\ref{AGS-AP}) with $q_t^j$ replacing $p_j$; 
  \EndFor 
\end{algorithmic}
\label{alg:RFedAGS} 
\end{algorithm}

\paragraph{A new Riemannian SA.} Due to the curvature effects of manifolds, the addition of two points in a manifold is not valid, and thus the SA via the weighted average of local parameters (\ref{sec1:eq02}) does not work in the Riemannian setting.
~\cite{LM23} proposed a SA, called tangent mean, defined by 
\begin{align}\label{TM}
x_{t+1} \gets \mathrm{Exp}_{x_t}\left( \frac{1}{|\mathcal{S}_t|}\sum_{i\in \mathcal{S}_t} \mathrm{Exp}_{x_t}^{-1}(x_{t,K}^i) \right), \tag{TM}
\end{align}
which is an approximate to the weighted average of points on a manifold. 
On the one hand, (\ref{TM}) involves the inverse of exponential mapping, which has no closed-form expression in some manifolds, e.g., the Stiefel manifold. This limits its scope of availability. 
Additionally, due to the curvature effects of manifolds, exponential mapping and its inverse almost are nonlinear. Hence, when agents perform multiple-step local updates, (\ref{TM}) involves multiple consecutive exponential mappings, resulting in that the increment of parameters, $\mathrm{Exp}_{x_t}^{-1}(x_{t+1})$, is difficult to be bounded in analysis, which makes convergence analysis fairly challenging. 
In view of the discussions above, this paper resorts to another aggregation which can not only implement SA efficiently but also analyze algorithm convergence conveniently.

Back to the Euclidean setting, the increment of parameters of FedAvg can be expanded as 	
\[
	\Delta_t = x_{t+1}-x_t = - \alpha_t \frac{1}{|\mathcal{S}_t|}\sum_{i\in \mathcal{S}_t} \sum_{k=0}^{K-1} \frac{1}{B_t}\sum_{b\in \mathcal{B}_{t,k}^i}\nabla f_i(x_{t,k}^i;\xi_{t,k,b}^i).
\]
Observing the expression shows that the increment of parameters is given by the average of mini-batch gradients of active agents. 
We can adopt the similar idea in the Riemannian setting but require making some adaptations, since directly combining the mini-batch gradients located in different tangent spaces is not well defined. 
With the aid of vector transport, the combination can be defined. Specifically, we define the the Riemannian ``increment of parameters'' as 	
\[
	\zeta_t = \mathrm{R}_{x_t}^{-1}(x_{t+1}) = -\alpha_t \frac{1}{|\mathcal{S}_t|} \sum_{j\in \mathcal{S}_t} \sum_{k=0}^{K-1}\frac{1}{B_t}\sum_{b\in \mathcal{B}_{t,k}^j}{\color{red} \mathcal{T}_{\tilde{\eta}_{t,k}^j}}(\mathrm{grad}f_j(x_{t,k}^k;\xi_{t,k,b}^j)).
\] 
Specific to agent $j$, it just need to upload
\[
\zeta_{t,K}^j = \sum_{k=0}^{K-1}\frac{1}{B_t}\sum_{b\in \mathcal{B}_{t,k}^j}\mathcal{T}_{\tilde{\eta}_{t,k}^j}(\mathrm{grad}f_j(x_{t,k}^k;\xi_{t,k,b}^j)),
\] called gradient stream, to the server. 
The resulting new SA is given via averaging gradient streams:
\begin{align}	\label{AGS-RS}
	x_{t+1} = \mathrm{R}_{x_t}(\zeta_t) = \mathrm{R}_{x_t} \left( -\alpha_t\frac{1}{|\mathcal{S}_t|} \sum_{j\in \mathcal{S}_t} \zeta_{t,K}^j \right). \tag{AGS-RS}
\end{align}
It is worth noting that when the manifold reduces to a Euclidean space, (\ref{AGS-RS}) is equivalent to the Euclidean SA (\ref{sec1:eq02}). 
In our opinion, this aggregation is a more essential generalization from the Euclidean setting to the Riemannian setting.

\paragraph{Arbitrary partial participation.}

Now we are ready to extend (\ref{AGS-RS}) to the arbitrary partial participation setting under consideration, which is formally modeled in Assumption~\ref{sec2:ass1}. 

\begin{assumption} \label{sec2:ass1}
	Assume that each agent $i$ independently participates in any round of communication with probability $p_i>0$. 
\end{assumption}

Under Assumption~\ref{sec2:ass1}, when the participation probabilities are not exactly equal to each other, using (\ref{AGS-RS}) simply may introduce stochastic participation errors. In that case, 
the next theorem points out that the algorithm equipped with (\ref{AGS-RS}) may work incorrectly since it 
may solve another problem different from the original problem.

\begin{theorem}[Proved in Appendix~\ref{Proof:th21}] \label{sec2:th1}
	Under Assumption~\ref{sec2:ass1}, let $\mathcal{S}_t$ denotes the set of agents who respond to the server at the $t$-th round of communication. Then,	
$
	\mathbb{E}\left[\sum_{j\in \mathcal{S}_t}\frac{1}{|\mathcal{S}_t|}\mathrm{grad}f_j(x)\right] = \sum_{i=1}^N \tilde{p}_i \mathrm{grad}f_i(x), 
$
with $\tilde{p_i} = p_i \int_0^1\prod_{j\not=i}^N(1-p_j+p_jt)\mathrm{d}t$.
\end{theorem}

Therefore, if $p_i\not=p_j$ for some $i,j\in[N]$, then $\tilde{p_i}\not=\tilde{p}_j$, and thus there exists no $\chi >0$ such that $\sum_{i=1}^N\tilde{p_i}\mathrm{grad}f_i(x)=\chi \mathrm{grad} F(x)$. That is, the algorithm may not solve the original problem $\min_{x\in \mathcal{M}}F(x)$ since each of its search directions leads the iterate $x_t$ to the minimizer of another problem $\min_{x\in \mathcal{M}}\tilde{F}(x):=\sum_{i=1}^N{\tilde{p}_i}f_i(x)$.

Back again to Assumption~\ref{sec2:ass1}, at the $t$-th round of communication, note that
\begin{align} 
	\mathbb{E}\left[ \sum_{i\in \mathcal{S}_t}\frac{1}{p_iN}\mathrm{grad} f_i(x) \right] & = \mathbb{E}\left[ \sum_{i=1}^N\frac{1}{p_iN}\mathbb{I}_{\mathcal{S}_{t}}(i)\mathrm{grad}f_i(x) \right] = \sum_{i=1}^N\frac{1}{p_iN}\mathbb{E}\left[\mathbb{I}_{\mathcal{S}_{t}}(i)\mathrm{grad}f_i(x)\right] \nonumber\\
	& = \sum_{i=1}^N \frac{1}{p_iN} \left(p_i\mathrm{grad} f_i(x)\right)=\mathrm{grad} F(x), \label{sec2:eq02}
\end{align} 
where $\mathbb{I}_{\mathcal{S}_{t}}(i)=1$ if $i\in \mathcal{S}_t$ otherwise $\mathbb{I}_{\mathcal{S}_{t}}(i)=0$. Hence, if the participation probabilities, $p_i$'s, are known, one of the feasible aggregation patterns can be chosen as 

\begin{align} \label{AGS-AP}	
	x_{t+1} \gets \mathrm{R}_{x_t}\bigg(-\varpi\sum_{i\in \mathcal{S}_t}\frac{1}{p_iN} \alpha_t \zeta_{t,K}^i \bigg) \hbox{ with $\varpi>0$ the global step size}, \tag{AGS-AP}
\end{align}
which ensures that the algorithm correctly solves the original problem $\min_{x\in \mathcal{M}}F(x)$.

On the other hand, 
in practical applications, the server is actually unaware of the true probabilities. In this case, what the server can do is to estimate the true probabilities as possible in some ways, that is, the server computes $q_t^i$ in the $t$-th round of communication and uses it to serve as the true probability~$p_i$. 
Summarizing above, this paper proposes a Riemannian FL algorithm, called RFedAGS, which can address the partial participation setting, as stated in Algorithm~\ref{alg:RFedAGS}.

\section{Convergence Analysis}  \label{sec3}

In this section, we establish the convergence properties of RFedAGS (Algorithm~\ref{alg:RFedAGS}) on the partial participation and the non-I.I.D. data settings. All of the proofs can be found in Appendix~\ref{app:D}.

\subsection{Assumptions}

We first present a set of assumptions as follows that are necessary for the convergence analysis. All assumptions except Assumption~\ref{sec3:ass8} have been used
in e.g.,~\cite{Bon13,TFBJ18,SKM19,HG21}, and their reasonability is discussed in Appendix~\ref{app:C}.

\begin{assumption} \label{sec3:ass1}
The retraction $\mathrm{R}$ is such that its restriction to $\mathrm{T}_x \mathcal{M}$ for all $x\in \mathcal{M}$, $\mathrm{R}_x$, is of class~$C^2$, and the associated vector transport $\mathcal{T}$ is continuous and bounded in the sense that there exists a constant $\Upsilon>0$ such that for any $x\in \mathcal{M}$, $\zeta_x,\eta_x\in \mathrm{T}_x\mathcal{M}$, it holds that $\|\mathcal{T}_{\eta_x}(\zeta_x)\|\le \Upsilon \|\zeta_x\|$.  
\end{assumption}

\begin{assumption} \label{sec3:ass2}
	For a sequence of the outer iterates $\{x_t\}_{t\ge 1}$ and a sequence of the inner iterates $\{\{\{x_{t,k}^j\}_{j=1}^N\}_{k=0}^{K-1}\}_{t\ge1}$ generated by Algorithm~\ref{alg:RFedAGS}, there exists a $W$-totally retractive set $\mathcal{W}\subset \mathcal{M}$ 
	such that $\{x_t\}_{t\ge 1}\subset \mathcal{W}$ and $\{\{\{x_{t,k}^j\}_{j=1}^N\}_{k=0}^{K-1}\}_{t\ge1} \subset \mathcal{W}$. The minimizers of Problem~(\ref{sec1:eq01}) are inside $\mathcal{W}$.
	Additionally, there exists a
	compact and connected set $\mathcal{X} \subset \mathcal{M}$ such that $\mathcal{W}\subset \mathcal{X}$.  
\end{assumption}

\begin{assumption} \label{sec3:ass3}
	The cost function $F$ is continuously differentiable in $\mathcal{W}$, the local cost functions $f_1,\dots,f_N$ are continuously differentiable in $\mathcal{W}$, and their components $f_{j}(\cdot,\xi)$ for $\xi\sim \mathcal{D}_j$ with $j\in[N]$ are continuously differentiable in $\mathcal{W}$.
\end{assumption}

\begin{assumption} \label{sec3:ass4}
	The local objective functions $f_j$, $j\in[N]$, are $L_f$-Lipschitz continuously differentiable in $\mathcal{W}$ with the retraction $\mathrm{R}$ and the vector transport $\mathcal{T}$ (see Definition~\ref{AppA:def1}), implying that $F$ is also $L_f$-Lipschitz continuously differentiable.
\end{assumption}

\begin{assumption} \label{sec3:ass5}
	 $F$ is $L_g$-retraction smooth over $\mathcal{W}$ with respect to $\mathrm{R}$ (see Definition~\ref{AppA:def2}).\footnote{In general, in the Riemannian setting, a $L$-Lipschitz continuously differentiable function $f: \mathcal{M} \rightarrow \mathbb{R}$ is not necessarily $L$-retraction smooth, which is different from the Euclidean setting.} 
\end{assumption} 

\begin{assumption} \label{sec3:ass6}
	For any parameter $x\in \mathcal{M}$, the Riemannian stochastic gradient $\mathrm{grad}f_j(x;\xi^j)$ is an unbiased estimator of the gradient $\mathrm{grad}f_j(x)$, i.e., $\mathbb{E}_{\xi^j}[\mathrm{grad}f_j(x;\xi^j)]=\mathrm{grad}f_j(x)$, $\forall j\in[N]$. 
\end{assumption}

\begin{assumption} \label{sec3:ass7}
	For any fixed parameter $x\in \mathcal{M}$, there exists a positive constant $\sigma_L$ such that for all $j\in[N]$, it holds that
	$ \mathbb{E}[\| \frac{1}{B}\sum_{b\in \mathcal{B}^j}\mathrm{grad}f_j(x;\xi^j_b) - \mathrm{grad}f_j(x) \|^2] \le \frac{\sigma_L^2}{B}$ with $|\mathcal{B}^j|=B$. 
\end{assumption}

The method estimating the probabilities is discussed in Section~\ref{sec3:subsec4}. Now we just make an assumption requiring that the approximate probability $q_t^i$ in each round of communication is not far away from the true probability $p_i$, formally stated in Assumption~\ref{sec3:ass8}. 

\begin{assumption} \label{sec3:ass8}
There exist constants $q_{\min},q_{\max}\in (0,1]$ and $G\ge 0$ independent of $t\ge 1$ and $i\in[N]$, such that the approximate probabilities $q_t^i$'s satisfy
$\left|\frac{1}{q_t^i} - \frac{1}{p_i}\right|  \le \sqrt{G\alpha_t}$, and $q_{\min} \le q_t^i\le q_{\max}, \forall t\ge 1, i\in[N]$, 
where $\alpha_t$ is the local step size in the $t$-th round of communication. 
\end{assumption}
Note that the constant $G$ controls the accuracy of the approximate probabilities and when the true probabilities are available to the server, $G$ can take exactly zero. In Section~\ref{sec3:subsec4}, we discuss the reasonability of Assumption~\ref{sec3:ass8}.

\begin{remark} \label{remark:1}
	In~\cite{WJ24}, the authors imposed the following bound on the approximate probabilities:
$
	\sum_{i=1}^N p_i^2\left(\frac{1}{q_t^i} - \frac{1}{p_i}\right)^2 \le \frac{N}{81}
$. This bound essentially requires that $|\frac{1}{q_t^i}-\frac{1}{p_i}|$ is less than some constant, which is consistent with Assumption~\ref{sec3:ass8} in fixed step size cases. Note that this assumption is considered in~\cite{WJ24} only for fixed step size cases, but Assumption~\ref{sec3:ass8} considers another situation where the bound varies over time $t$ when decaying step sizes are used.
\end{remark}

\subsection{Convergence properties}

In this section, we establish the convergence properties of the proposed RFedAGS.
\begin{theorem} \label{sec3:th1}
Let Assumptions~\ref{sec3:ass1}-\ref{sec3:ass8} hold. Suppose Algorithm~\ref{alg:RFedAGS} is run with a fixed global step size $\varpi>0$ and a decaying local step size sequence $\{\alpha_t\}$ satisfying Conditions
\begin{equation} \label{sec3:th1:1}
	\sum_{t=1}^{\infty}\alpha_t = \infty, \sum_{t=1}^{\infty}\alpha_t^2 < \infty.
\end{equation}	
Then, $\liminf_{t\rightarrow \infty}\mathbb{E}[\|\mathrm{grad}F(x_t)\|^2] = 0. $
\end{theorem}

In what follows, we further characterize the nonasymptotic convergence. 
\begin{theorem} \label{sec3:th2}
Under the same conditions as Theorem~\ref{sec3:th1} except that the local step size sequence $\{\alpha_t\}$ is determined by $\alpha_t=\frac{\alpha_0}{(\beta+t)^p}$ with constants $\alpha_0,\beta>0$ and $p\in(1/2,1]$ satisfying $\varpi\alpha_1KL_g \le 1$, the weighted average norm of the squared gradients satisfy, with $A_T=\sum_{t=1}^T\alpha_t$,   
\begin{align*}
	\frac{1}{A_T}\sum_{t=1}^T\alpha_t \mathbb{E}[\|\mathrm{grad}F(x_t)\|^2] \le \begin{cases}
		\mathcal{O}(\frac{1}{\ln(\beta+T)}) & p=1, \\
		\mathcal{O}(\frac{1}{(\beta+T)^{1-p}}) & p\in(1/2,1).
	\end{cases} 
\end{align*}
\end{theorem}

\begin{remark}
	In particular, if the full agent participate in any round of communication and agents use the full local gradient in local update, i.e., $G=0$ and $\sigma_L=0$, one can relax the step sizes to $\alpha_t=\frac{\alpha_0}{(\beta+t)^p}$ where $p=1/3+a$ with $a\in(0,2/3)$. In this case, for large $T$, the upper bound can be improved to $\frac{1}{A_T}\sum_{t=1}^T\alpha_t \mathbb{E}[\|\mathrm{grad}F(x_t)\|^2] \le \mathcal{O}(\frac{1}{(\beta+T)^{{2}/{3}-a}})$ (see Appendix~\ref{app:D:3}). 
\end{remark}

\begin{theorem} \label{sec3:th3}
	Under Assumptions~\ref{sec3:ass1}-\ref{sec3:ass8}, suppose that $F$ satisfies RPL condition, i.e., there exists a constant $\mu>0$, such that for all $x\in \mathcal{W}$, it holds that 
$ F(x)  - F(x^*) \le \frac{1}{2\mu}\|\mathrm{grad}F(x)\|^2$. 
If we run Algorithm~\ref{alg:RFedAGS} with the batch size $B_t\in[B_{\mathrm{low}},B_{\mathrm{up}}]$ and the step sizes satisfying 
\begin{align*} 
 \alpha_t = \frac{\beta}{\gamma + t} \hbox{ for some $\gamma >0$ and $ \beta > \frac{1}{\mu \varpi K}$ such that $\alpha_1\varpi K L_g\le 1$}, 
\end{align*}
then the iterates $\{x_t\}_{t\ge 1}$ satisfy 
\begin{align} \label{sec3:th3:3}
	&\mathbb{E}[F(x_t)] - F(x^*) \le \frac{\nu}{\gamma + t}, \;\hbox{ and }\; \mathbb{E}[\|\mathrm{grad}F(x_t)\|^2] \le \frac{2L_g\nu}{\gamma+t},
\end{align}
where $\nu = \max\left\{ \frac{\varpi K \beta^2Q(K,B_{\mathrm{low}},\alpha_1,\varpi)}{\beta\mu \varpi K - 1}, (\gamma + 1)\Theta(x_1) \right\}$, $\Theta(x_1)=F(x_1) - F(x^*)$, and $Q(K,B_{t},\alpha_t,\varpi) = (2K-1)(K-1)L_f^2\delta^2_1P^2(J^2+\alpha_t^2P^2H^2)\alpha_t/{6} + GP^2\delta_2^2 + \Upsilon^2P^{2}\delta_{4}^{2}KL_g\varpi +\frac{L_g\delta^2_3\sigma_L^2\Upsilon^2\varpi}{2B_t}$ with $P,J$, and $H$ being three constants depended on the problem, manifold and the retraction and $\delta_1,\delta_2,\delta_3,\delta_4$ being constants depended on $q_t^i,p_i$,$\forall i\in[N]$. That is, Algorithm~\ref{alg:RFedAGS} converges sublinearly to the minimizer in expectation.  
\end{theorem}

Theorems~\ref{sec3:th1}-\ref{sec3:th3} provide the global convergence of Algorithm~\ref{alg:RFedAGS}. Under mild assumptions, the first theorem states that Algorithm~\ref{alg:RFedAGS} has global convergence in expectation for general objectives while the other  theorems further provide the convergence rate of Algorithm~\ref{alg:RFedAGS}. However, all of these theorems require the usage of the decaying step sizes. When decaying step sizes are used, a large number of iteration are required for Algorithm~\ref{alg:RFedAGS} to converge. A compromise is to use a fixed step size of moderate size, the advantage of which is that the convergence rate is sublinear (even linear) while the disadvantage of which is that it may not converge to the minimizers but to an $\epsilon$-stationary point/solution (see Definition~\ref{sec3:def1}); see Theorems~\ref{sec3:th4} and~\ref{sec3:th5}.

\begin{theorem}  \label{sec3:th4}
Suppose that Assumptions~\ref{sec3:ass1}-\ref{sec3:ass8} hold. We run Algorithm~\ref{alg:RFedAGS} with a fixed global step size $\varpi$, a fixed batch size $B$, and a fixed number of local updates $K$.
\begin{enumerate}[1.]
\item \label{sec3:th4:item1} If the fixed step sizes $\alpha $ and $\varpi$ satisfy $\alpha\varpi K L_g\le 1$, then 
\begin{align} \label{sec3:th4:1}
\frac{1}{T}\sum_{t=1}^T \mathbb{E}[\|\mathrm{grad}F(x_t)\|^2]
	& \le \frac{2 \Theta(x_1)}{\varpi \alpha K T} + {2\alpha Q(K,B,\alpha,\varpi)}.
\end{align}

\item \label{sec3:th4:item3} If the true probabilities are known, meaning $G=0$, and 
 one takes local and global step sizes $\alpha$ and $\varpi$ such that $\alpha\varpi = \sqrt{\frac{\Theta(x_1)B}{(\delta_3^2\sigma_L^2+2P^2\delta_4^2KB)\Upsilon^2L_gKT}}$ with $T$ satisfying 
$T \ge \max \bigg\{ \frac{KL_g\Theta(x_1) B}{(\delta_3^2\sigma_L^2+2P^2\delta_4^2KB)\Upsilon^2},$ 
$   \frac{\Theta(x_1)(2K-1)^2(K-1)^2L_f^4\delta_1^4P^4(L_g^2\varpi^2J^2K^2+P^2H^2)^2B^3}{ 9(\delta_3^2\sigma_L^2+2P^2\delta_4^2KB)^3\Upsilon^6L_g^7\varpi^6K^5} \bigg\}, $
then 
\[
\frac{1}{T}\sum_{t=1}^T \mathbb{E}[\|\mathrm{grad}F(x_t)\|^2] \le 4\Upsilon\sqrt{ L_g\Theta(x_1)\left( \frac{\delta_3^2\sigma_L^2}{KTB} + \frac{2P^2\delta_4^2}{T} \right)}.
\]
\end{enumerate}
\end{theorem}

\begin{remark} \label{sec3:remark3}
	If the probabilities $p_i$ are known, i.e., $q_t^i=p_i$, and $p_{\min}=\min_i\{p_i\}$ is not too small and not fairly far away from $p_{\max}=\max_i\{p_i\}$, such that the constants $\delta_1^2,\delta_2^2,\delta_3^2,\delta_4^2$ are 
\begin{align*}	
\delta_1^2 = \frac{1}{N}\sum_{j=1}^N \left(\frac{p_j}{q_t^j}\right)=1, \delta_2^2 = \sum_{j=1}^N\frac{p_j^2}{N} \le 1, \delta_3^2 = \frac{1}{N^2} \sum_{j=1}^N\frac{p_j}{(q_t^j)^2}\le \frac{1}{Np_{\min}},\; \delta_4^2 = \sum_{j=1}^N\frac{(1-p_j)}{N^2 p_j}\le \frac{1}{Np_{\min}},
\end{align*}
then, Item~\ref{sec3:th4:item3} gives the upper bound as $\mathcal{O}(\frac{1}{\sqrt{p_{\min}NKTB}}) + \mathcal{O}(\frac{1}{\sqrt{p_{\min}NT}})$. \\
In particular, if the probabilities are the same across agents, e.g., $p_i=\frac{S}{N}$ with $S\le N$, then $\delta^2_3=\frac{1}{S}$, and $\delta^2_4=\frac{N-S}{NS}\le\frac{1}{S}$. It follows that Item~\ref{sec3:th4:item3} gives the upper bounds as $\mathcal{O}(\frac{1}{\sqrt{SKTB}}) + \mathcal{O}(\frac{1}{\sqrt{ST}})$. The bound of $\mathcal{O}(\frac{1}{\sqrt{ST}})$ matches with the existing result for FedAvg given in~\cite[Theorem~1]{KKMRSS20} and improves by $\frac{1}{\sqrt{K}}$ over that given in~\cite[Corollary~2]{YFL21}. 
\end{remark}

\begin{theorem} \label{sec3:th5}
	Under Assumptions~\ref{sec3:ass1}-\ref{sec3:ass8}, suppose that $F$ satisfies RPL condition with a constant $\mu>0$. If we run Algorithm~\ref{alg:RFedAGS} with batch size $B_{t}\in[B_{\mathrm{low}},B_{\mathrm{up}}]$ and step sizes $\alpha_{t}=\alpha$ and $\varpi$ satisfying $\alpha \varpi K \le \min\{1/L_g,1/\mu\}$,  
then the resulting iterates $\{x_t\}_{t = 1}^T$ satisfy
\begin{align} 
	\mathbb{E}[F(x_{T})] - F(x^*) &\le (1-\mu\varpi K\alpha)^{T-1}\Theta(x_1)  + \frac{\alpha }{\mu } Q(K,B_{\mathrm{low}},\alpha,\varpi) \xrightarrow[]{T\rightarrow \infty} \frac{\alpha }{\mu} Q(K,B_{\mathrm{low}},\alpha,\varpi). \label{sec3:th5:2}
\end{align}
\end{theorem}

From Theorem~\ref{sec3:th5}, if one lets $T \rightarrow \infty$, then the expected optimality gaps $\{\mathbb{E}[F(x_T)] - F(x^*)\}$ are bounded from above by $\frac{\alpha}{\mu}Q(K,B_{\mathrm{low}},\alpha,\varpi)$, which implies that any accumulation point of the sequence of iterates $\{x_t\}$ generated by Algorithm~\ref{alg:RFedAGS} is a $\epsilon$-solution if taking $\alpha\le \frac{\epsilon\mu}{Q(K,B_{\mathrm{low}},\alpha, \varpi)}$. Smaller $\alpha$ means smaller upper bound as well as slower convergence speed. 

\begin{remark} \label{sec3:rem3}
Similar to the Euclidean setting, the RPL property is weaker than the strong retraction-convexity. In fact, if the objective $f:\mathcal{M} \rightarrow \mathbb{R}$ is $\mu$-strongly retraction-convex, then it  also satisfies the RPL property with parameter $\mu$ (proved in Appendix~\ref{Proof:remark3.4}). Therefore, Theorems~\ref{sec3:th3} and~\ref{sec3:th5} also hold under strong retraction-convexity.
\end{remark}

\subsection{Estimating the participation probabilities} \label{sec3:subsec4}

At the $t$-th round of communication, let $\mathcal{S}_t$ denote the set of participating agents. Then, under Assumption~\ref{sec2:ass1}, $\mathbb{I}_{\mathcal{S}_t}(i)$ follows the Bernoulli distribution, i.e., $\mathbb{I}_{\mathcal{S}_t}(i)\sim \mathrm{Bernoulli}(p_i)$. At each round of communication, for each agent $i$, whether it participates in communication can be regarded as a Bernoulli trial. Therefore, by Bernoulli's Large Number Theorem, the frequency of  agent $i$ participating in communication goes closely to the true probability $p_i$ as the growth of $t$, the number of communications. Formally, let $\mathfrak{q}_t^i=\sum_{\tau=1}^{t}\mathbb{I}_{\mathcal{S}_{\tau}}(i)$, and compute the approximate probability by $q_t^j = \mathfrak{q}_t^j/t$. Then we have $\lim_{t\rightarrow \infty}\mathbb{P}\{|q_t^i-p_i|\le \epsilon\}=1$ for any small $\epsilon>0$. This justifies the use of frequencies to estimate probabilities.
The next theorem shows that Assumption~\ref{sec3:ass8} holds with high probability when the step size takes the form of $\alpha_t=\mathcal{O}(t^{-a})$ with $a\in (1/2,1]\cup\{0\}$. 
\begin{theorem}[Proved in Appendix~\ref{Proof:th6}] \label{sec3:th6}
	Under Assumption~\ref{sec2:ass1}, for each agent $i$, we have
	\begin{align} \label{sec3:eq01}
	\mathbb{P}\left\{\left|\frac{1}{q_t^i}-\frac{1}{p_i}\right|\le \mathcal{G}t^{-\frac{a}{2}}\right\} \ge  1 -\min\left\{2e^{ - \frac{tp_i^2}{2}} ,\frac{4(1-p_i)}{tp_i} \right\}- \min\left\{ 2 e^{-\frac{\mathcal{G}^2 p_i^4}{2}t^{1-a}}, \frac{4(1-p_i)}{\mathcal{G}^2p_i^3 t^{1-a}} \right\}, 
	\end{align}
	where $q_t^i=\sum_{\tau=1}^t \mathbb{I}_{\mathcal{S}_{\tau}}(i)/t$, and $\mathcal{G}\ge0$ and $a\ge0$ are constants.
\end{theorem}	 
In practice, taking $a=0$ leads to fixed step size cases or $a\in(1/2,1]$ to decaying step size cases. Therefore, it follows from Theorem~\ref{sec3:th6} that Assumption~\ref{sec3:ass8} holds with probability not less than $1 -\min\{2e^{ - \frac{tp_i^2}{2}} ,\frac{4(1-p_i)}{tp_i} \}- \min\{ 2 e^{-\frac{\mathcal{G}^2 p_i^4}{2}t^{1-a}}, \frac{4(1-p_i)}{\mathcal{G}^2p_i^3 t^{1-a}} \}$ with a proper constant $\mathcal{G}\ge 0$. Large enough $t$ and properly chosen $\mathcal{G}$ make the probability high.

\section{Experiments} \label{sec4}

Here we conduct numerical experiments on principal component analysis (PCA) over the Stiefel manifold, hyperbolic structured prediction (HSP) over the hyperbolic manifold, and the Fr\'echet mean computation (FMC) over the SPD manifold such that we can compare RFedAGS with existing RFL algorithms, including RFedAvg~\cite{LM23}, RFedSVRG~\cite{LM23}, and RFedProj~\cite{ZHSJ24} (used in PCA). \textbf{Additionally, we still conduct two experiments on principal eigenvector computation and low-rank matrix completion shown in Appendices~\ref{app:F:1}-\ref{app:F:2}.} The first one tests the comprehensive performance of RFedAGS, while the second compares RFedAGS with some existing centralized algorithms
 showing the comparable availability of RFedAGS with those. \textbf{The experiment settings in this section can be found in Appendix~\ref{app:F:3}}. 

\paragraph{PCA.} The PCA problem has the form of 
\begin{align*}
	\min_{X\in \mathrm{St}(r,d)} F(X):=\frac{1}{N}\sum_{i=1}^Nf_i(X), \quad \hbox{ with } f_i(X) = -\frac{1}{S}\sum_{j=1}^S\mathrm{tr}(X^T(Z_{ij}Z_{ij}^T)X),
\end{align*}
where $\mathrm{St}(r,d)$ is the Stiefel manifold, $Z_{ij}Z_{ij}^T$ is the covariance matrix of local datum $Z_{ij}\in \mathbb{R}^{d\times p}$. 
We generate $\mathcal{D}_i=\{Z_{ij}\}_{j=1}^S$ in two ways: (i) synthetic data by sampling from the Gaussian distribution $\mathcal{N}(0,\frac{i}{N})$ such that $\mathcal{D}_i$ are non-I.I.D; (ii) real-world data from CIFAR10~\footnote{See \href{https://www.cs.toronto.edu/~kriz/cifar.html}{https://www.cs.toronto.edu/~kriz/cifar.html}.} dataset. 
We can observe from Figure~\ref{fig:NumExp:PCA-2} that our proposed RFedAGS outperforms the existing three RFL algorithms under the arbitrary participation setting in terms of accuracy of solutions and consumed time. This justifies the efficacy of the proposed RFedAGS. 

\begin{figure}[H]
\centering
\subfigure[Synthetic data]{
\includegraphics[width=0.35\textwidth]{./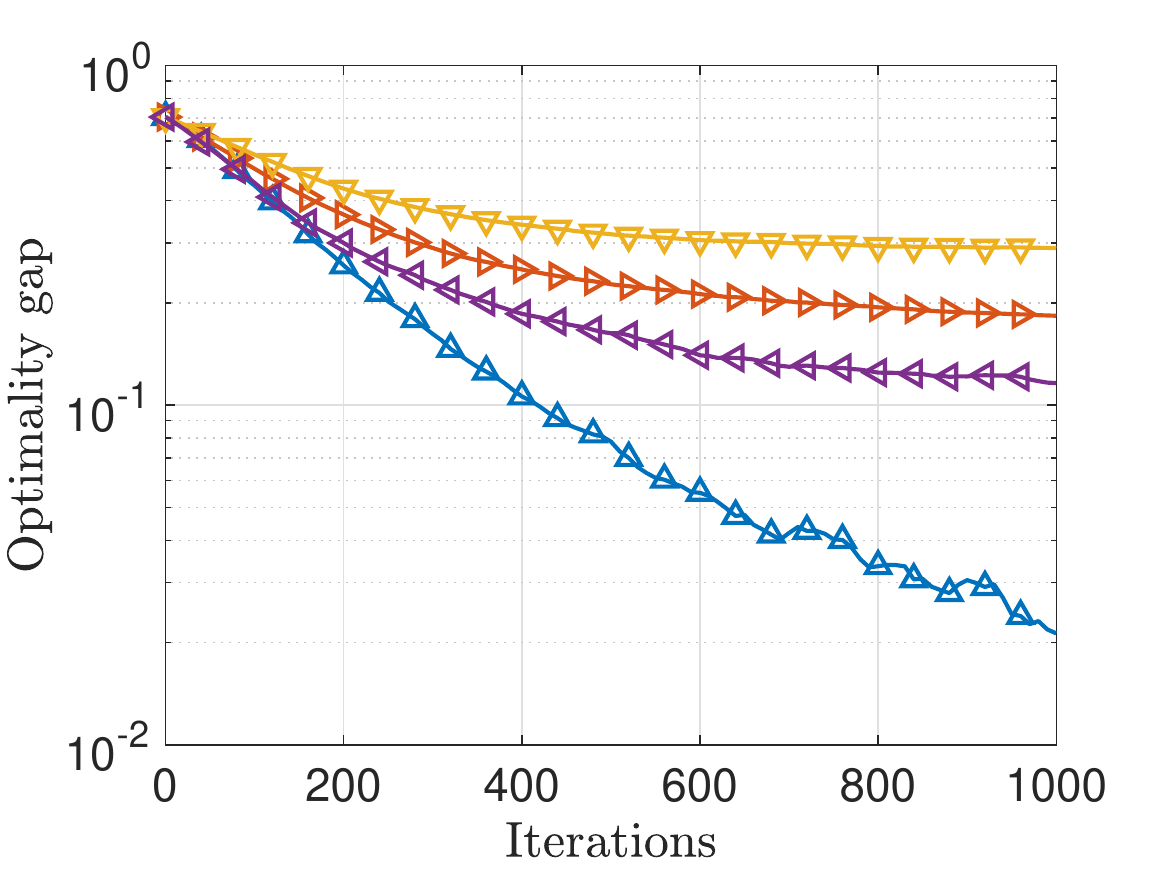}
\label{fig:NumExp:PCA-11}
}
\qquad
\subfigure[Synthetic data]{
\includegraphics[width=0.35\textwidth]{./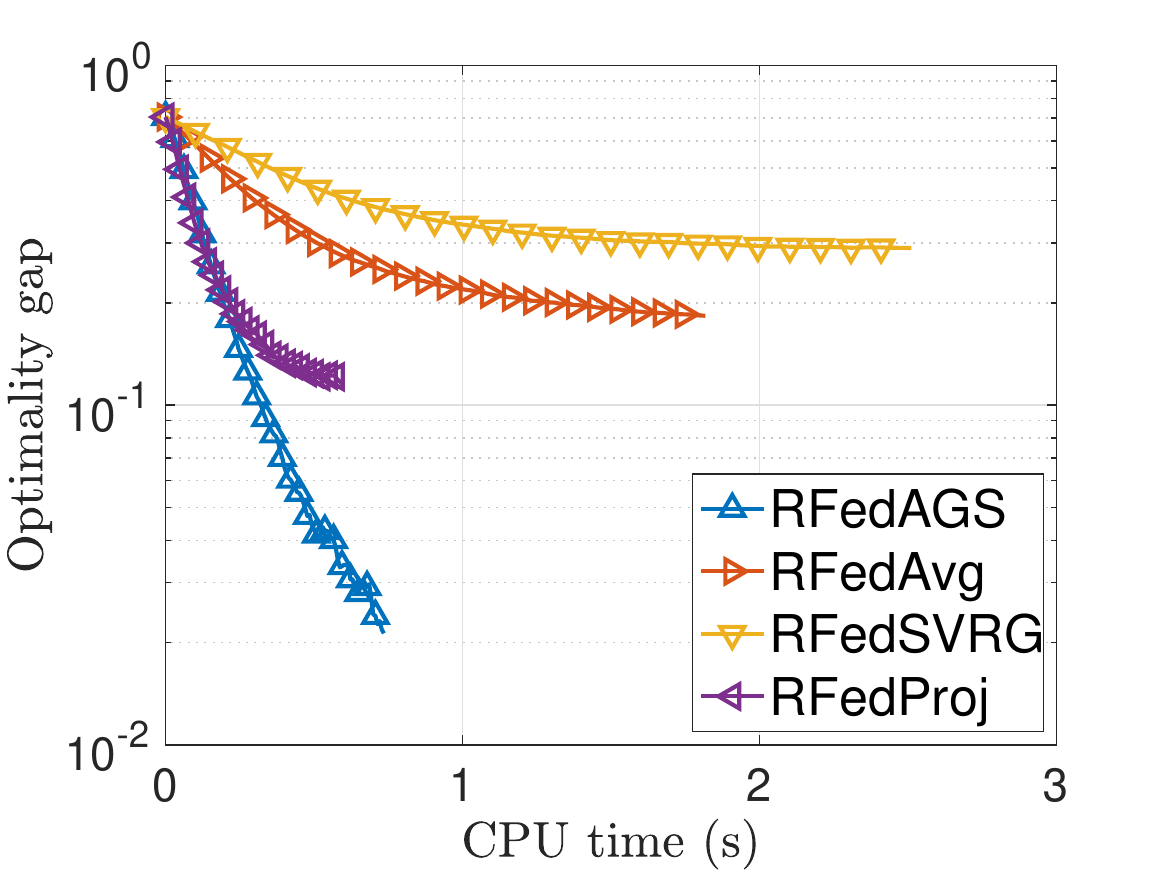}
\label{fig:NumExp:PCA-12}
}

\subfigure[CIFAR10 dataset]{
\includegraphics[width=0.35\textwidth]{./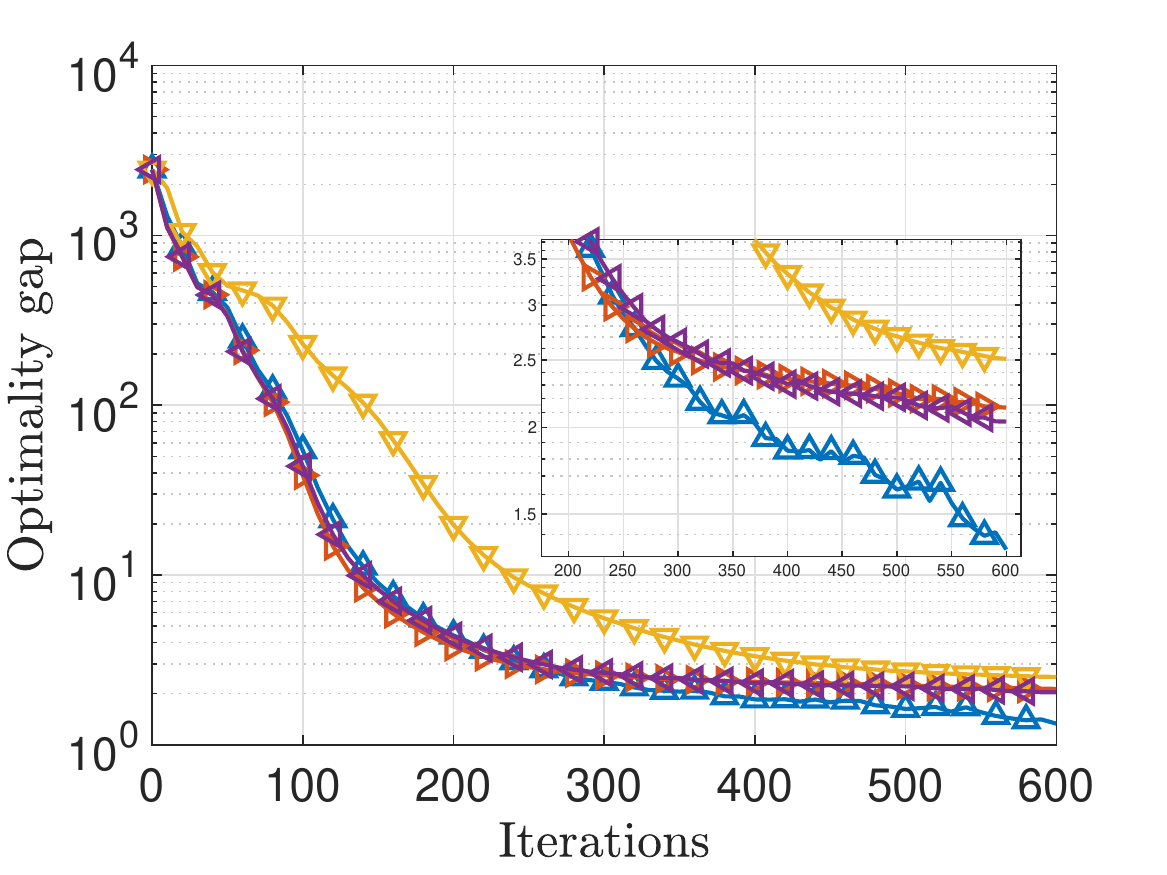}
\label{fig:NumExp:PCA-22}
}
\qquad
\subfigure[CIFAR10 dataset]{
\includegraphics[width=0.35\textwidth]{./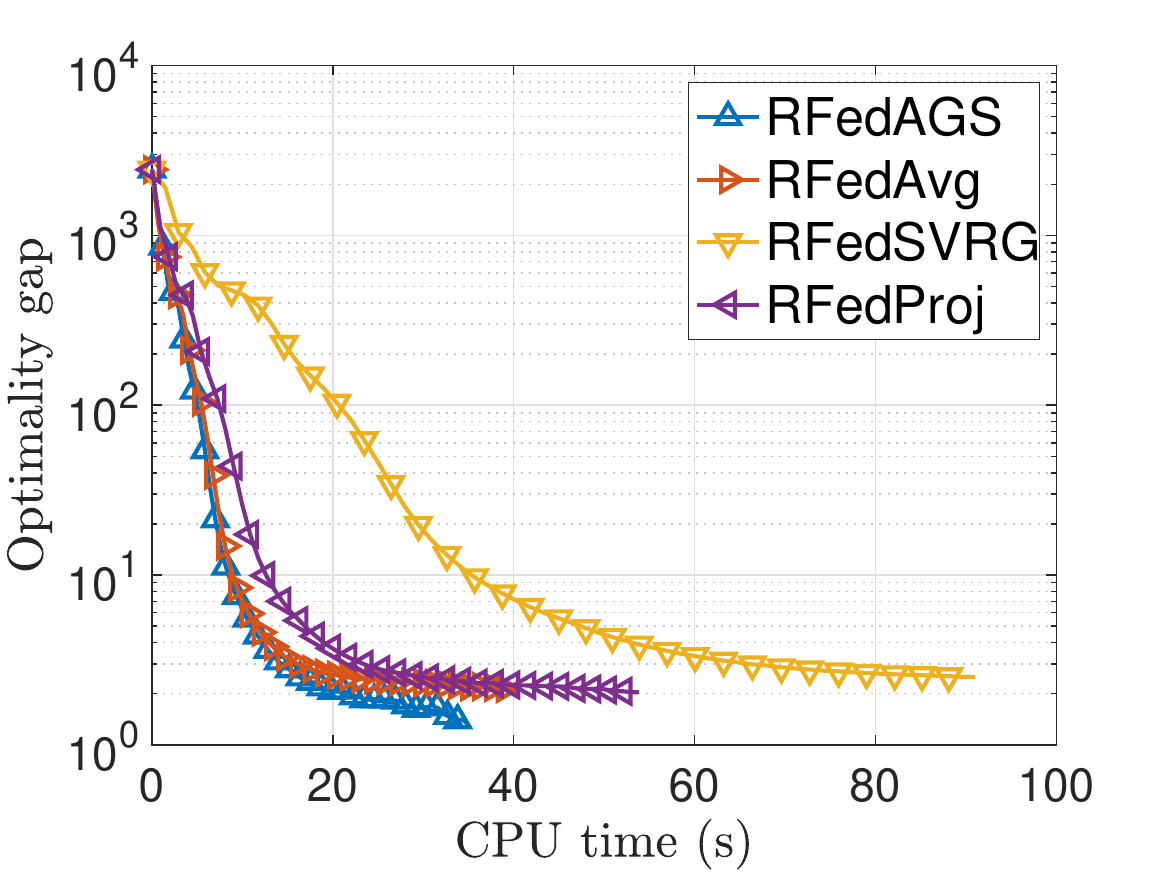}
\label{fig:NumExp:PCA-23}
}
\caption{PCA: RFedAGS consistently performs better than the competing methods across both synthetic and real datasets.}
\label{fig:NumExp:PCA-2}
\end{figure}

It should be noted that the tools used in our RFedAGS are fairly general (as stated in Assumption~\ref{sec3:ass1}), however RFedAvg and RFedSVRG require more strict tools (the inverse of exponential and parallel transport) and RFedProj requires the orthogonal projector onto the manifold.
These requirements limit the application scope of RFedAvg, RFedSVRG, and RFedProj. For instance, RFedProj can not be used in the HSP and FMC problems below.

\paragraph{HSP.} Given a set of training pairs $\mathcal{D}=\{\mathcal{D}_i\}_{i=1}^N=\{\{(w_{i,j},y_{i,j})\}_{j=1}^S\}_{i=1}^N$, where $w_{i,j}\in \mathrm{R}^r$ is the feature and $y_{i,j}\in \mathcal{H}^d$ is the hyperbolic embedding of the class of $w_{i,j}$. Then for a test sample $w$, the task of HPS is to predict its hyperbolic embeddings by solving the following problem 
\[
	\argmin_{x\in \mathcal{H}^d} F(x):=\frac{1}{N}\sum_{i=1}^Nf_i(x), \hbox{ with } f_i(x) = \frac{1}{S}\sum_{i=1}^S a_{i,j}(\omega) \mathrm{dist}^2(x,y_{i,j})
\]
where the hyperbolic manifold $\mathcal{H}^d$ is characterized via the Lorentz hyperbolic model, and $[a_1(w),\dots, \\ a_N(w)]^T\in \mathbb{R}^{N\times S}$ is a parameter matrix. We use the WordNet~\footnote{See \href{https://wordnet.princeton.edu/}{https://wordnet.princeton.edu/}.} dataset to test the algorithms.
 From the reported Figure~\ref{fig:NumExp:HSP}, we can observe that the proposed RFedAGS outperforms both RFedAGS and RFedSVRG in terms of distance to the true point. Figure~\ref{fig:NumExp:HSP-3} directly demonstrates this advantage of RFedAGS. 

\begin{figure}[ht]
\centering
\subfigure[]{
\includegraphics[width=0.3\textwidth]{./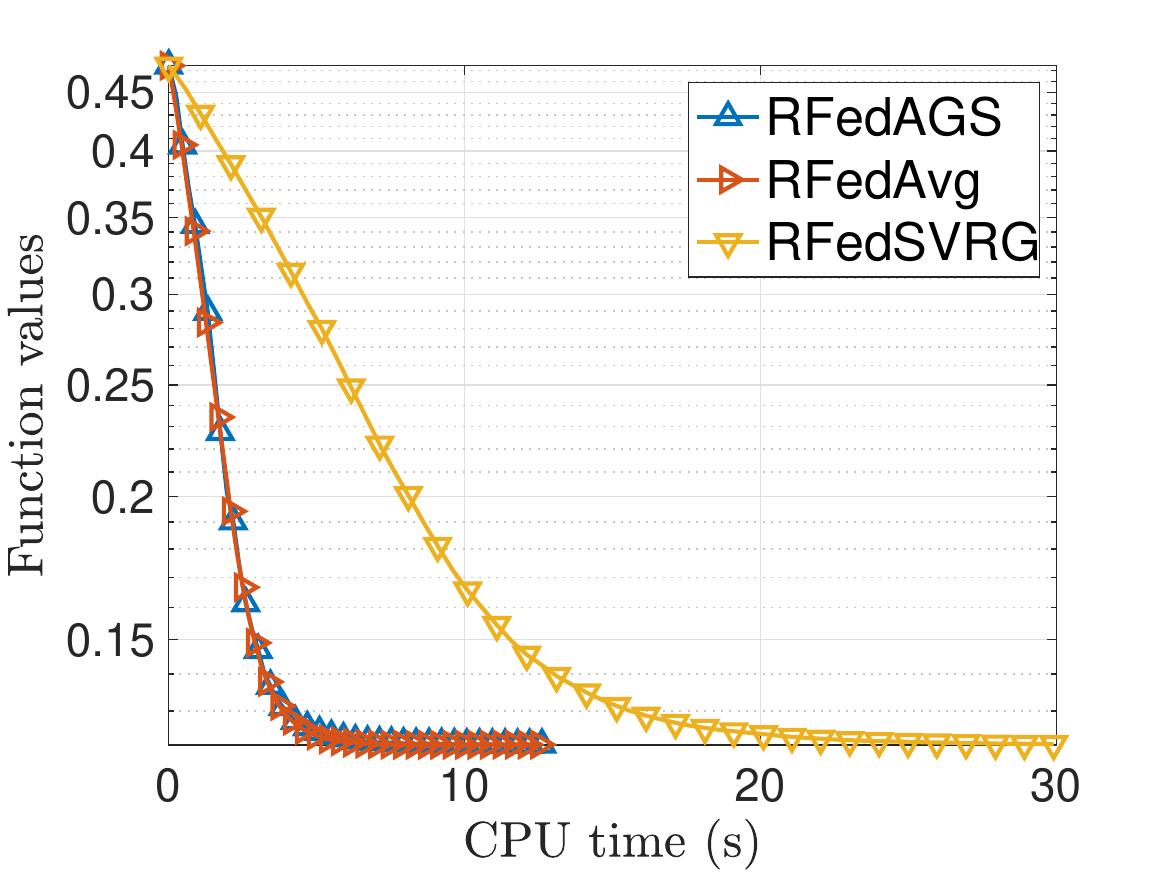}
\label{fig:NumExp:HSP-1}
}
\subfigure[]{
\includegraphics[width=0.3\textwidth]{./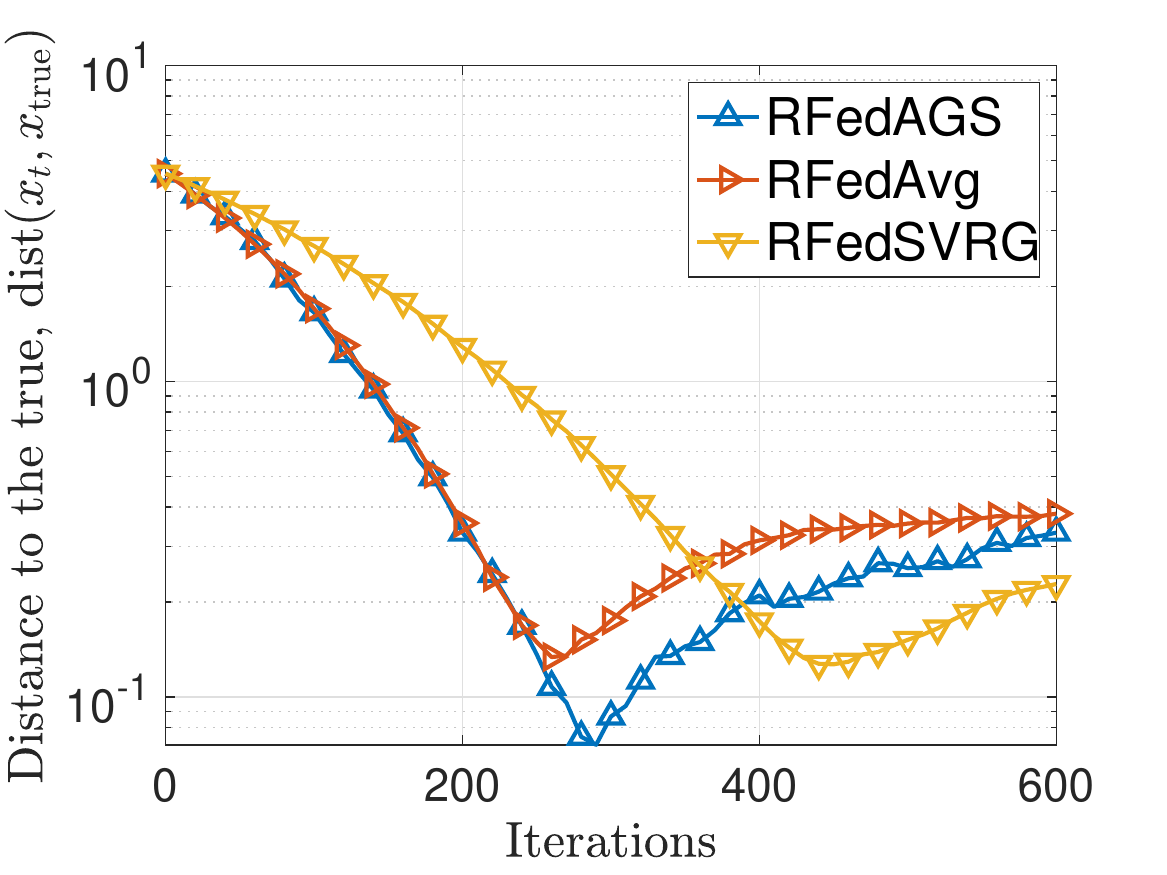}
\label{fig:NumExp:HSP-2}
}
\subfigure[]{
\includegraphics[width=0.3\textwidth]{./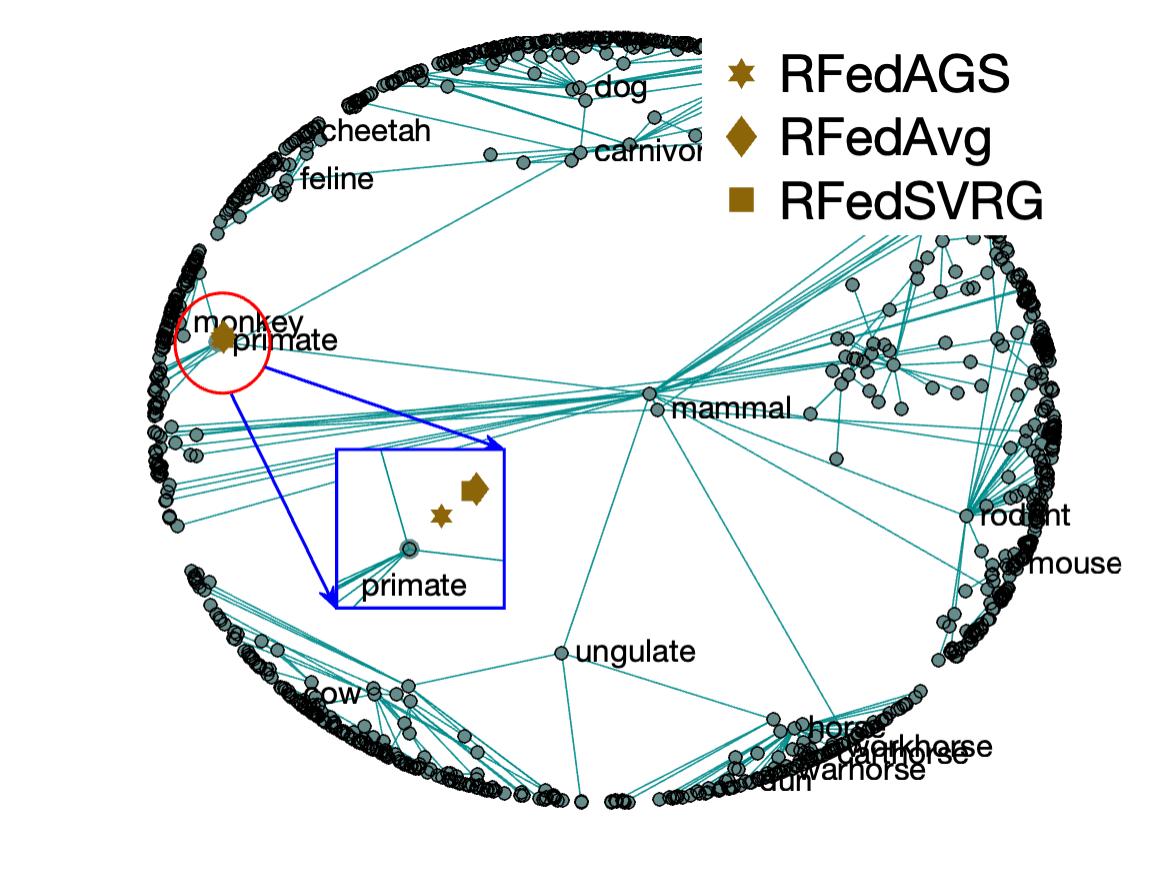}
\label{fig:NumExp:HSP-3}
}
\caption{HSP with WordNet dataset. Here ``primate'' is the test sample (true point).}
\label{fig:NumExp:HSP}
\end{figure}

\paragraph{FMC.} Given a set of SPD matrices, $\mathcal{D}=\{\{X_{i,j}\}_{j=1}^S\}_{i=1}^N$, the FMC of these SPD matrices is the solution to the problem 
\[
\argmin_{X\in \mathcal{S}_{++}^n}F(X):=\frac{1}{N}\sum_{i=1}^Nf_i(X) \hbox{ with } f_i(X)=\frac{1}{S}\sum_{i=1}^S \mathrm{dist}^2(X,X_{i,j}),\] 
where $\mathrm{dist}(\cdot,\cdot)$ is the Riemannian distance.  
We use the PATHMNIST~\footnote{See \href{https://medmnist.com/}{https://medmnist.com/}.} dataset to test the algorithms.
From Figure~\ref{fig:NumExp:FMC}, we still observe that RFedAGS outperforms RFedAvg and RFedSVRG. 

\begin{figure}[ht]
\centering
\subfigure[]{
\includegraphics[width=0.35\textwidth]{./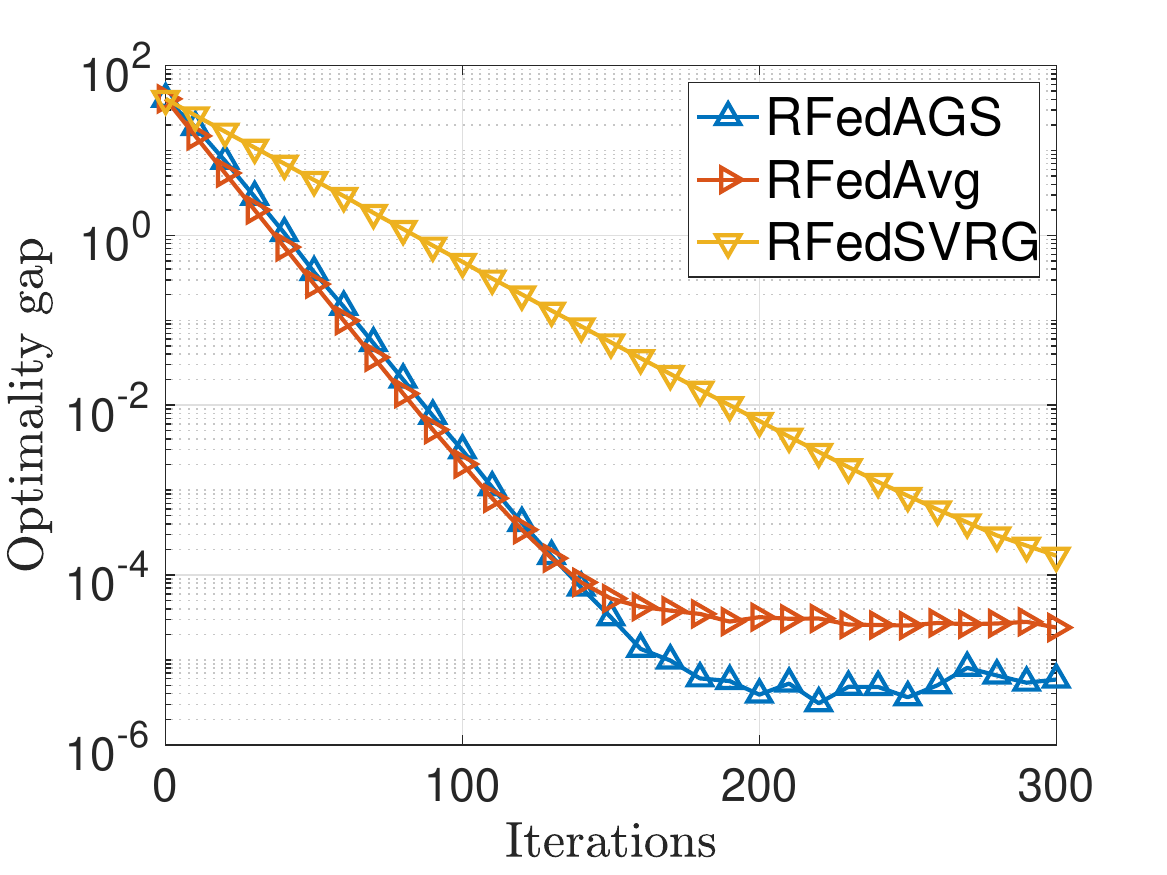}
\label{fig:NumExp:FMC-1}
}
\qquad 
\subfigure[]{
\includegraphics[width=0.35\textwidth]{./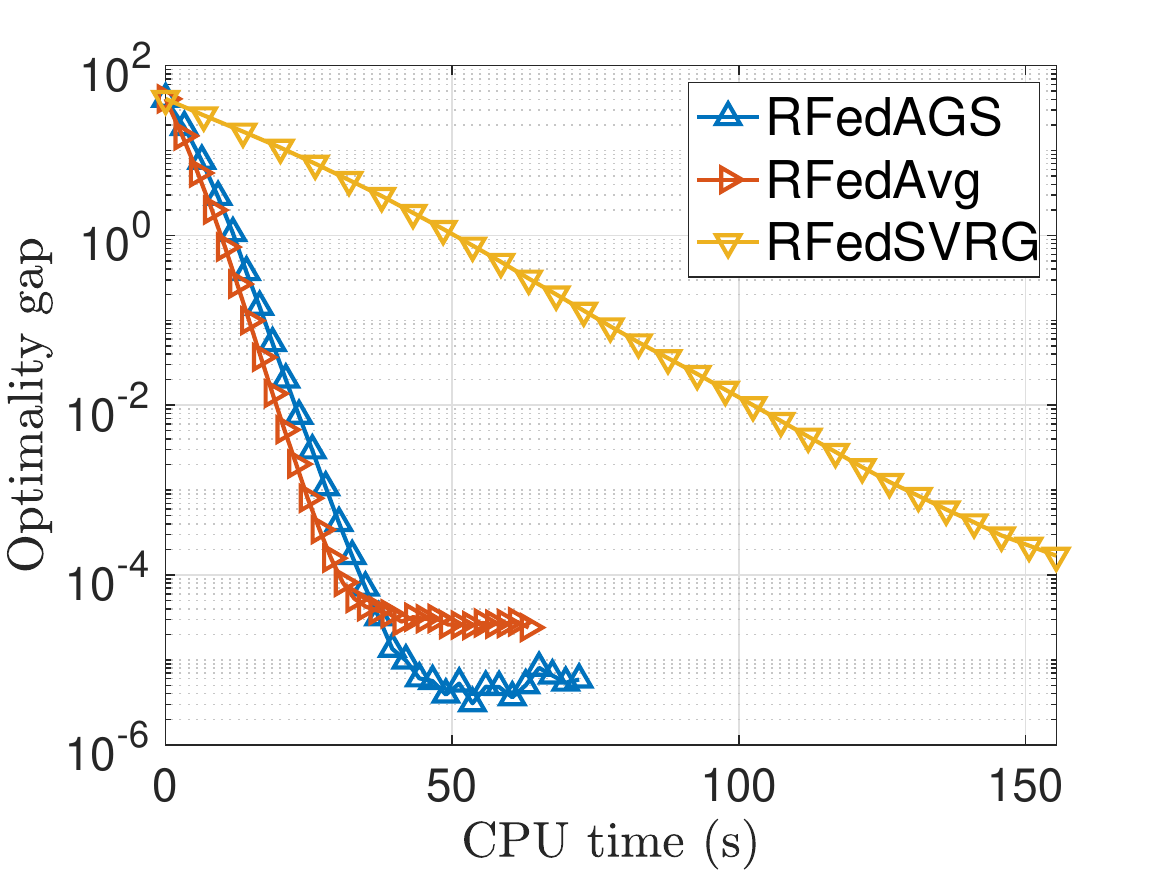}
\label{fig:NumExp:FMC-2}
}
\caption{FMC with PATHMNIST dataset: RFedAGS consistently performs better than RFedAvg and RFedSVRG.}
\label{fig:NumExp:FMC}
\end{figure}

\section{Conclusions}\label{sec5}

In this work, we propose a Riemannian FL algorithm, called RFedAGS, that addresses critical challenges caused by  curvature effects of manifolds, the partial participation, and the heterogeneity data. 
 Unlike the commonly studied random sampling setting, RFedAGS accommodates a more practical and challenging scenario where agents' participation statistics may be unknown.
Theoretically, we prove that the proposed RFedAGS, under decaying step sizes, achieves global convergence and provide sublinear convergence rate. When using a fixed step size, it attains sublinear---or even linear---convergence near a neighborhood of a stationary point/solution.
Numerical experiments we conducted have confirmed the efficacy of RFedAGS and in particular, it outperforms existing RFL algorithms methods on PCA, HSP, and FMC with synthetic and real-world data.  
Current analyses on partial participation rely on time-invariant statistical assumptions. An important direction for future research is to analyze more realistic and complex scenarios, such as settings with time-varying participation probabilities. 


\bibliographystyle{alpha}
\bibliography{paper02.bib}

\clearpage
\begin{center}
	\LARGE {Appendix}
\end{center}
\appendix
\vskip 3ex\hrule\vskip 2ex
{
	\parskip=0.5ex
	\startcontents[sections]
	\printcontents[sections]{l}{1}{\setcounter{tocdepth}{3}}
}
\vskip 3ex\hrule\vskip 5ex
\clearpage

\appendix

{
\section{Experiment settings and additional experiment results} \label{app:F}

In this section, we supplement the numerical experiments conducted to demonstrate the performance of RFedAGS (Algorithm~\ref{alg:RFedAGS}) on non-I.I.D. data setting. 
We focus on empirical minimization of~(\ref{sec1:eq01}). 

The decaying local step size is determined by the following formula 
\begin{align*}
	\alpha_t = \begin{cases}
		\alpha_0 & \hbox{ if } t = 0, \\
		\frac{\alpha_0}{\beta + c_t} & \hbox{ if } t \ge 1,
	\end{cases} \hbox{ with } c_t = \begin{cases}
		0 & \hbox{ if } t=0, \\
		c_{t-1} + 1 & \hbox{ if } \mathrm{mod}(t,\mathrm{d})=0, \\
		c_{t-1} & \hbox{ otherwise},
	\end{cases}
\end{align*}
where $\alpha_0$ and $\beta$ are two positive constants, and $\mathrm{d}$ is a positive constant integer, which results in the step size decaying once after each $\mathrm{d}$ iterations. Optimality gap defined as $F(x_t) - F(x^*)$ with $x^*\in \argmin_{x\in \mathcal{W}}F(x)$ is a commonly-used measure to evaluate the performance of algorithms.
In all experiments, the global step size is set as $1$.
The CPU time consists of the server computation time and the local computation time of active agents, without the  communication time between the server and agents. 
Unless otherwise specified, frequencies are used in Algorithm~\ref{alg:RFedAGS} to estimate the true probabilities. 
All of algorithms involved in our experiments are implemented built on Manopt~\cite{BMAS14}. 
All of the experiments are conducted under Windows 11 and MATLAB R2024b running on a laptop (Intel(R) Core(TM) i7-1165G7 CPU @2.80GHz, 16.0G RAM). 

\subsection{Comprehensive tests} \label{app:F:1}

Consider the principal eigenvector computation (PEC) problem over the sphere manifold, formulated as follows
\begin{align} \label{NumExp:prob1}
	\min_{x\in \mathbb{S}^{n-1}} F(x):= \frac{1}{N}\sum_{i=1}^N f_i(x), \hbox{ with } f_i(x) = - \frac{1}{S} \sum_{j=1}^S x^Tz_{i,j}z_{i,j}^Tx, 
\end{align}
where $\mathbb{S}^{n-1}=\{x\in \mathbb{R}^n: x^Tx=1\}$ is the sphere manifold, $\mathcal{D}_i=\{z_{i,1},\dots,z_{i,S}\}$ is the local samples held by agent $i$. Problem~(\ref{NumExp:prob1}) is in the form of finite sum minimization of~(\ref{sec1:eq01}).

The sphere manifold $\mathbb{S}^{n-1}$ is viewed as a Riemannian embedded submanifold of $\mathbb{R}^n$, that is, the Riemannian metric is induced by the Euclidean metric: $\left<\xi,\eta\right>_x = \xi^T\eta$ for all $\xi,\eta \in \mathrm{T}_{x}\mathbb{S}^{n-1}$. The exponential mapping is chosen as the Retraction and the parallel transport along the geodesic correspondingly is selected as the isometric vector transport. The MNIST dataset~\cite{Deng12}~\footnote{See \href{https://yann.lecun.com/exdb/mnist/}{https://yann.lecun.com/exdb/mnist/}.} consists of 60000 hand-written gray images of size $28\times 28$ each of which is associated with a label taking values from $0$ to $9$. In our experiments, each image is concatenated into a 784-dimensional column vector by column. In addition, to test the effectiveness of the proposed RFedAGS under the  heterogeneity data setting, according to the FL setting, the MNIST dataset is shuffled into different levels of heterogeneity following the way in~\cite{MMRHA17}. Figure~\ref{fig:NumExp:1} demonstrates histograms of the MNIST dataset with three different levels of heterogeneity.

\begin{figure}[ht]
\centering
\subfigure[Non-I.I.D. (heavy)]{
\includegraphics[width=0.30\textwidth]{./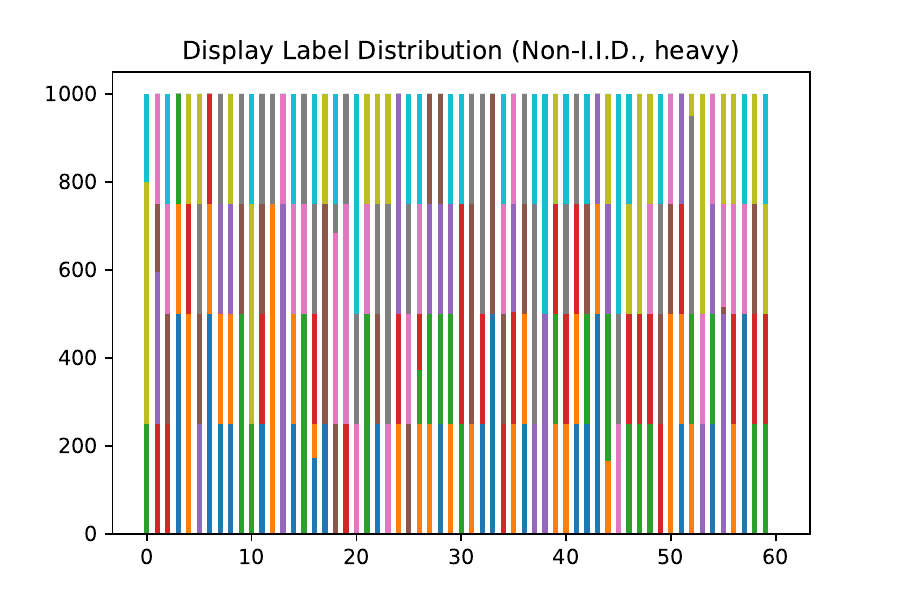}
\label{fig:NumExp:1-1}
}
\subfigure[Non-I.I.D. (slight)]{
\includegraphics[width=0.3\textwidth]{./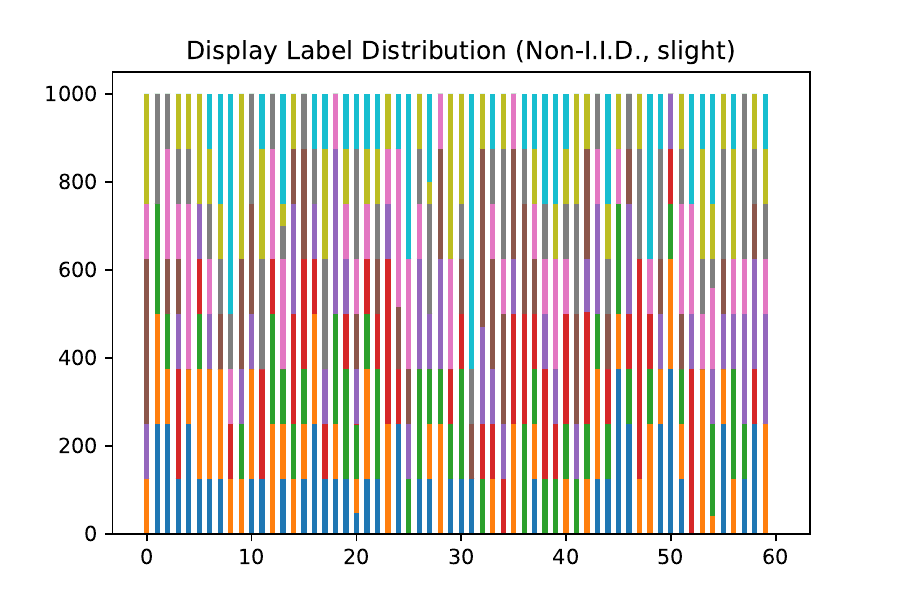}
\label{fig:NumExp:1-2}
}
\subfigure[I.I.D.]{
\includegraphics[width=0.3\textwidth]{./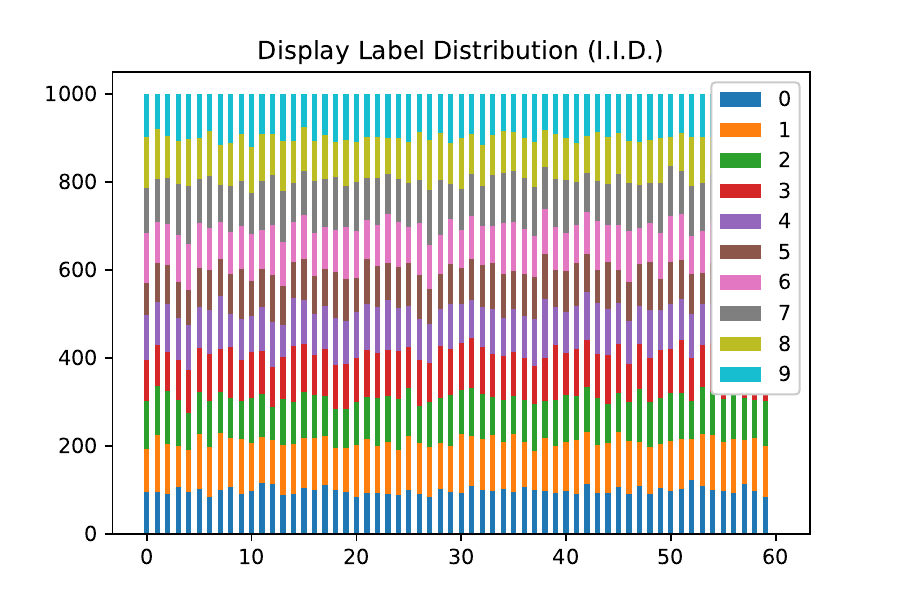}
\label{fig:NumExp:1-3}
}
\caption{Sample distributions across different agents on MNIST dataset. $x$-axis is the ID of each agents and $y$-axis is the number of local samples.}
\label{fig:NumExp:1}
\end{figure} 

\subsubsection{Comparison of two aggregation patterns}

First we demonstrate the importance of the aggregation pattern~(\ref{AGS-AP}). As shown in~(\ref{sec2:eq02}), the aggregation of RFedAGS in Line~\ref{alg:RFedAGS:15} of Algorithm~\ref{alg:RFedAGS} actually is  unbiased in the sense of $\mathbb{E}\left[\sum_{i\in \mathcal{S}_t}\frac{1}{p_iN}\mathrm{grad}f_i(x)\right]=\mathrm{grad}F(x)$. Nevertheless, if the participation probabilities are not considered and the usual aggregation, $x_{t+1}\gets \mathrm{R}_{x_t} \left( - \varpi \sum_{j\in \mathcal{S}_t} \frac{1}{|\mathcal{S}_t|} \zeta_{t,K}^j\right)$, is used, then the output of the algorithm equipped with this aggregation will tend towards a minimizer of another objective function different from the original objective when there exist $i,j\in[N]$ such that $p_i\not=p_j$, which exactly is what Theorem~\ref{sec2:th1} points out. 

\begin{figure}[ht]
\centering
\subfigure[Fixed step size]{
\includegraphics[width=0.4\textwidth]{./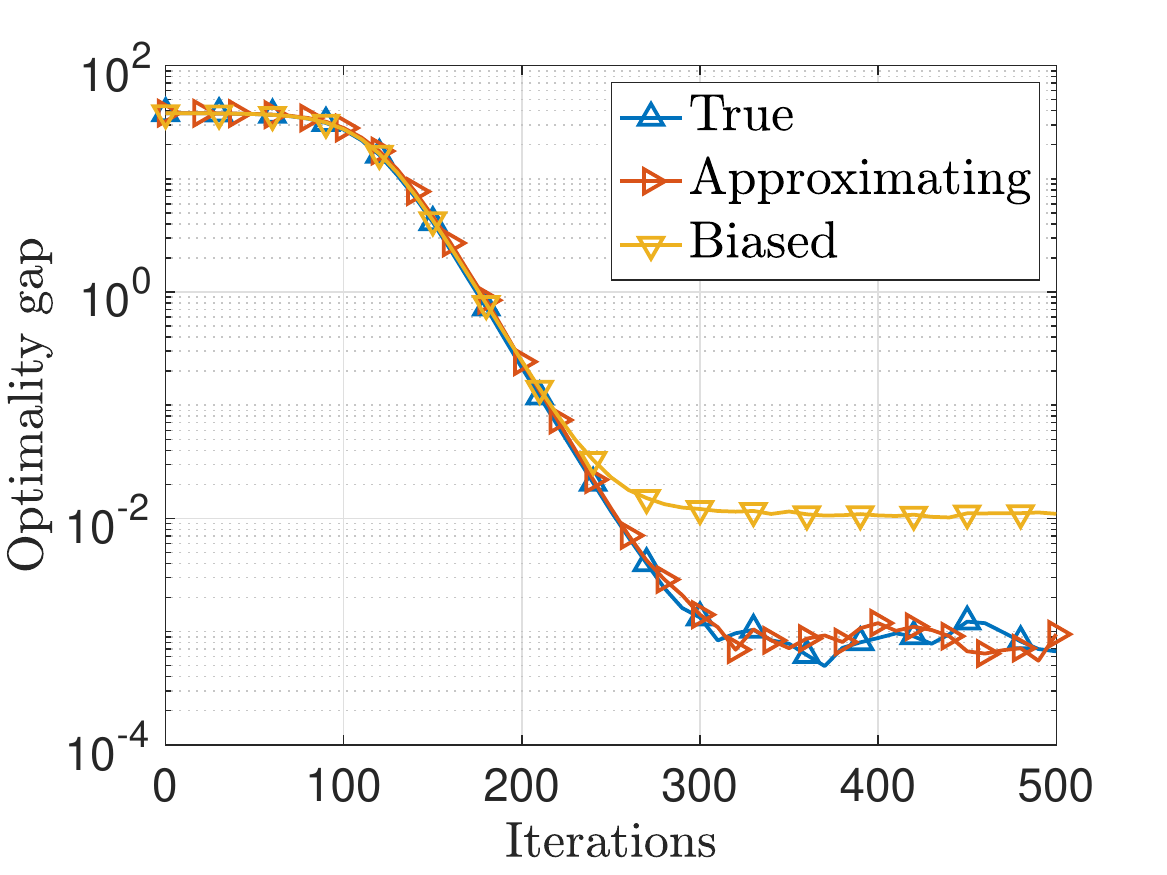}
\label{fig:NumExp:2-1}
}
\subfigure[Decaying step size]{
\includegraphics[width=0.4\textwidth]{./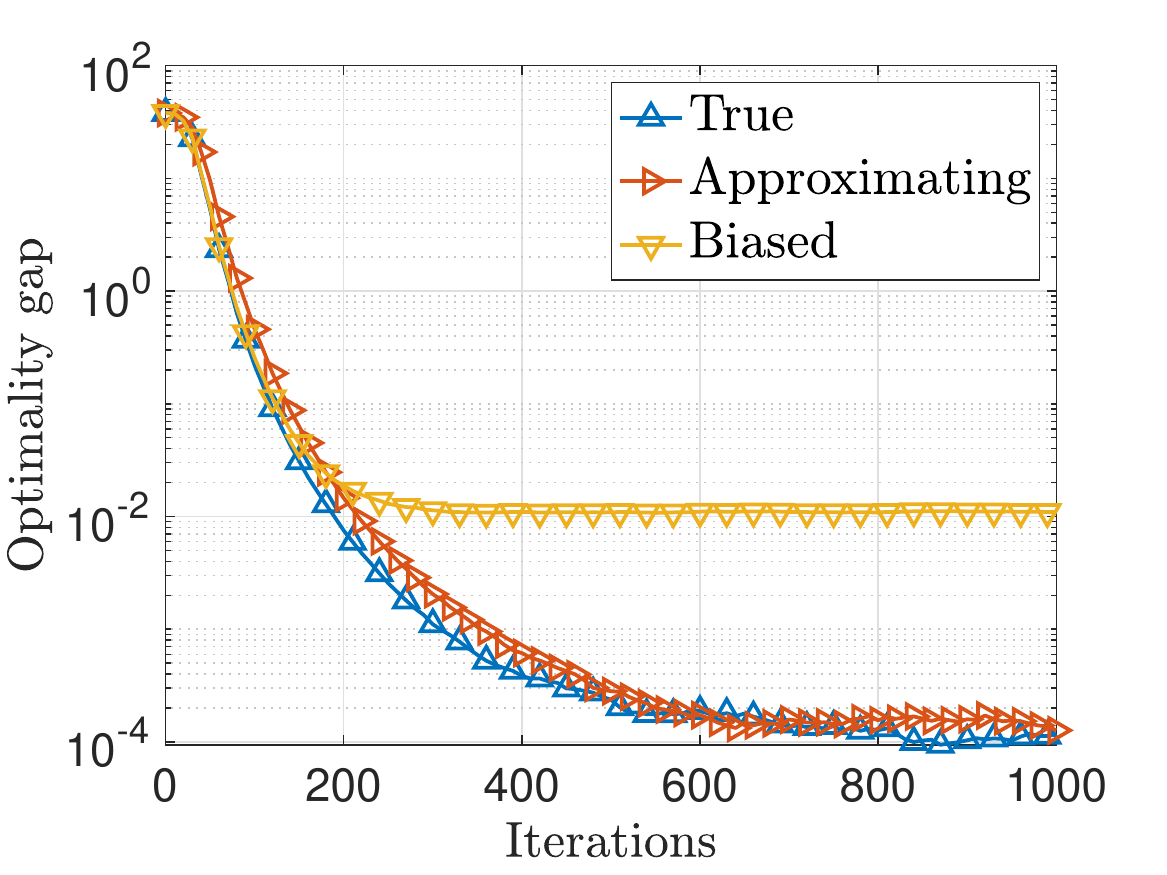}
\label{fig:NumExp:2-2}
}
\caption{PEC with non-I.I.D. (slight) MNIST dataset: comparisons of the two aggregations patterns~(\ref{AGS-RS}) and~(\ref{AGS-AP}).}
\label{fig:NumExp:2}
\end{figure}

Figure~\ref{fig:NumExp:2} reports the experiment results, where the two curves ``True'' and ``Approximating'' adopt the aggregation pattern~(\ref{AGS-AP}), the curve ``Approximating'' uses the frequency to estimate the true probability, and the curve ``Biased'' uses the usual aggregation~(\ref{AGS-RS}). Besides, the participation probabilities $p_i$'s are uniformly and randomly generated (i.e., $p_i$, $i\in[N]$, follows the uniform distribution $\mathrm{U}(0,1)$), the fixed step size is set as $\alpha=8.0\times 10^{-5}$, the parameters for decaying steps sizes are set as $(\alpha_0,\beta,\mathrm{d})=(3.5\times 10^{-4},0.1,20)$, batch size is $B=0.5S$, and the number of local updates is set as $K=5$. 
It is observed from Figure~\ref{fig:NumExp:2} that RFedAGS  equipped with the aggregation pattern~(\ref{AGS-AP}) gives a better solution to Problem~(\ref{NumExp:prob1}) than 
that generated by RFedAGS equipped with the usual aggregation pattern~(\ref{AGS-RS}). The reason lies on that  the usual aggregation pattern~(\ref{AGS-RS}) leads the iterates to the minimizer of $\tilde{F}:=\sum_{i=1}^N\tilde{p}_if_i$ with $\tilde{p}_i=p_i\int_0^1\prod_{j\not=i}^N(1-p_j+p_jt)\mathrm{d}t$, as stated by Theorem~\ref{sec2:th1}. Meanwhile, due to $p_i\not=p_j$ for some $i,j\in[N]$, it follows that there exists no $\chi>0$ such that $\tilde{F}=\chi \cdot F$. Hence, the minimizers of $\tilde{F}$ may be not consistent with those of $F$. 

\begin{figure}[ht]
\centering
\subfigure[Fixed step size]{
\includegraphics[width=0.4\textwidth]{./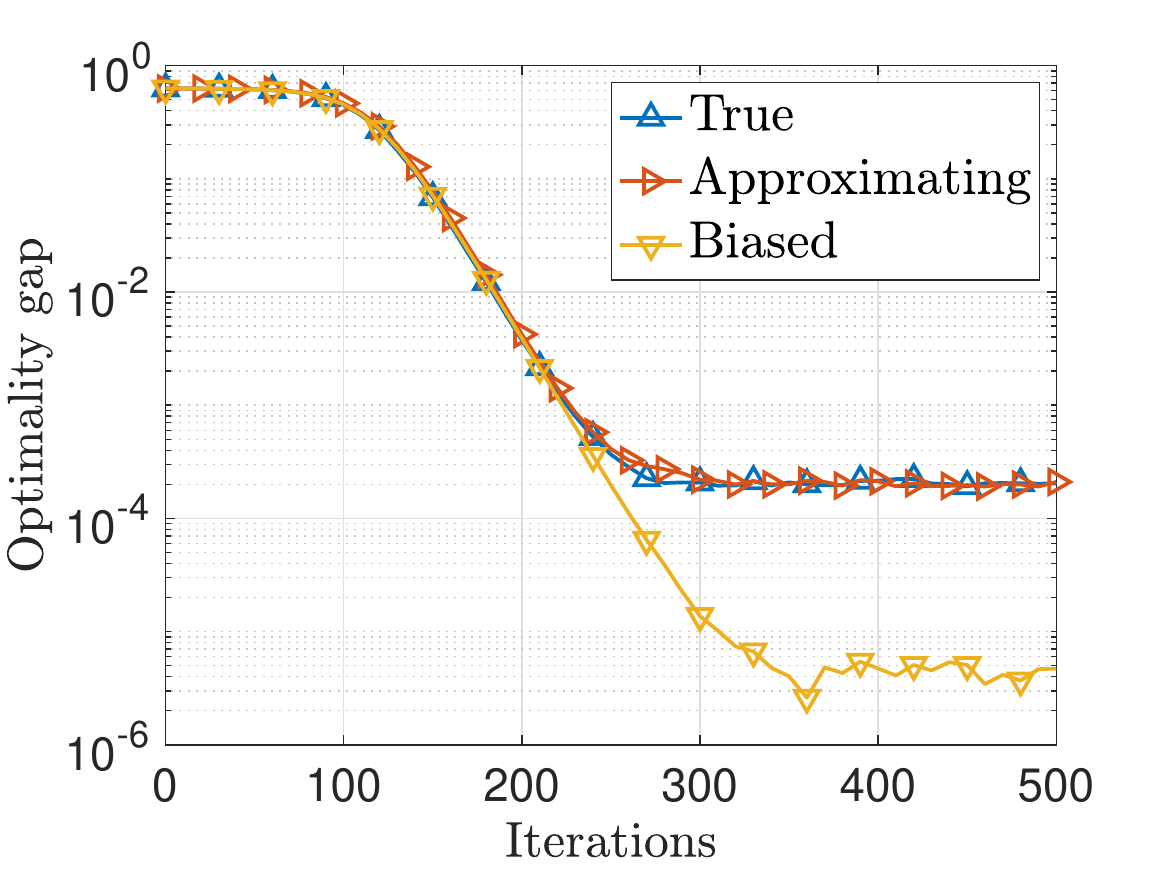}
\label{fig:NumExp:2-1_}
}
\subfigure[Decaying step size]{
\includegraphics[width=0.4\textwidth]{./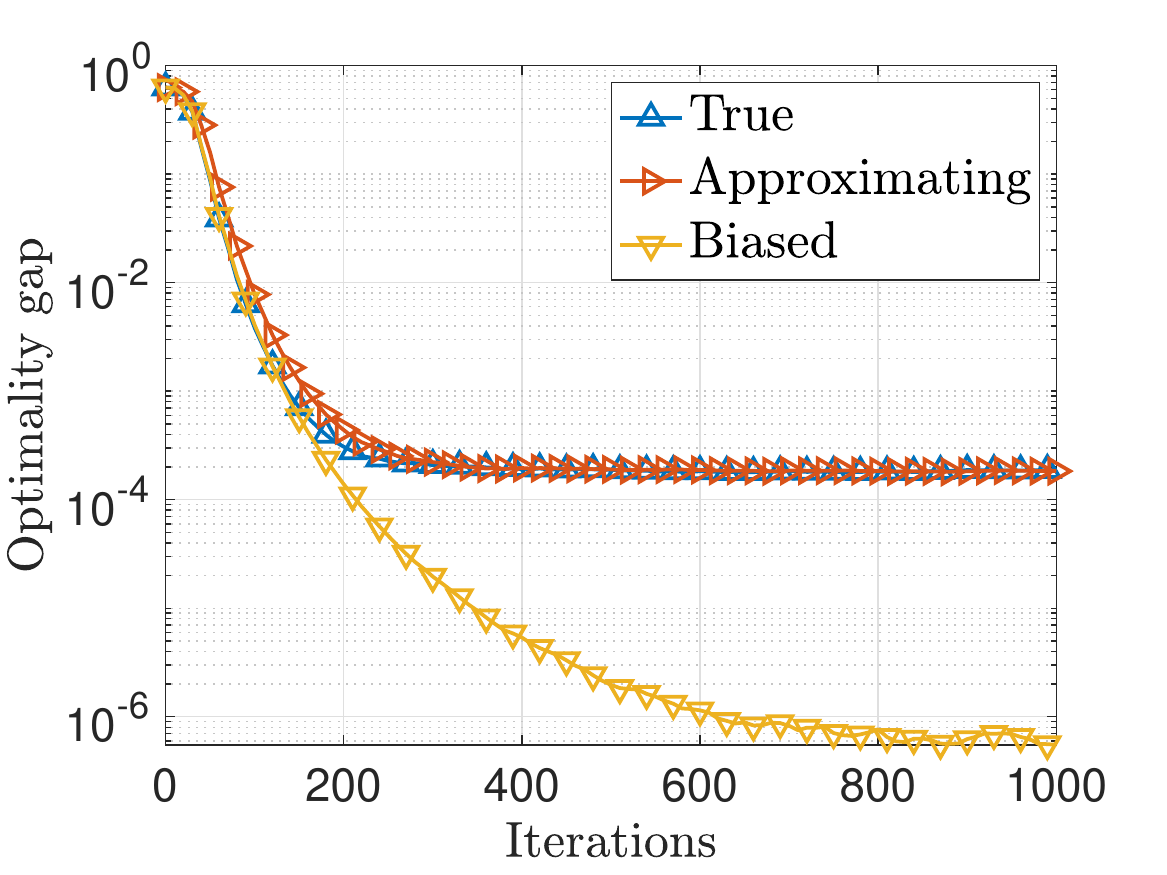}
\label{fig:NumExp:2-2_}
}
\caption{PEC with non-I.I.D. (slight) MNIST dataset: RFedAGS with the two aggregations solve the re-weighted problem $\argmin_{x\in \mathcal{M}}\tilde{F}(x)$.}
\label{fig:NumExp:2_}
\end{figure}

Furthermore, Figure~\ref{fig:NumExp:2_} shows the curves of optimality gap v.s. iterations for the re-weighted objective $\tilde{F}$ valued at the iterates given in Figure~\ref{fig:NumExp:2}. Combining Figures~\ref{fig:NumExp:2} and~\ref{fig:NumExp:2_}, we conclude that RFedAGS equipped with the aggregation pattern~(\ref{AGS-RS}) does solve the re-weighted problem $\argmin_{x\in \mathcal{M}}\tilde{F}(x)$ rather than the original problem.

\subsubsection{Comparisons of different participation schemes}
Here we consider the special case where each agents participates in any round of communication with the same participation probability, i.e., $p_i=p_j$ with $i,j\in[N]$. In this case, the random sampling scheme is denoted by Scheme I, while our arbitrary participation scheme is denoted by Scheme II, where we use frequencies to estimate the true probabilities.
For Scheme I, the sampling rate (the ratio of the number of sampled agents to the number of total agents) is as $\rho=0.3$ ($0.5$, or $0.7$). For Scheme II,  the participation probability agent $i$ is respectively set as $p_i=0.3$ ($0.5$, or $0.7$)  for all $i\in[N]$ such that the number of participating agents in Scheme II is equivalent to that of Scheme I in expectation, which means $\sum_{i=1}^N p_i=\rho N$. 
The fixed step size is set $\alpha=8\times 10^{-5}$, the parameters for decaying step sizes are set as $(\alpha_0,\beta,\mathrm{d})=(3.5\times 10^{-4},0.1,20)$, batch size is $B=0.5S$, and the number of local updates is set as $K=5$.  
As demonstrated in Figure~\ref{fig:NumExp:3}, the performance of two participation schemes are extremely the same. This indicates that Scheme I can be viewed as a special case of our participation scheme and that using frequencies to estimate the true probabilities is sufficient to ensure convergence. 

\begin{figure}[ht]
\centering
\subfigure[Fixed step size]{
\includegraphics[width=0.35\textwidth]{./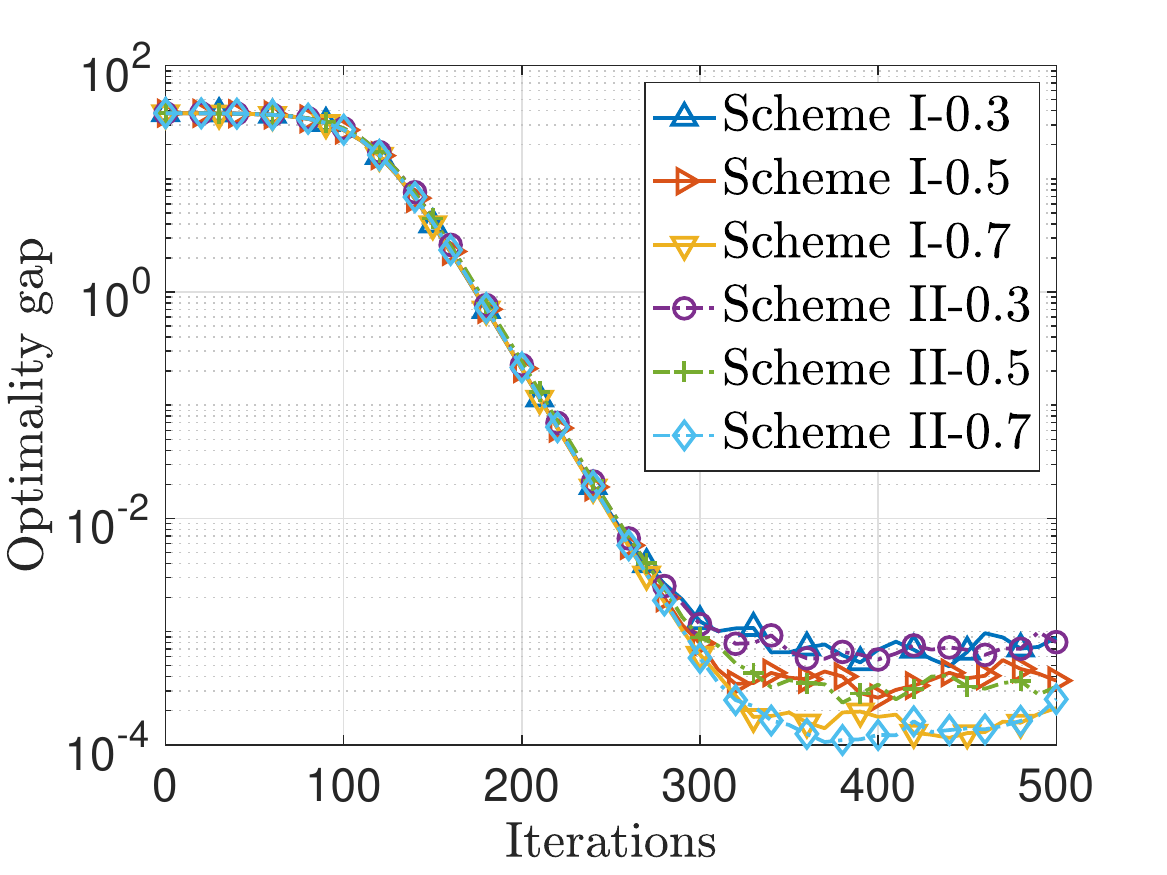}
\label{fig:NumExp:3-1}
}
\qquad
\subfigure[Fixed step size]{
\includegraphics[width=0.35\textwidth]{./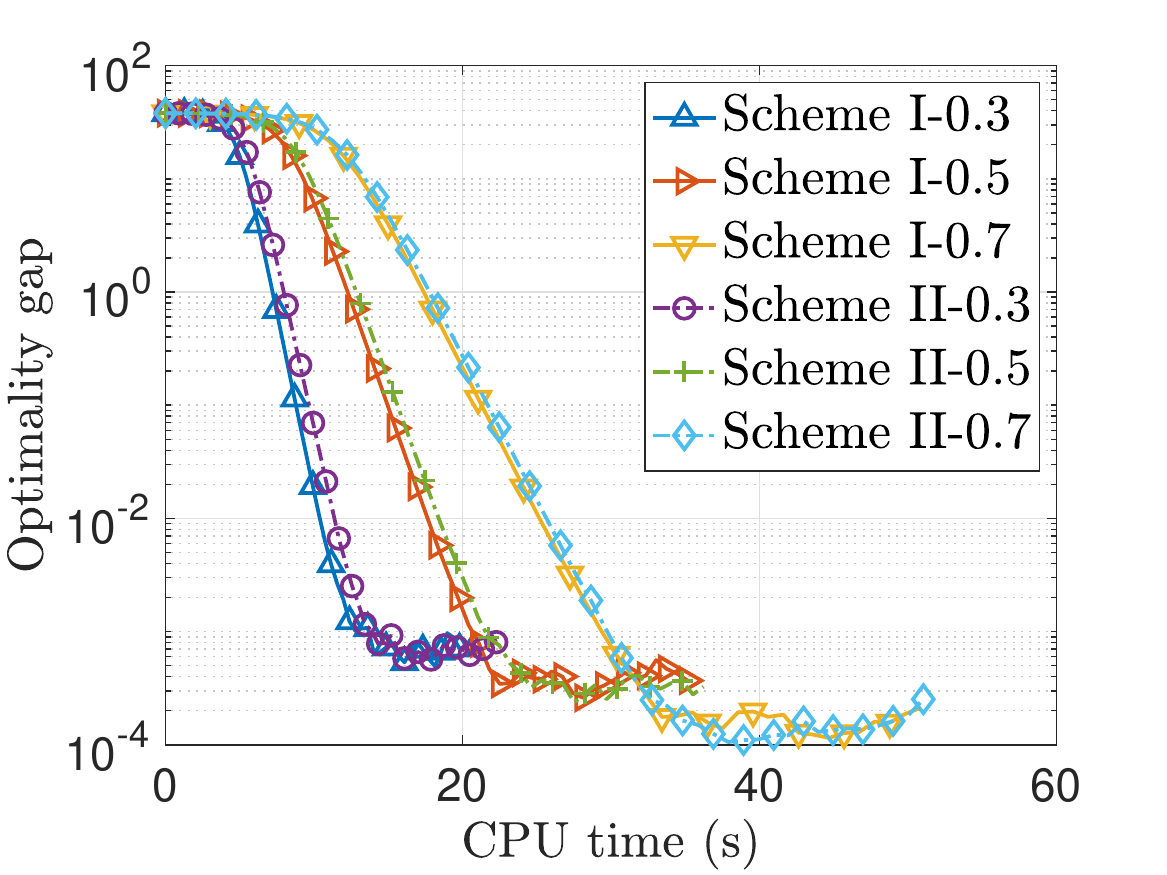}
\label{fig:NumExp:3-2}
}

\subfigure[Decaying step size]{
\includegraphics[width=0.35\textwidth]{./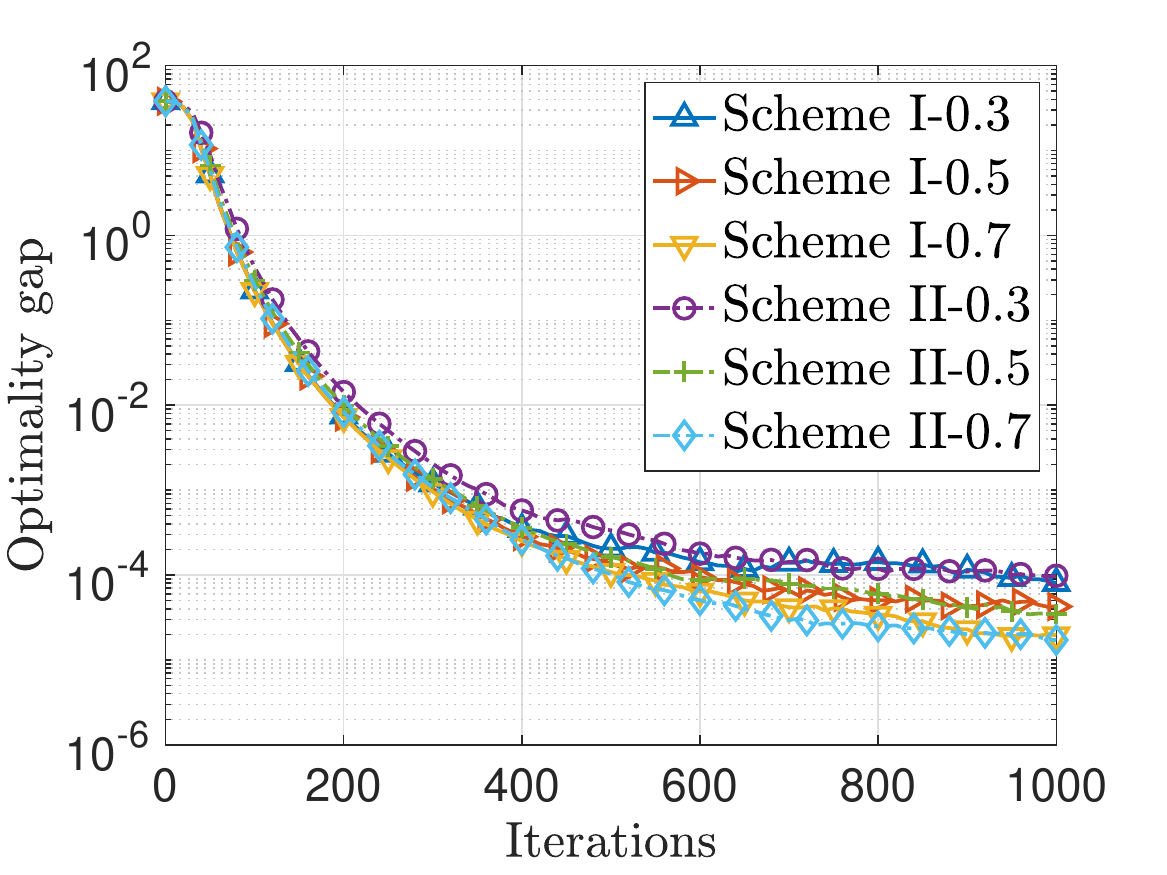}
\label{fig:NumExp:3-3}
}
\qquad
\subfigure[Decaying step size]{
\includegraphics[width=0.35\textwidth]{./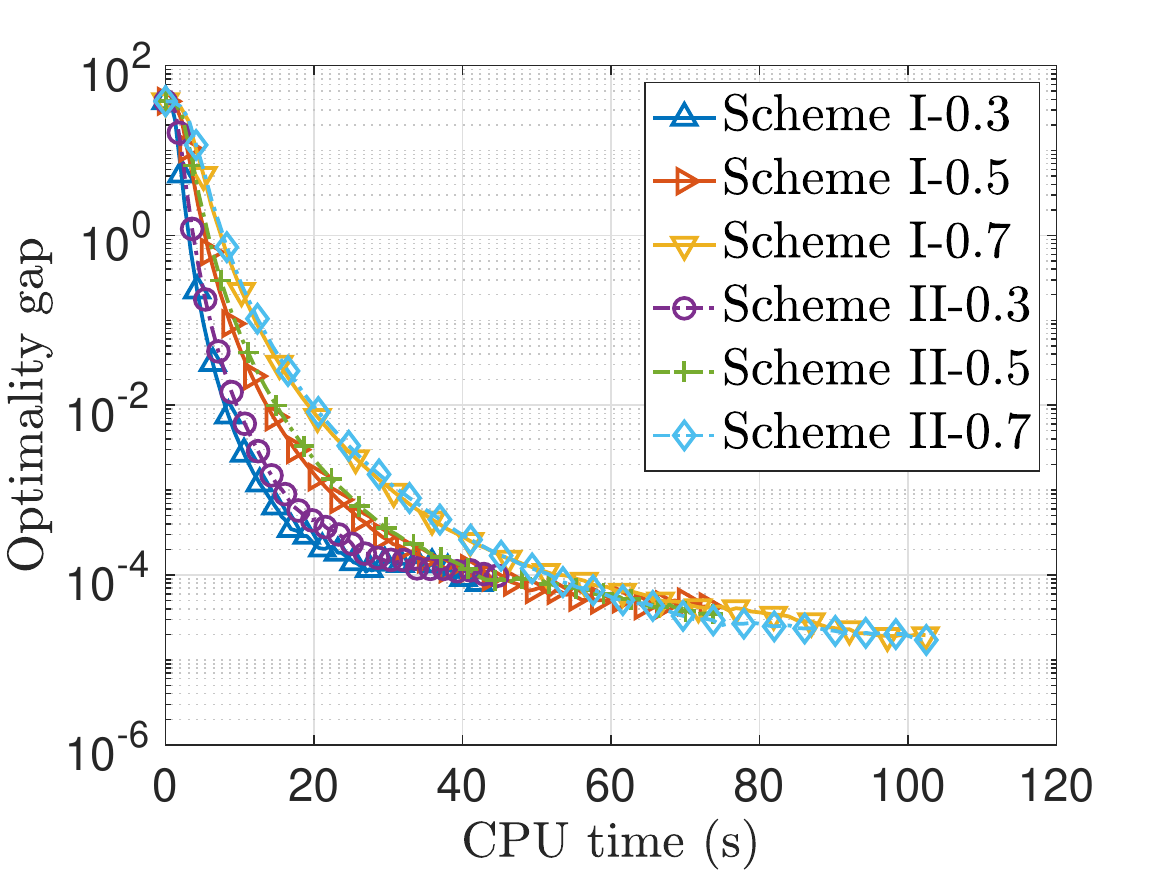}
\label{fig:NumExp:3-4}
}
\caption{PEC with non-I.I.D. (slight) MNIST dataset: comparisons of the two participation schemes.}
\label{fig:NumExp:3}
\end{figure}

Next, we simulate the scenario of straggling agent participation. Suppose that the first three agents are stragglers and make their local computation time become 10 times as much as that under normal conditions. Specifically, for Scheme I, if  one of the three stragglers are chosen, then its local computational time becomes 10 times as much as that under normal conditions; for Scheme II, setting the stragglers' participation probabilities as $0.05$ ensures that they rarely participate in local updates, and when one of the stragglers responds to the server, its local computational also becomes 10 times as much as that under normal conditions. The participation probabilities of the other agents are properly set such that $\sum_{i=1}^Np_i \approx \rho N$. 
The fixed step size is set $\alpha=8\times 10^{-5}$, the parameters for decaying step sizes are set as $(\alpha_0,\beta,\mathrm{d})=(2.8\times 10^{-4},0.1,20)$, batch size is $B=0.5S$, and the number of local updates is set as $K=5$. 
The experiment results are shown in Figure~\ref{fig:NumExp:4}. 

\begin{figure}[H]
\centering
\subfigure[Fixed step size]{
\includegraphics[width=0.35\textwidth]{./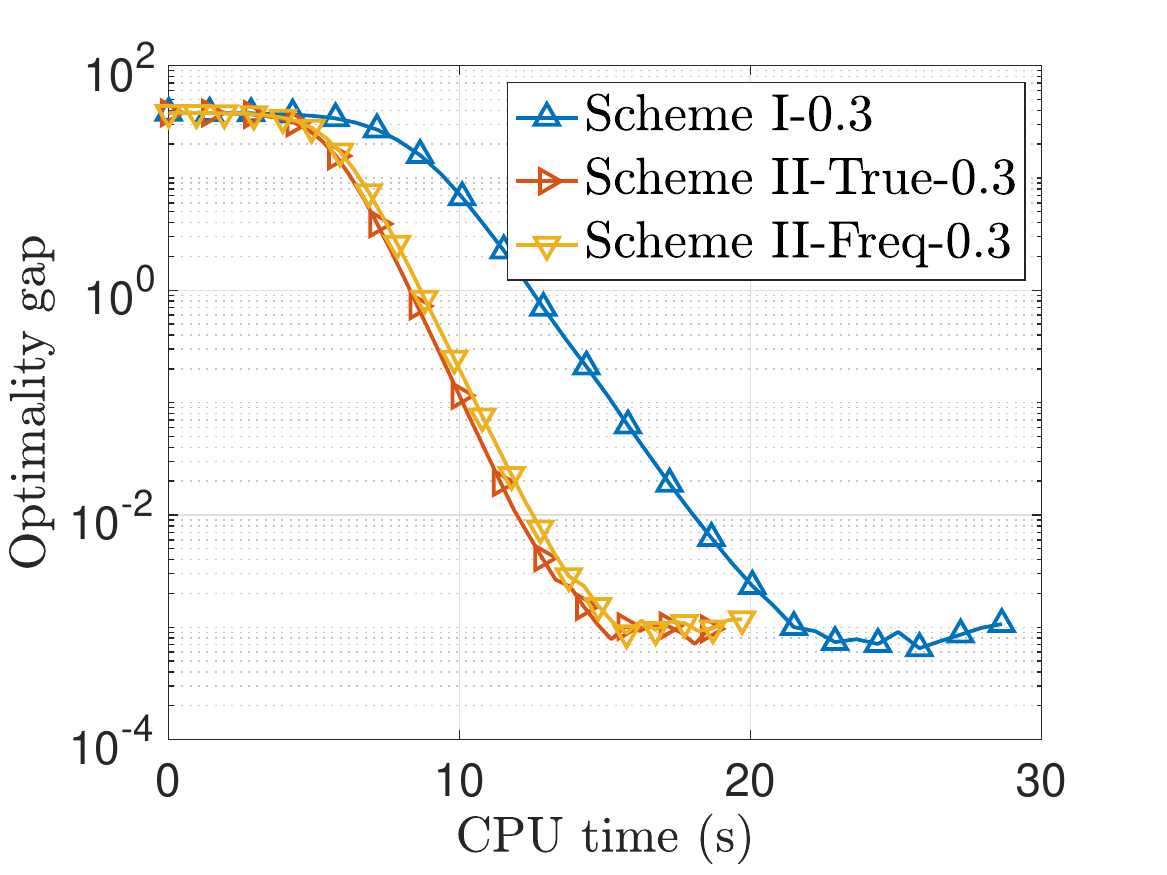}
\label{fig:NumExp:4-1}
}
\qquad
\subfigure[Fixed step size]{
\includegraphics[width=0.35\textwidth]{./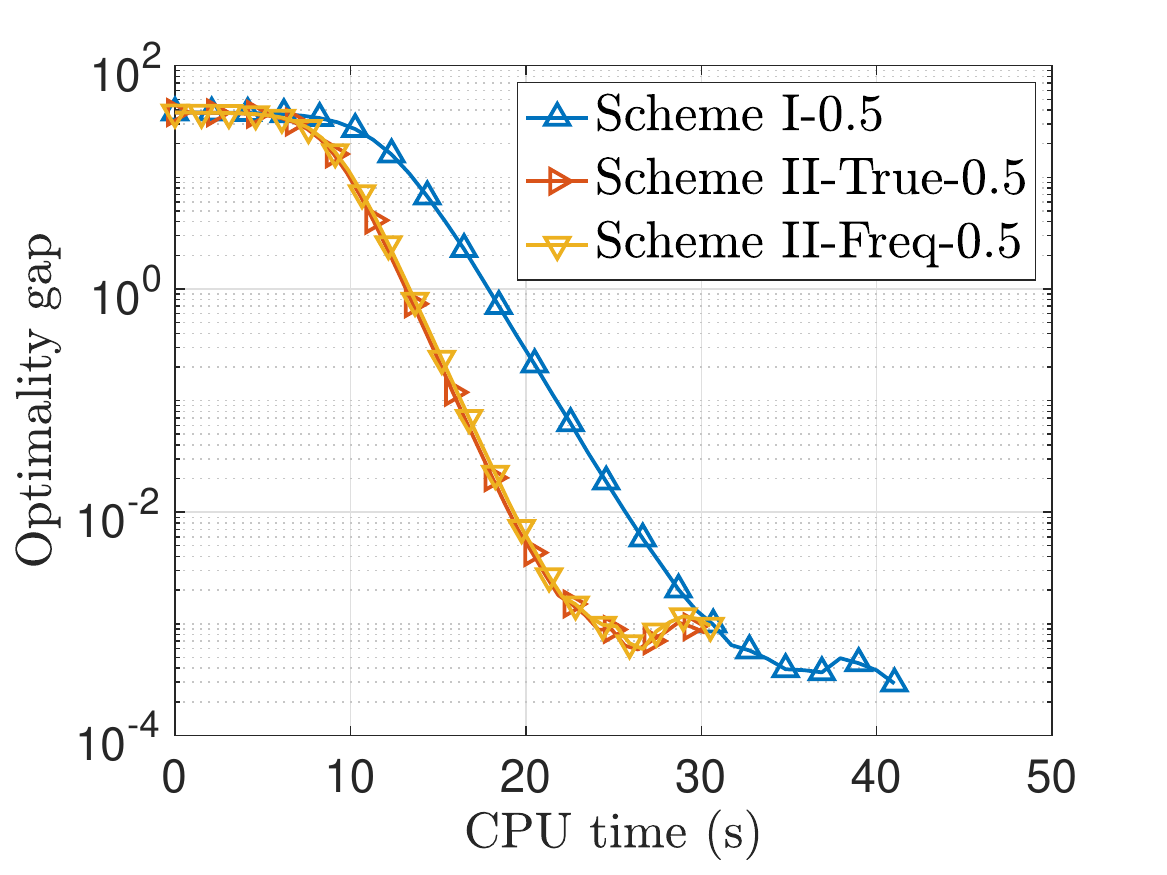}
\label{fig:NumExp:4-2}
}

\subfigure[Decaying step size]{
\includegraphics[width=0.35\textwidth]{./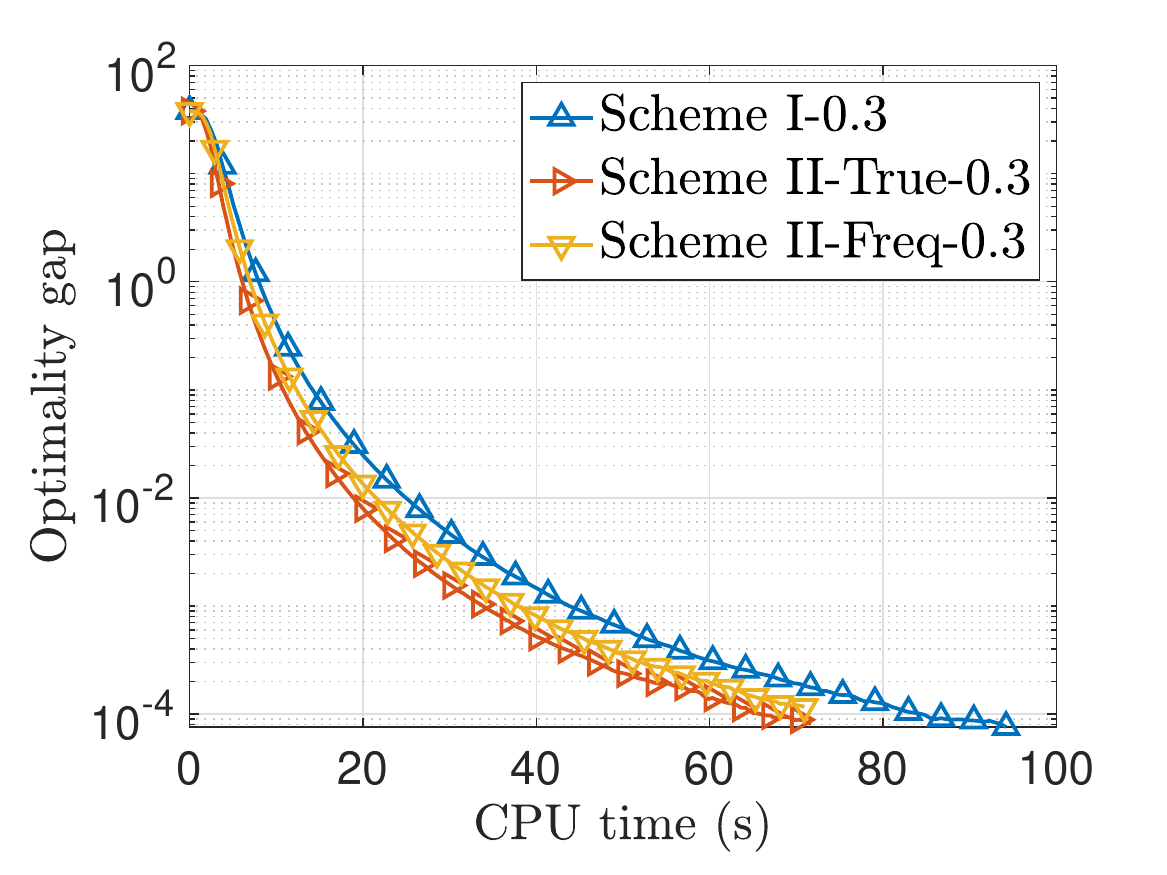}
\label{fig:NumExp:4-3}
}
\qquad
\subfigure[Decaying step size]{
\includegraphics[width=0.35\textwidth]{./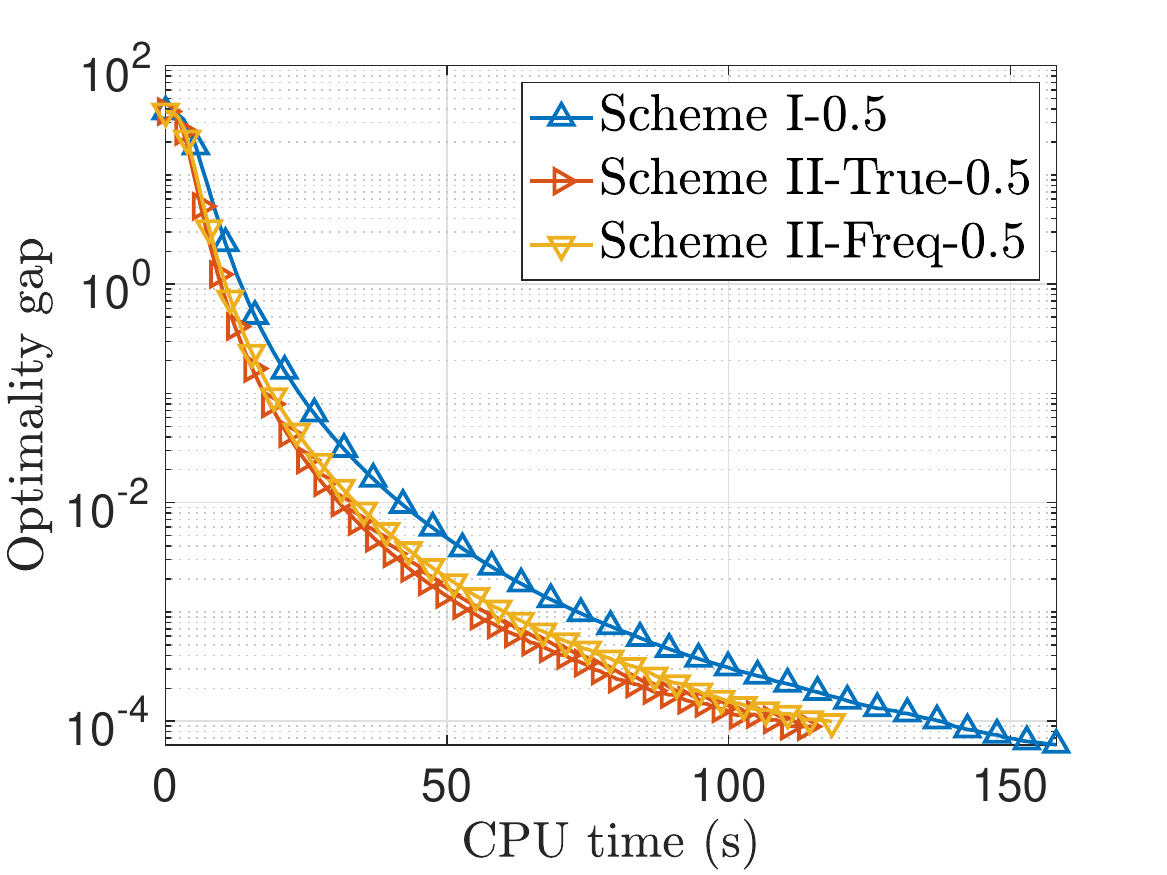}
\label{fig:NumExp:4-4}
}
\caption{PEC with non-I.I.D. (slight) MNIST dataset: the situation where the FL system has three stragglers. Here in the legends, Scheme-II-True (or Scheme-II-Freq) means that the Scheme II is equipped with the true probabilities (or frequencies serving as the true probabilities).}
\label{fig:NumExp:4}
\end{figure}

By the definition of Scheme I, each agent is sampled with probability $\rho$ (e.g., 0.3 and 0.5 in our experiments), which is much greater than $0.05$ in Scheme II for the three stragglers. Hence, the number of stragglers participating local updates of Scheme I is greater than the one of Scheme II, leading to the CPU time of Scheme I are greater than the one of Scheme II. The results in Figure~\ref{fig:NumExp:4} is consistent with our analysis. Meanwhile we note that the performance of using the true probabilities and frequencies is extremely the same, which indicates again the validity of using frequencies serving as the true probabilities. 

It should be noted that in a practical situation, if some agents do not respond to the server in a certain round of communication, then scheme I may not work in this case, because one of these agents may be sampled by the server, but it will not respond to the server. This will cause the algorithm to stagnate. Nevertheless, Scheme II does not encounter this issue since the server does not choose the agents which do not respond.

\subsubsection{Influence of the level of heterogeneity data on performance} 

Next we test the impact of the heterogeneity level of the MNIST dataset on the performance of RFedAGS. Here the participation probabilities $p_i$'s are uniformly and randomly generated, that is, $p_i\sim \mathrm{U}(0,1)$ for $i\in[N]$. 
The fixed step size is set $\alpha=8\times 10^{-5}$, the parameters for decaying step sizes are set as $(\alpha_0,\beta,\mathrm{d})=(2.8\times 10^{-4},0.1,20)$, batch size is $B=0.5S$, and the number of local updates is set as $K=5$. The experiment results are reported in Figure~\ref{fig:NumExp:5}, where we observe that the quality of the solution generated by Algorithm~\ref{alg:RFedAGS} gets worse as the growth of the levels of heterogeneity of the training data across agents. Additionally, Theorems~\ref{sec3:th1} and~\ref{sec3:th2} point out that if decaying step sizes satisfying~\eqref{sec3:th1:1} are used, Algorithm~\ref{alg:RFedAGS} has global convergence. Hence, it is expected that the higher-quality solutions may be found when using decaying step sizes and running more rounds of communication compared with the case using a fixed step size. This is consistent with the experiment results as shown in Figure~\ref{fig:NumExp:3}-Figure~\ref{fig:NumExp:5}.

\begin{figure}[ht]
\centering
\subfigure[Fixed step size]{
\includegraphics[width=0.4\textwidth]{./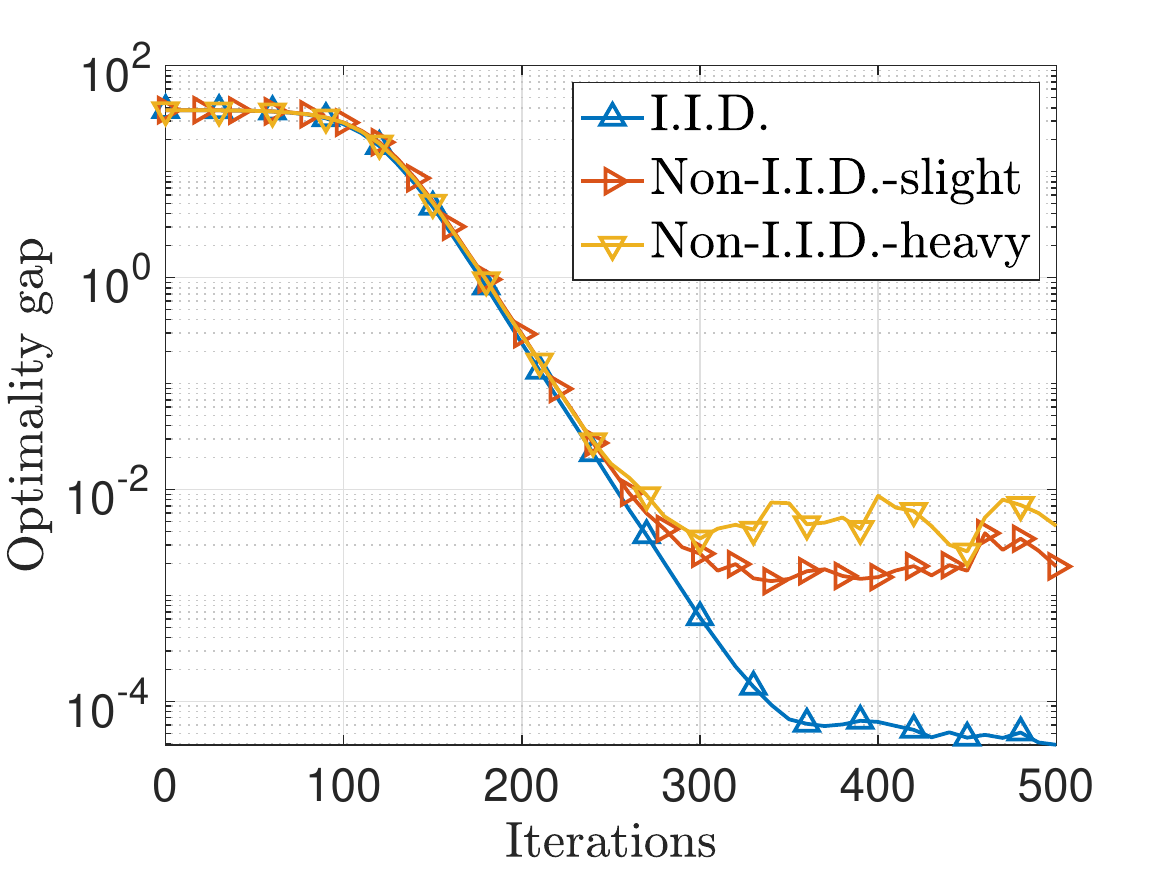}
\label{fig:NumExp:5-1}
}
\subfigure[Decaying step size]{
\includegraphics[width=0.4\textwidth]{./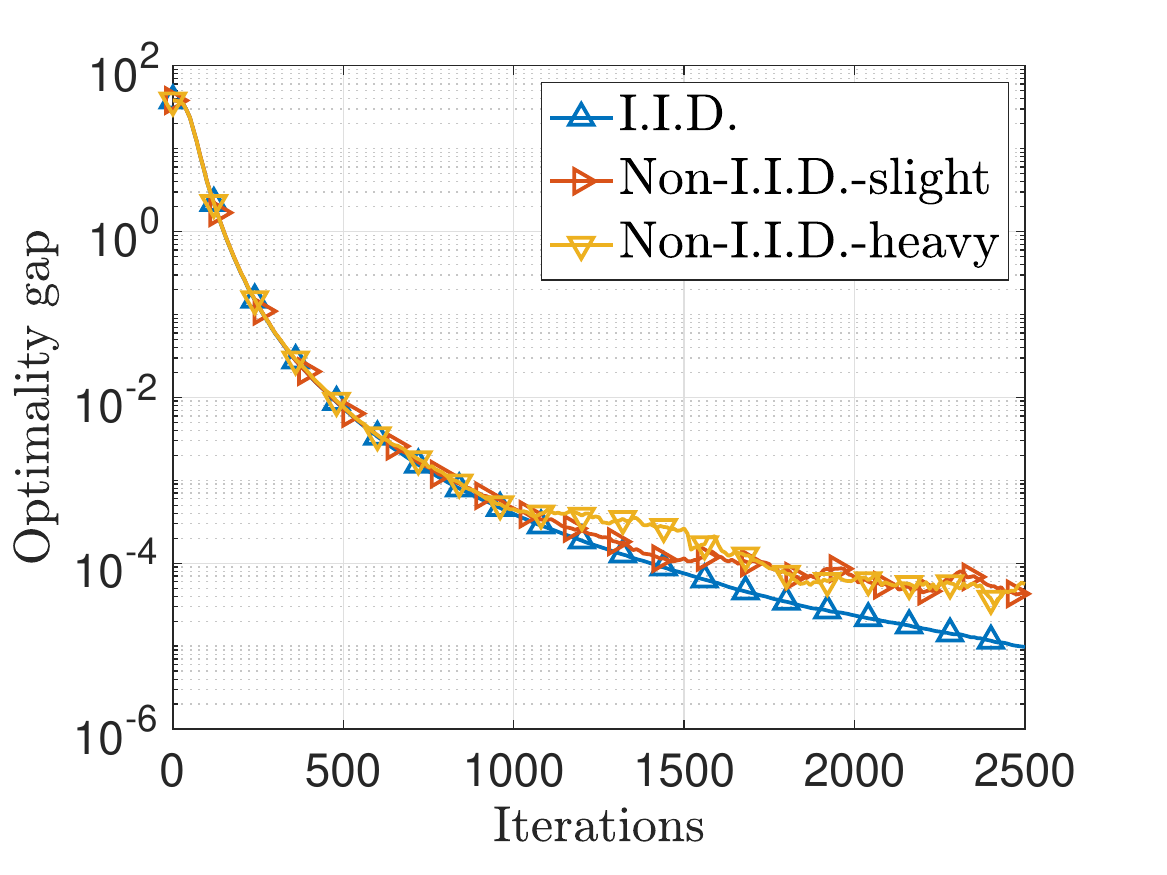}
\label{fig:NumExp:5-2}
}
\caption{PEC with different non-I.I.D. datasets: impact of  heterogeneity level.}
\label{fig:NumExp:5}
\end{figure}

\subsubsection{Effect of local multiple-step update}
In addition, we test the impact of different number of local updates $K$ on the performance of Algorithm~\ref{alg:RFedAGS}. 
The participation probabilities $p_i$'s are uniformly and randomly generated, that is, $p_i\sim \mathrm{U}(0,1)$ for $i\in[N]$. 
The fixed step size is set $\alpha=8\times 10^{-5}$ and batch size is $B=0.5S$. The experiment results are shown in Figure~\ref{fig:NumExp:6}.

When using a fixed step size, Item~\ref{sec3:th4:item1} of Theorem~\ref{sec3:th4} states that the convergence upper bound consists of two terms: a decaying term $\frac{2\Theta(x_1)}{\varpi\alpha K T}$ as $K$ (or $T$) increases, and a increasing (or constant) term $2\alpha Q(K,B,\alpha,\varpi)$ with respect to $K$ (or $T$). The initial guess $x_1$ is usually generated at random such that $\Theta(x_1)$ is relatively large, and thus the first term dominates at the initial stage. As a result, at the initial stage, the convergence speed is accelerated when using larger $K$. Subsequently, due to the growth of $T$, the second term begins to dominate and thus when using larger $K$ the error of solution generated by Algorithm~\ref{alg:RFedAGS} to the minimizer get larger. This analysis is verified by Figure~\ref{fig:NumExp:6}. Additionally, we note that in fixed step size cases, Algorithm~\ref{alg:RFedAGS} numerically demonstrates linear convergence as seen in Figures~\ref{fig:NumExp:2}-\ref{fig:NumExp:6}.

\begin{figure}[ht]
\centering
\subfigure[I.I.D.]{
\includegraphics[width=0.31\textwidth]{./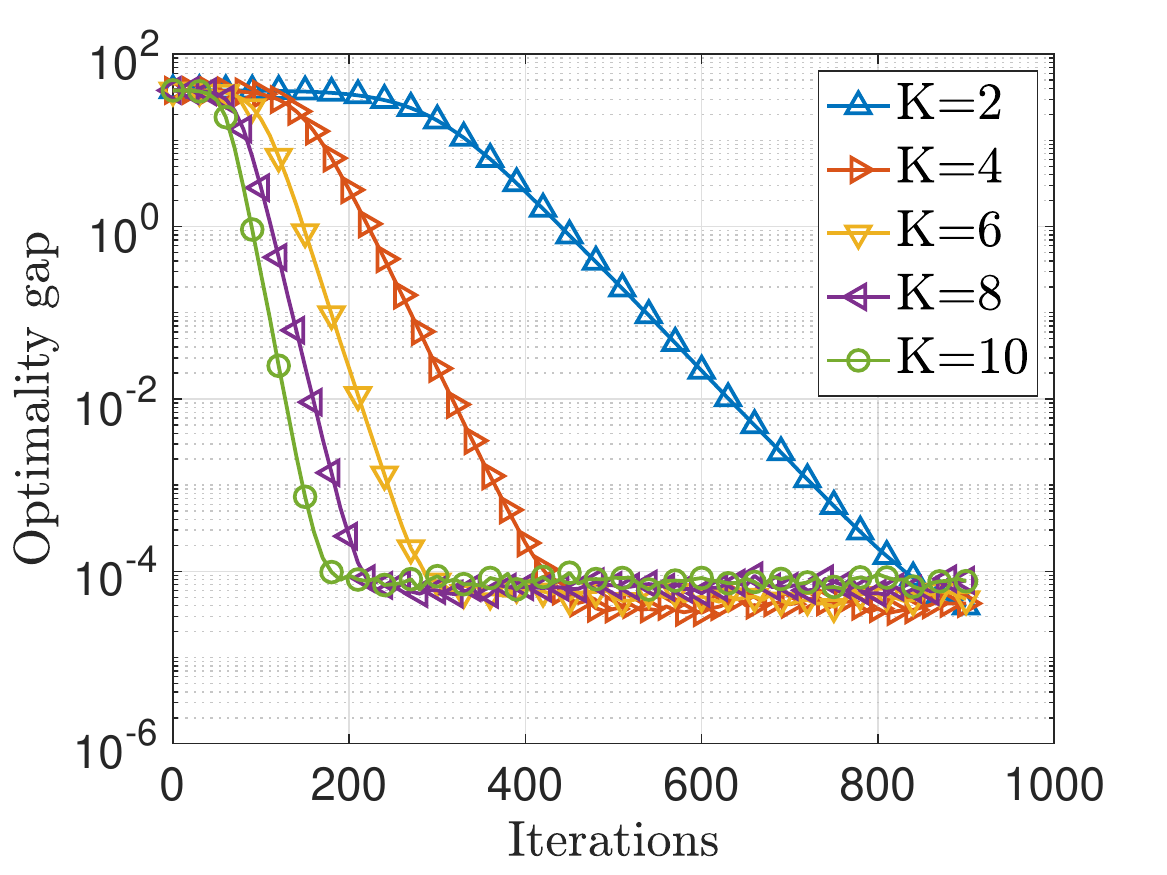}
\label{fig:NumExp:6-1}
}
\subfigure[Non-I.I.D. (slight)]{
\includegraphics[width=0.31\textwidth]{./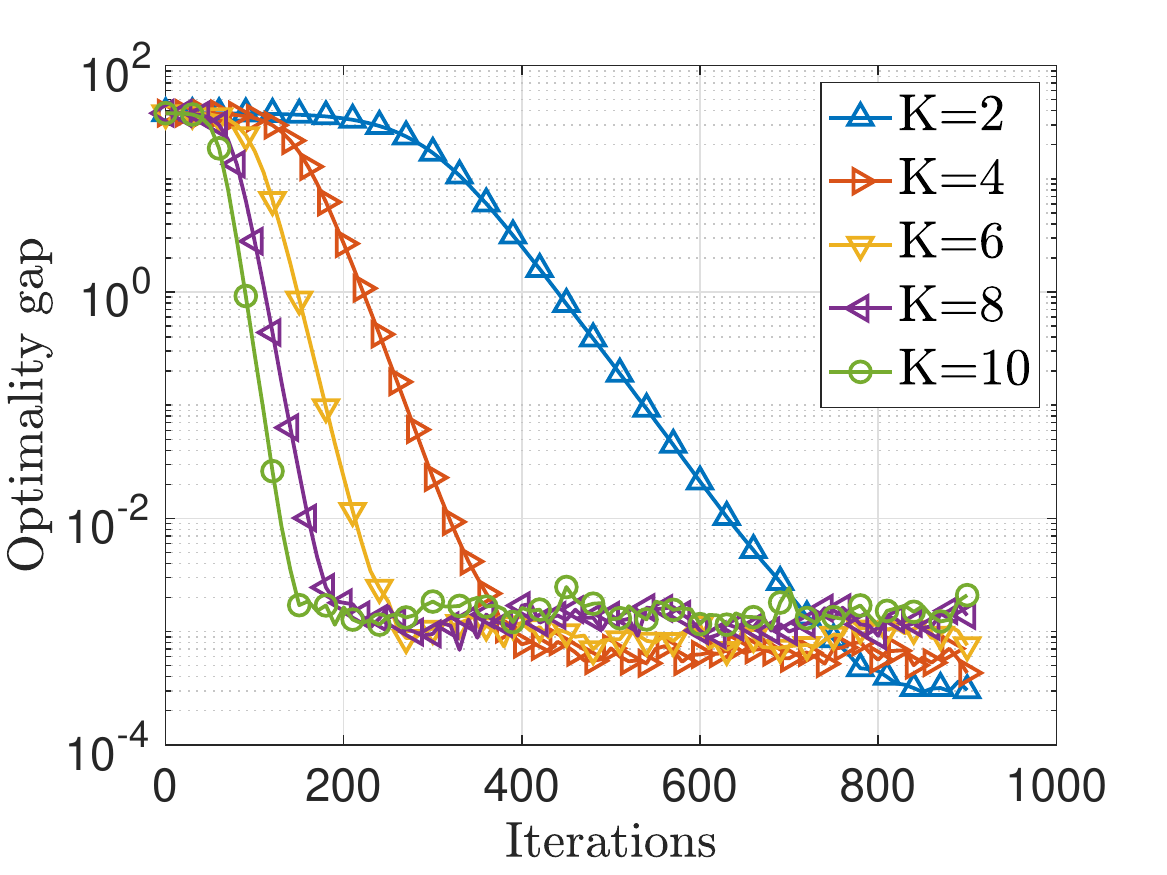}
\label{fig:NumExp:6-2}
}
\subfigure[Non-I.I.D. (heavy)]{
\includegraphics[width=0.31\textwidth]{./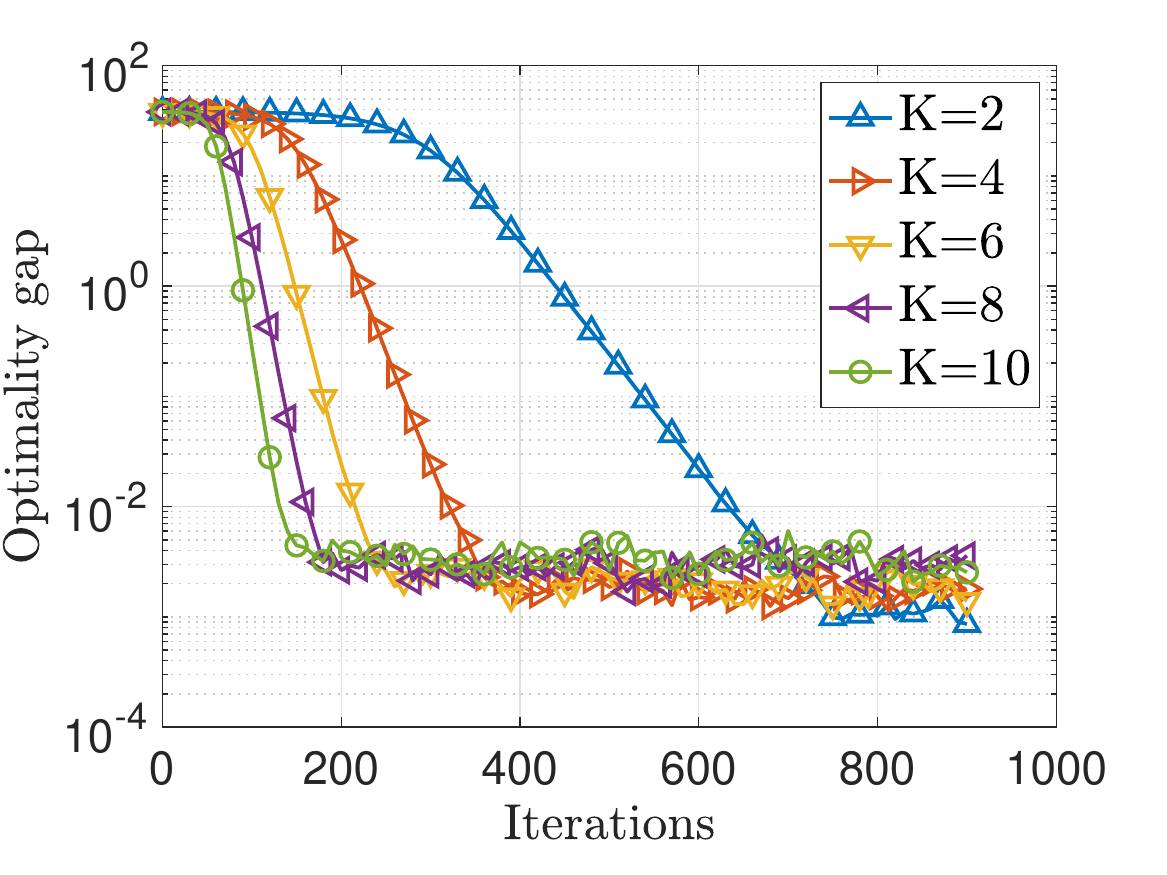}
\label{fig:NumExp:6-3}
}
\caption{PEC with non-I.I.D. (slight) MNIST dataset: impact of number of local updates.}
\label{fig:NumExp:6}
\end{figure}

\subsection{Comparisons with some centralized algorithms} \label{app:F:2}

Low-rank matrix completion (LRMC) aims to recover the missing entries of an unknown matrix from a small account of accesible entries with low-rank constraint for the matrix. 
Mishra et al.~\cite{MKJS19} formulate LRMC in the form of finite sum, which can be extended to the FL setting with finite sum minimization as follows:
\begin{equation}	
\begin{aligned} \label{NumExp:SubLearn_}
	\min_{\mathcal{U}\in \mathrm{Gr}(r,m)} F(\mathcal{U}) := \frac{1}{N} \sum_{i=1}^Nf_i(\mathcal{U}), \hbox{with } f_i(\mathcal{U}) = & \frac{1}{S}\sum_{j=1}^S 0.5 \|\mathcal{P}_{\Omega_{ij}}(\mathbf{U}\mathbf{W}_{ij\mathbf{U}}^T) - \mathcal{P}_{\Omega_{ij}}(\mathbf{Y}^*_{ij})\|_F^2  \\
	 & + \lambda\| \mathbf{U}\mathbf{W}_{ij \mathbf{U}}^T - \mathcal{P}_{\Omega_{ij}}(\mathbf{U}\mathbf{W}_{ij \mathbf{U}}^T) \|_F^2 
\end{aligned}
\end{equation}
where $\mathrm{Gr}(r,m)$ is the Grassmann manifold, i.e, the set of all the $r$-dimension subspaces of $\mathbb{R}^m$, $\mathbf{U}\in \mathrm{St}(r,m)$ is the matrix characterization of $\mathcal{U}\in \mathrm{Gr}(r,m)$, $\mathbf{W}_{ij \mathbf{U}}\in \mathbb{R}^{n_{ij} \times r}$ with $\sum_{i=1}^N\sum_{j=1}^Sn_{ij}=n$ is the least-squares solution to $\argmin_{\mathbf{W}_{ij}\in \mathbb{R}^{ n_{ij} \times r}} 0.5 \|\mathcal{P}_{\Omega_{ij}}(\mathbf{U}\mathbf{W}_{ij}^T) - \mathcal{P}_{\Omega_{ij}}(\mathbf{Y}^*_{ij})\|_F^2 + \lambda\| \mathbf{U}\mathbf{W}_{ij }^T - \mathcal{P}_{\Omega_{ij}}(\mathbf{U}\mathbf{W}_{ij }^T) \|_F^2$, $\mathbf{Y}^*\in \mathbb{R}^{m\times n}$ is the known matrix and is partitioned into $\mathbf{Y}^*=[\mathbf{Y}^*_{1,1},\dots,\mathbf{Y}^*_{1,S},\dots, \\  \mathbf{Y}^*_{N,1},\dots,\mathbf{Y}^*_{N,S}]$ with $\mathbf{Y}_{ij}^*\in \mathbb{R}^{m \times n_{ij}}$, $\Omega$ is the indices set of elements of $\mathbf{Y}^*$: the $(l,k)$-element of $\mathbf{Y}^*$ is nonzero if and only if its index belongs to $\Omega$ and is also partitioned similar to the way of $\mathbf{Y}$: $\Omega=\{\Omega_{1,1},\dots,\Omega_{1,S},\dots,\Omega_{N,1},\dots,\Omega_{N,S}\}$, and
 operator $\mathcal{P}_{\Omega_{ij}}$ is the orthogonal sampling operator defined by $[\mathcal{P}_{\Omega_{ij}}(\mathbf{Y})]_{lk}= \hbox{ the } (l,k)$-element of $\mathbf{Y}$ if $(l,k)\in \Omega_{i,j}$ and $[\mathcal{P}_{\Omega_{ij}}(\mathbf{Y})]_{lk}=0$ otherwise. It is worthy mentioned that Problem~(\ref{NumExp:SubLearn_}) is defined on $\mathrm{Gr}(r,m)$ but the computation can be implemented with matrices $\mathbf{U}$ in $\mathrm{St}(r,m)$. The over-sampling ratio (OS) is the ratio of number of entries of $\Omega$ and the freedom degree of $\mathbf{Y}^*$,  i.e., $OS=|\Omega|/((m+n-r)r)$. 

In this paper, the Grasssmann manifold $\mathrm{Gr}(r,m)$ is equipped with the quotient structure $\mathrm{Gr}(r,m)=\mathrm{St}(r,m)/\mathrm{O}(r)=\{[\mathbf{U}]:\mathbf{U}\in \mathrm{St}(r,m)\}$ with $\mathrm{O}(r)$ the orthogonal group of the order $r$. The Riemannian metric on $\mathrm{Gr}(r,m)$ is induced by the inner product, i.e., $\left<\eta_{\mathcal{U}},\xi_{\mathcal{U}}\right>_{\mathcal{U}}=\mathrm{trace}(\eta_{\mathcal{U}_{\uparrow}}^T\xi_{\mathcal{U}_{\uparrow}})$, where $\xi_{\mathcal{U}_{\uparrow}}$ is the horizontal lift of $\xi_{\mathcal{U}}$. The retraction via Cayley transform (CT)~\cite{ZS21} is given by 
\[
	\mathrm{R}^{\mathrm{Cay}}_{\mathcal{U}}(\xi_{\mathcal{U}}) = \left[\mathbf{U} + \xi_{\mathcal{U}_{\uparrow}} - \left( \frac{1}{2}\mathbf{U} + \frac{1}{4}\xi_{\mathcal{U}_{\uparrow}} \right) \left(I_r + \frac{1}{4}\xi_{\mathcal{U}_{\uparrow}}^T\xi_{\mathcal{U}_{\uparrow}}\right)^{-1}\xi_{\mathcal{U}_{\uparrow}}^T\xi_{\mathcal{U}_{\uparrow}}\right],
\]
and the inverse of $\mathrm{R}^{\mathrm{Cay}}$~\cite{ZS21} is computed by 
\[
	\left(\left(\mathrm{R}_{\mathcal{U}}^{\mathrm{Cay}}\right)^{-1}(\mathcal{V})\right)_{\mathcal{U}_{\uparrow}} = 2(\mathbf{V} - \mathbf{U}\mathbf{U}^T \mathbf{V})(I_r + \mathbf{U}^T \mathbf{V})^{-1}.
\]
Correspondingly, the isometric vector transport associated with $\mathrm{R}^{\mathrm{Cay}}$~\cite{ZS21} is given by
\[
\left(\mathcal{T}^{\mathrm{Cay}}_{\eta_{\mathcal{U}}}(\xi_{\mathcal{U}}) \right)_{\mathcal{V}_{\uparrow}} = \xi_{\mathcal{U}_{\uparrow}} - \left(\mathbf{U} + \frac{1}{2}\eta_{\mathcal{U}_{\uparrow}}\right)\left(I_r + \frac{1}{4}\eta_{\mathcal{U}_{\uparrow}}^T\eta_{\mathcal{U}_{\uparrow}}\right)^{-1}\eta_{\mathcal{U}_{\uparrow}}^T\xi_{\mathcal{U}_{\uparrow}}
\]
with $\mathcal{V}=\mathrm{R}^{\mathrm{Cay}}_{\mathcal{U}}(\eta_{\mathcal{U}})$. We point out that Algorithm~\ref{alg:RFedAGS} does not require the usage of the inverse of retraction. Here, that we use the inverse of retraction is just to assist in the implementation of the vector transport. 
Moreover, if one uses the vector transport by projection, then the inverse of retraction does not need. 

\subsubsection{Synthetic case}
Sample at random two matrices $\mathbf{A}\in \mathbb{R}^{m\times r}$ and $\mathbf{B}\in \mathbb{R}^{n\times r}$. 
Let $\mathbf{Y}^*=\mathbf{A}\mathbf{B}^T$. $mn-|\Omega|$ entries are randomly removed with uniform probability. 
Each of the rest entries is perturbed by noise obeying the 
Gaussian distribution with mean zero and standard deviation $10^{-6}$. In the experiment, the rank is set as $r=5$, the OS is set as $OS=6$, and $(m,n)=(100, 2000)$. The other parameters are set as $\lambda=0$, $(N,S)=(20,100),p_i\sim \mathrm{U}(0,1),\forall i\in[N]$, $B=0.5S$, and $\alpha=2\times 10^{-3}$,

Let $\tilde{\mathbf{U}}$ be the solution given by Algorithm~\ref{alg:RFedAGS}. Then $\mathbf{W}_{\tilde{\mathbf{U}}}=[\mathbf{W}_{11\tilde{\mathbf{U}}},\dots, \mathbf{W}_{1S\tilde{\mathbf{U}}},\dots,\mathbf{W}_{N1\tilde{\mathbf{U}}},\dots,\\ \mathbf{W}_{NS\tilde{\mathbf{U}}}]$, and thus the approximation to $\mathbf{Y}^*$ is given by $\tilde{\mathbf{Y}}=\tilde{\mathbf{U}}\mathbf{W}^T_{\tilde{\mathbf{U}}}$. Relative error (lower is better) between $\tilde{\mathbf{Y}}$ and $\mathbf{Y}^*$, computed by 
\[
	\mathrm{rel\_err}(\tilde{\mathbf{Y}}) = \frac{\|\tilde{\mathbf{Y}} - \mathbf{Y}^*\|_F}{\|\mathbf{Y}^*\|_F}, 
\]
is used to measure the performance of Algorithm~\ref{alg:RFedAGS}. From Figure~\ref{fig:NumExp:7}, we also observe a similar result: the number of inner iterations significantly affects the convergence. It is worth mentioning that the results demonstrate Algorithm~\ref{alg:RFedAGS} has a linear convergence rate. 

\begin{figure}[ht]
\centering
\subfigure[]{
\includegraphics[width=0.4\textwidth]{./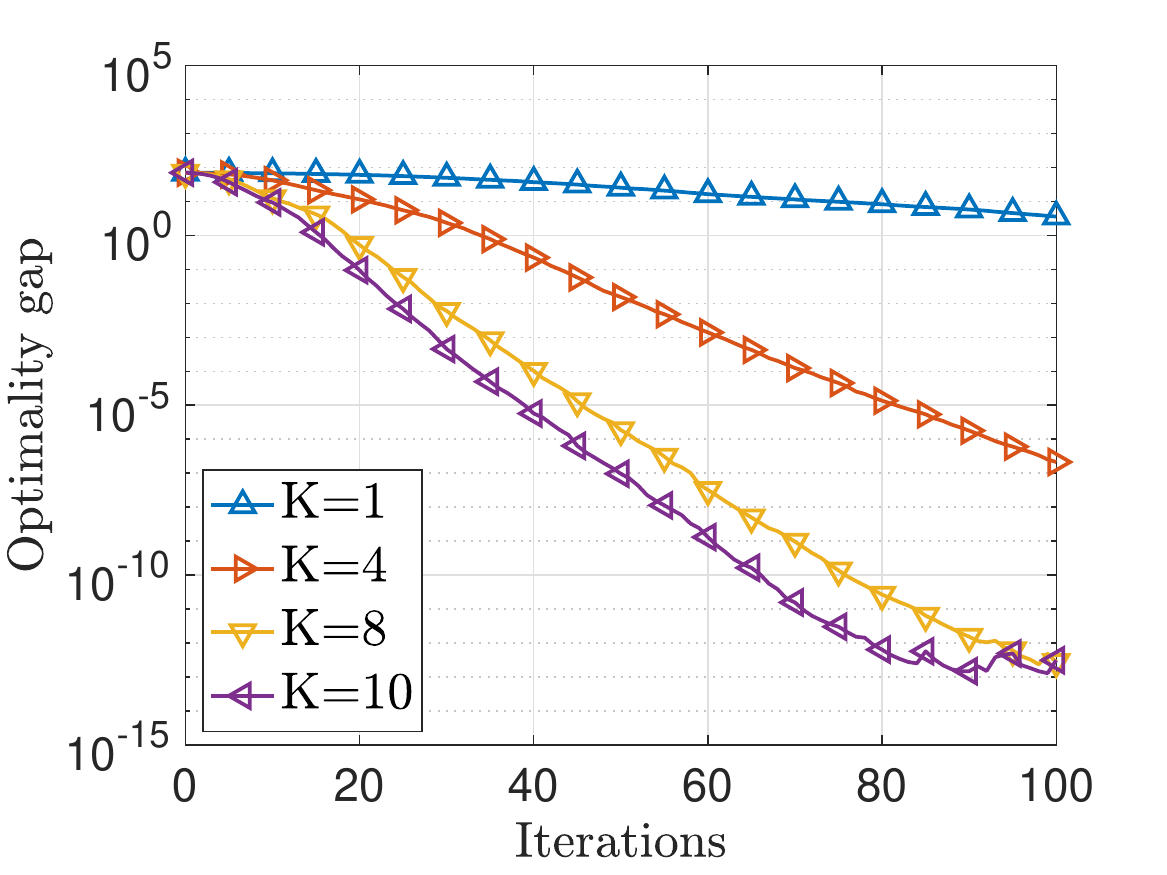}
\label{fig:NumExp:7-1}
} 
\subfigure[]{
\includegraphics[width=0.4\textwidth]{./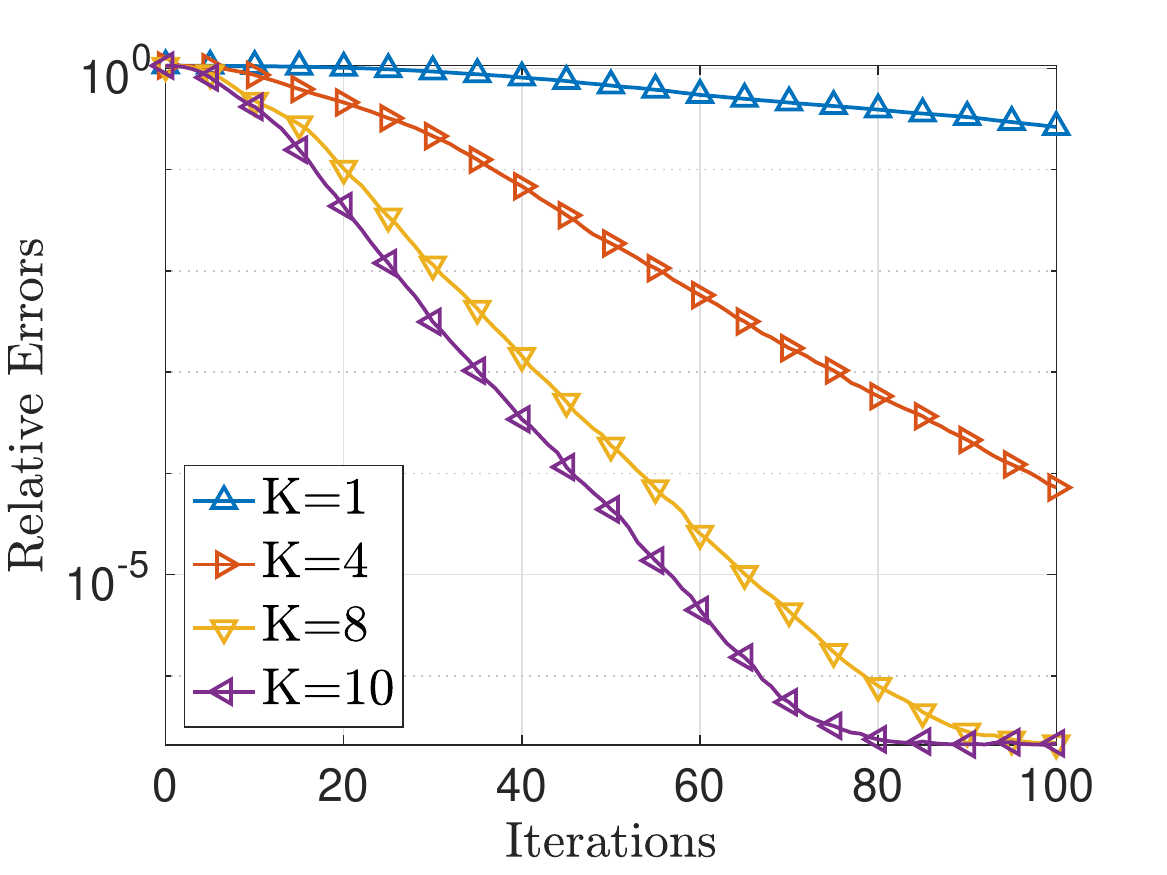}
\label{fig:NumExp:7-2}
}
\caption{LRMC with synthetic data: performance of RFedAGS with different $K$.}
\label{fig:NumExp:7}
\end{figure}

\subsubsection{A real-world application}

We use MovieLens 1M~\footnote{See \href{https://grouplens.org/datasets/movielens/1m/}{https://grouplens.org/datasets/movielens/1m/}. } dataset which consists of $1000209$ ratings with $6040$ users rating $3952$ movies. In LRMC setting, $\mathbf{Y}^*\in \mathbb{R}^{m\times n}$, with $m=3952$, $n=6040$, and $|\Omega|=1000209$, whose nonzero elements are the ratings. We randomly sample $80\%$ ratings for each column of $\mathbf{Y}^*$ as the training samples, denoted by $\mathbf{Y}^{\mathrm{tr}}$, and the testing dataset, denoted by $\mathbf{Y}^{\mathrm{te}}$, is consisted of the remainder. In terms of the FL setting, $\mathbf{Y}^{\mathrm{tr}}$ is equally divided into $N=40$ agents by column at order, i.e., $\mathbf{Y}^{\mathrm{tr}}=[\mathbf{Y}^{\mathrm{tr}}_1,\dots,\mathbf{Y}^{\mathrm{tr}}_N]$, and each agent has $S=151$ columns, i.e., $\mathbf{Y}^{\mathrm{tr}}_i = [\mathbf{Y}^{\mathrm{tr}}_{i,1},\dots,\mathbf{Y}^{\mathrm{tr}}_{i,S}]$ where $\mathbf{Y}^{\mathrm{tr}}_{i,j} \in \mathbb{R}^{m}$. The other parameters are set as $\lambda=10^{-2}$, $p_i\sim \mathrm{U}(0,1),\forall i\in [N]$, $B=0.5S$, and $\alpha=6\times 10^{-4}$.

In order to evaluate the performance of those methods, the root mean square error (RMSE) is used and is computed by 
\[
	\mathrm{RMSE}(\tilde{\mathbf{Y}})=\sqrt{ \frac{1}{|\Omega^{\mathrm{te}}|} \sum_{(i,j)\in \Omega^{\mathrm{te}}}|\tilde{\mathbf{Y}}_{ij} - \mathbf{Y}_{ij}^{\mathrm{te}}|^2 }
\]
with $\tilde{\mathbf{Y}}$, $\mathbf{Y}^{\mathrm{te}}$, and $\Omega^{\mathrm{te}}$ being the approximation to $\mathbf{Y}^{\mathrm{te}}$, the testing matrix, and the indices set of known entries of $\mathbf{Y}^{\mathrm{te}}$, respectively. We observe in Figure~\ref{fig:NumExp:8_} and Table~\ref{table1_} that the proposed RFedPP is comparable to these centralized methods in solving LRMC in terms of RMSE when choosing an appropriate $K$. 

\begin{figure}[H]
\centering
\subfigure[$r=3$]{
\includegraphics[width=0.3\textwidth]{./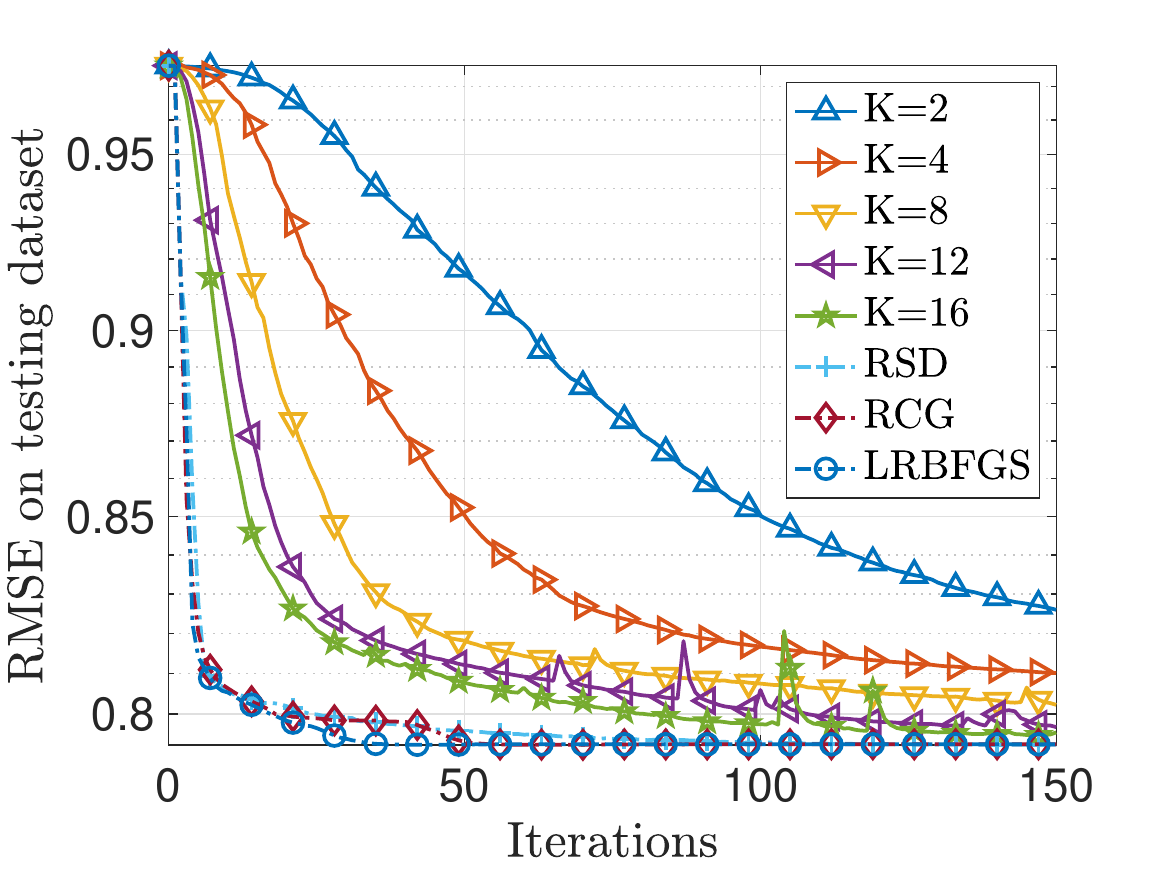}
\label{fig:NumExp:8-1_}
}
\subfigure[$r=5$]{
\includegraphics[width=0.3\textwidth]{./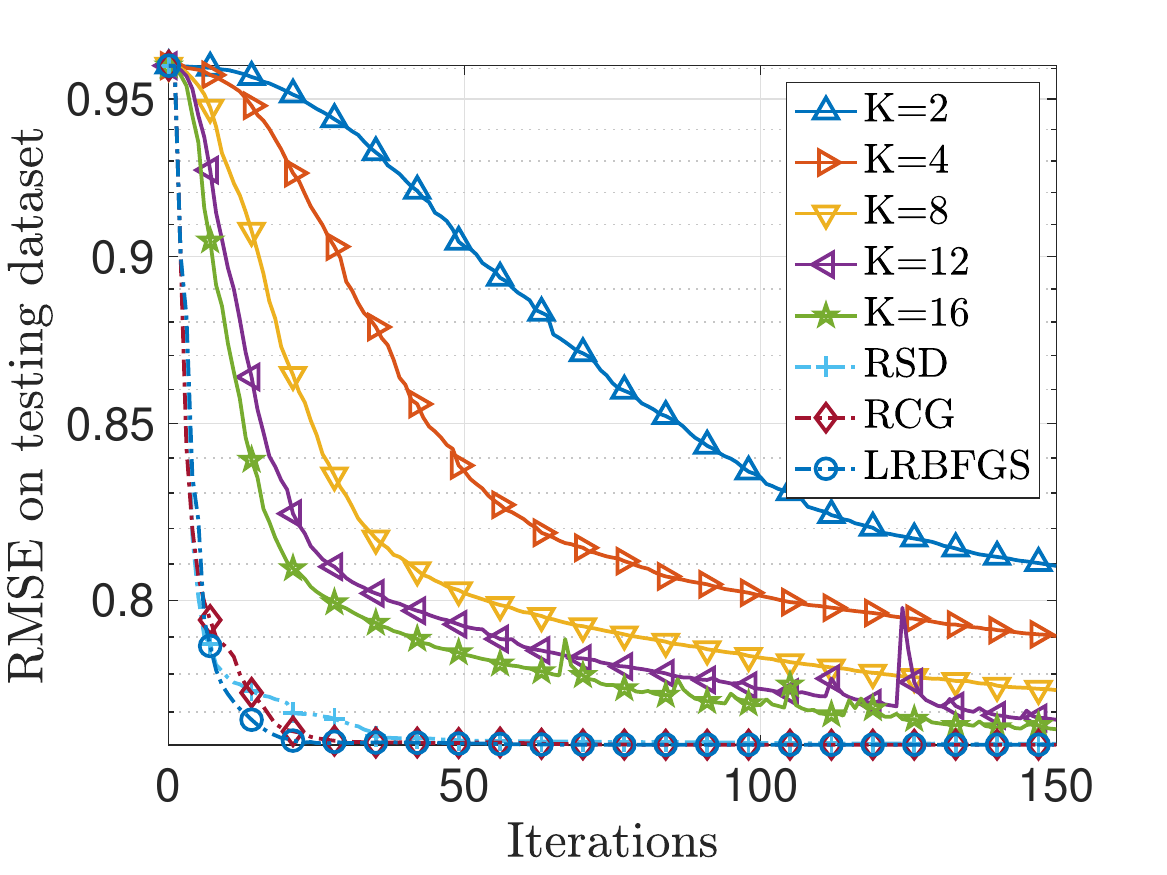}
\label{fig:NumExp:8-2_}
}
\subfigure[$r=7$]{
\includegraphics[width=0.3\textwidth]{./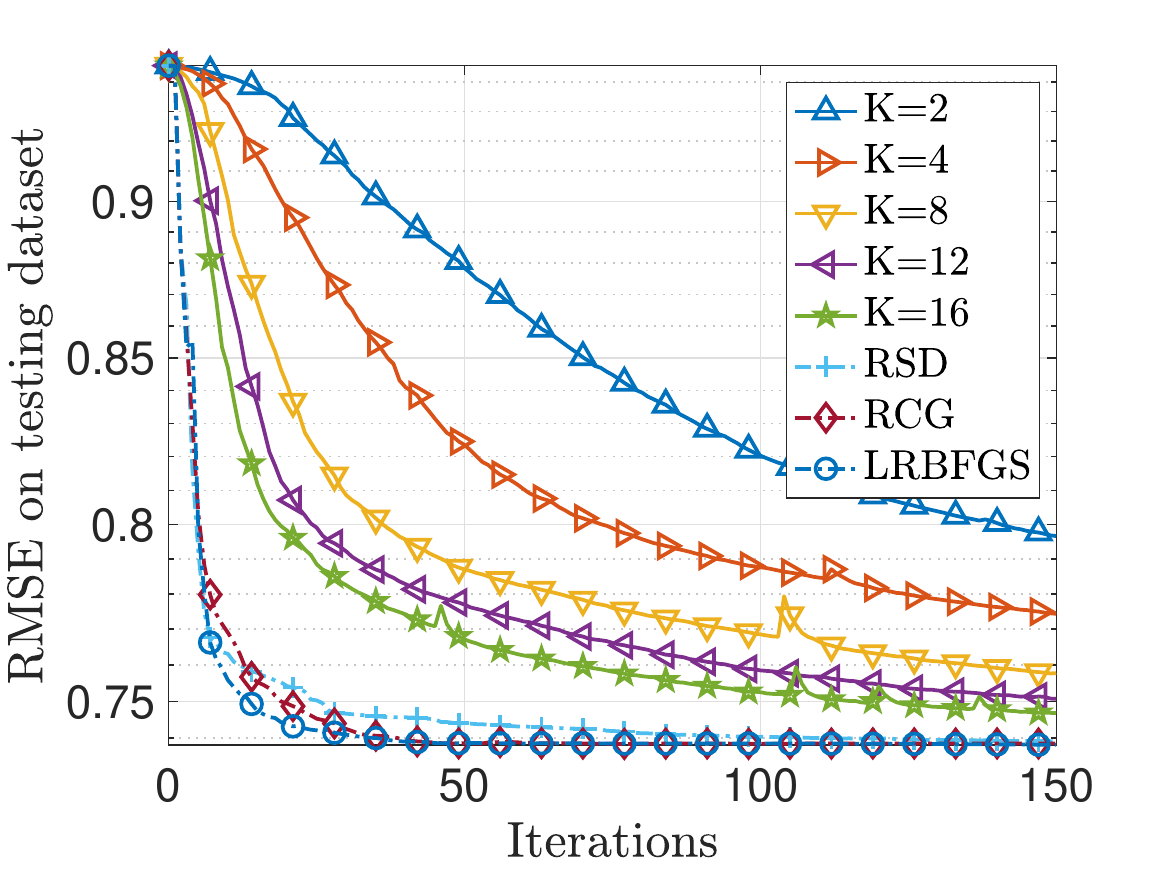}
\label{fig:NumExp:8-3_}
}
\caption{LRMC with MovieLens 1M dataset: comparisons of RFedAGS (with different $K$) with RSD, RCG, and LRBFGS.}
\label{fig:NumExp:8_}
\end{figure}

\begin{table}[H]
\centering
\caption{The best RMSE scores (lower is better) on testing set for different subspace dimension $r$  and different number of local update $K$. Here the scalar $a.bcd_{k}$ denotes $a.bcd\times 10^{k}$.}
\begin{tabular}{ccccccccc}
\toprule
      & \multicolumn{5}{c}{RFedPP}                            & RSD                  & RCG                  & LRBFGS               \\ \cmidrule(rl){2-6}
      & $K=2$                & $K=4$ & $K=8$ & $K=12$ & $K=16$ &      &                                           &                      \\ \midrule
$r=3$                       &   $8.260_{-1}$    &           $8.101_{-1}$    &   $8.023_{-1}$  & $7.968_{-1}$   &  $7.948_{-1}$    &   $7.925_{-1}$                   &                   $7.925_{-1}$   & $7.925_{-1}$                     \\
$r=5$   &  $8.095_{-1}$     &  $7.902_{-1}$     &  $7.757_{-1}$      &   $7.679_{-1}$     &  $7.654_{-1}$    & $7.616_{-1}$ & $7.614_{-1}$ & $7.614_{-1}$ \\
$r=7$   &   $7.966_{-1}$    &   $7.743_{-1}$    & $7.577_{-1}$       &   $7.507_{-1}$     & $7.468_{-1}$     & $7.392_{-1}$ & $7.384_{-1}$  & $7.382_{-1}$ \\ \bottomrule
\end{tabular}
\label{table1_}
\end{table}

\subsection{The details of experiment settings in Section~\ref{sec4}} \label{app:F:3}
In this section, we detail the experiment settings in Section~\ref{sec4}. 

\paragraph{PCA.} We restate the PCA problem as follows for convenience:
\begin{align} \label{prob:pca}	
	\min_{X\in \mathrm{St}(r,d)} F(X):=\frac{1}{N}\sum_{i=1}^Nf_i(X), \quad \hbox{ with } f_i(X) = -\frac{1}{S}\sum_{j=1}^N\mathrm{tr}(X^T(Z_{ij}Z_{ij}^T)X),
\end{align}
where $\mathrm{St}(r,d)=\{X\in \mathbb{R}^{d\times r}: X^TX=I_r\}$ is the Stiefel manifold, $\mathcal{D}_i=\{Z_{i1},\dots,Z_{iS}\}\subseteq \mathbb{R}^{d\times p}$ is the local dataset held by agent $i$, $\forall i\in[N]$. 

For the Stiefel manifold $\mathrm{St}(r,d)$, we view it as a Riemannian manifold embedded in $\mathbb{R}^{d\times r}$. Thus the Riemannian metric is chosen as $\left<U,V\right>_{X}=\left<U,V\right>_{\mathrm{F}}$ for all $X\in \mathrm{St}(r,d) $ and $U,V\in \mathrm{T}_{X}\mathrm{St}(r,d)$. The retraction is the qr-retraction~\cite{AMS08} and the vector transport is given via the projection, i.e., $\mathcal{T}_{V}U=\mathcal{P}_{\mathrm{R}^{\mathrm{qr}}_X(V)}(U)$. 
In theory, RFedAvg and RFedSVRG~\cite{LM23} require the exponential mapping, its inverse, and parallel transport. But on the Stiefel manifold, the last two operators have no closed-form expressions. Thus we use retraction, its inverse, and vector transport to replace them. 

For the synthetic data, we set $p=1$ and generate the local datesets by setting $[Z_{i1},\dots,Z_{iS}]=Z_i$ drawn from the Gaussian distribution $Z_i\sim \mathcal{N}(0,\frac{i}{N})$. In experiment, all parameters are set as $(r,d)=(5,100)$, $(N,S)=(40,100)$, $\alpha = 6\times 10^{-3}$, $B=0.5S$, $K=5$, and $p_i\sim \mathrm{U}(0,1)$. 

For CIFAR10 dataset, whose training dataset contains 50000 RGB images with size $32\times 32$ of each channel, it is also shuffled following the way of~\cite{MMRHA17} such that the local datasets are non-I.I.D. (see Figure~\ref{fig:NumExp:PCA-21} below). 
In experiment, we flatten each image into a vector in $\mathbb{R}^{3072}$, and thus each local data point $Z_{ij}$ is inside $\mathbb{R}^{3072}$. The other parameters are set as $(r,d)=(4,3072)$, $(N,S)=(50,1000)$, $\alpha = 3\times 10^{-5}$, $B=0.5S$, $K=5$, and $p_i\sim \mathrm{U}(0,1)$. 
\begin{figure}[ht]
\centering
\subfigure[Local dataset distributions]{
\includegraphics[width=0.55\textwidth]{./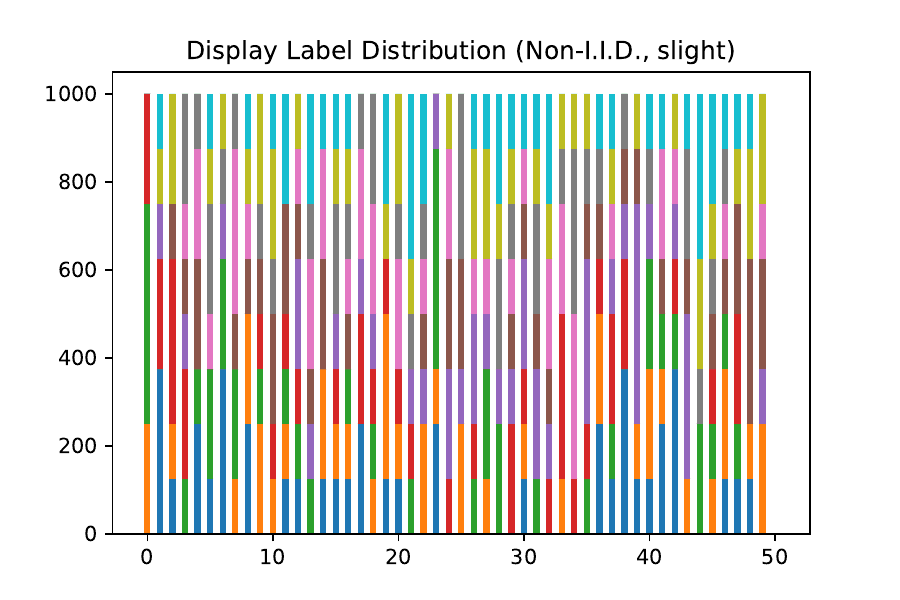}
}
\caption{Local dataset distributions of the CIFAR10 dataset}
\label{fig:NumExp:PCA-21}
\end{figure}

\paragraph{HSP.} Given a set of training pairs $\mathcal{D}=\{\mathcal{D}_i\}_{i=1}^N=\{\{(w_{i,j},y_{i,j})\}_{j=1}^S\}_{i=1}^N$, where $w_{i,j}\in \mathrm{R}^r$ is the feature and $y_{i,j}\in \mathcal{H}^d$ is the hyperbolic embedding of the class of $w_{i,j}$. Then for a test sample $w$, the task of HPS is to predict its hyperbolic embeddings by solving the following problem 	
\[
	\argmin_{x\in \mathcal{H}^d} F(x):=\frac{1}{N}\sum_{i=1}^Nf_i(x), \hbox{ with } f_i(x) = \frac{1}{S}\sum_{i=1}^S a_{i,j}(\omega) \mathrm{dist}^2(x,y_{i,j})
\]
where the hyperbolic manifold $\mathcal{H}^d$ is characterized via the Lorentz hyperbolic model $\mathcal{H}^d:=\{x\in \mathbb{R}^{d+1}: \left<x,y\right>_{\mathcal{L}}=-1 \}$ with $\left<x,x\right>_{\mathcal{L}}=x^Ty-2x_1y_1$, $a_1(w)^T=(a_{i,1}(w),\dots,a_{i,S}(w))^T\in \mathbb{R}^{S}$ is a pre-given constant vector related to $w$, and $\mathrm{dist}(\cdot,\cdot):\mathcal{M}\times \mathcal{M} \rightarrow \mathbb{R}$ is the Riemannian distance. A commonly used option of $a_i(w)$ is computed by $a_i(w)=(K_i+\gamma I)^{-1}K_{i,w}$, where $\gamma$ is the regularization parameter, and $K_i\in \mathbb{R}^{S\times S}$ and $K_{i,w}\in \mathbb{R}^{S}$ are given by $(K_i)_{l,h}=k(w_{i,l},w_{i,h})$ and $(K_{i,w})_{j}=k(w_{i,j},w)$ for a raial basis function (RBF) kernel $k(w,w')=\exp(-\|w-w'\|_2^2/(2\nu)^2)$ with a constant $\nu>0$.

The WordNet dataset~\cite{Mil95} is used to conduct the experiment of inferring hyperbolic embeddings. Following~\cite{NK17}, the pretrained hyperbolic embeddings on $\mathcal{H}^2$ of the mammals subtree with the transitive closure containing $n = 1180$ nodes (words) and $6540$ edges (hierarchies) are used.~\footnote{It is referred to website \href{https://github.com/facebookresearch/poincare-embeddings}{https://github.com/facebookresearch/poincare-embeddings}.} The features are stemmed from Laplacian eigenmap~\cite{BN03} to dimension $r=3$ of the adjacency matrix formed by the edges. In other words, we obtained $\{(w_i,y_i)\}_{i=1}^n\subset \mathbb{R}^3\times \mathcal{H}^2$. This setting is in line with the work in~\cite{HMJG22}. In the experiments, the word ``primate'' is selected as the test sample, and the remainder is used to train. Therefore, the hyperbolic embedding of the word ``primate'' is known and is viewed as the true embedding, i.e., $x_{\mathrm{true}}$. For other parameters, they are set as $(N,S)=(9,131)$, $\alpha=6\times 10^{-2}$, $B=0.5S$, $K=5$, $p_i\sim \mathrm{U}(0,1)$, and $(\gamma,\nu)=(10^{-5}, 0.3)$. 

\paragraph{FMC.} Given a set of training SPD matrices $\mathcal{D}:=\{\mathcal{D}_i\}_{i=1}^N=\{\{X_{i,j}\}_{j=1}^S\}_{i=1}^N$, where $\{X_{i,j}\}^S_{j=1} \subseteq \mathcal{S}_{++}^N:=\{X\in \mathbb{R}^{N\times N}: X^T=X, X\succ 0\}$, the FMC of these SPD matrices is the solution to the following problem 
\[
	\argmin_{X\in \mathcal{S}_{++}^N} F(X) :=\frac{1}{N} \sum_{i=1}^N f_i(X), \hbox{ with } f_i(X) = \frac{1}{S} \sum_{j=1}^S \mathrm{dist}^2(X,X_{i,j}),
\]
where $\mathrm{dist}(X,Y)=\|\mathrm{logm}(X^{-1/2}X_{i,j}X^{-1/2})\|_{\mathrm{F}}$ with $\mathrm{logm}(\cdot)$ the principal matrix logarithm is the Riemannian distance. 

The PATHMNIST dataset~\cite{YSWL23} consists of 89996 RGB images and we transform each image into a $9\times 9$ SPD matrix by the covariance descriptor~\cite{TPM06}. In the experiment, we randomly selects $20000$ images to construct the training dataset. 
The parameters are set as $(N,S)=(50,400)$, $\alpha = 0.01$, $B=0.5S$, $K=5$, and $p_i\sim \mathrm{U}(0,1)$.


\section{Preliminaries on Riemannian optimization} \label{app:B}

In this section, we briefly review the basic ingredients for Riemannian optimization, which are drawn from the standard literature, e.g.,~\cite{Boo75,AMS08}. Let $\mathcal{M}$ be a $d$-dimensional Riemannian manifold equipped with a Riemannian metric $\left<\cdot,\cdot\right>: (\eta_x,\zeta_x)\mapsto \left<\eta_x,\zeta_x\right>_x\in \mathbb{R}$ for any $x\in \mathcal{M},\eta_x,\zeta_x\in \mathrm{T}_{x}\mathcal{M}$ (when it is clear in the context, we omit the subscript and write $\left<\eta,\zeta\right>$ for short). For all $x\in \mathcal{M}$, the tangent space $\mathrm{T}_{x}\mathcal{M}$ is a $d$-dimensional linear space. The norm induced by the Riemannian metric in the tangent space $\mathrm{T}_{x}\mathcal{M}$ is $\|\eta\|=\sqrt{\left<\eta,\eta\right>}$ for all $\eta\in \mathrm{T}_{x}\mathcal{M}$. An open ball centered at $\eta\in \mathrm{T}_x \mathcal{M}$ with radius $r$ in $\mathrm{T}_{x}\mathcal{M}$ is denoted by $\mathbb{B}(\eta,r)=\{\zeta\in \mathrm{T}_x \mathcal{M}:\|\zeta-\eta\|<r\}$. The union of all tangent spaces is tangent bundle, denoted by $\mathrm{T}\mathcal{M}$. A vector field is a mapping which maps from $\mathcal{M}$ to $\mathrm{T}\mathcal{M}$, formally defined by $\eta: \mathcal{M}\rightarrow \mathrm{T}\mathcal{M}: x \mapsto \eta_x \in \mathrm{T}_{x}\mathcal{M}$. Given a differentiable function $f :\mathcal{M} \rightarrow \mathbb{R}$, the Riemannian gradient of $f$, denoted by $\mathrm{grad}f$, is a vector field such that for any $x\in \mathcal{M}$, $\mathrm{grad}f(x)$ is the unique vector satisfying $\mathrm{D}f(x)[\eta]=\left<\mathrm{grad}f(x),\eta\right>$ for any $\eta\in \mathrm{T}_{x}\mathcal{M}$, where $\mathrm{D}f(x)[\eta]$ is the directional derivative of $f$ at $x$ along $\eta$. 

A critical concept in Riemannian optimization is retraction, which defines a smooth mapping, denoted by $\mathrm{R}$, from the tangent bundle to the manifold, i.e., $\mathrm{R}: \mathrm{T}\mathcal{M}\rightarrow \mathcal{M}$, satisfying
\begin{itemize}
	\item[1.] $\mathrm{R}(0_x) = x$ for all $x\in \mathcal{M}$, where $0_x$ is the origin of $\mathrm{T}_{x}\mathcal{M}$;
	\item[2.] $\mathrm{D}\mathrm{R}(0_x)[\eta]=\eta$ for all $\eta \in \mathrm{T}_x \mathcal{M}$, which implies that $\mathrm{D}\mathrm{R}(0_x)=\mathrm{id}_{\mathrm{T}_x \mathcal{M}}$ with $\mathrm{id}_{\mathrm{T}_x \mathcal{M}}$ being the identity in $\mathrm{T}_{x}\mathcal{M}$.  
\end{itemize}
When restricted to $\mathrm{T}_x \mathcal{M}$, we denote $\mathrm{R}$ by $\mathrm{R}_x$, i.e., $\mathrm{R}_x=\mathrm{R}\mid_{\mathrm{T}_x\mathcal{M}}$. 
Note that the domain of $\mathrm{R}$ does not need to be the whole tangent bundle. In practice, it is usually the case. In this paper, we always assume that $\mathrm{R}$ is well-defined whenever needed. A special retraction is the exponential mapping, dented by $\mathrm{Exp}$, satisfying $\mathrm{Exp}_x(\eta_x)=\gamma(1)$ where $\gamma$ is the geodesic such that $\gamma(0)=x$ and $\gamma'(0)=\eta_x$. Geodesic is the generalization of straight line in the Euclidean setting to the Riemannian setting, and naturally the exponential mapping  is the generalization of addition to the Riemannian setting. Additionally, retraction is a first-order approximation to the exponential mapping. A $r$-totally retractive set $\mathcal{W}$ is a subset of $\mathcal{M}$ such that for any $y\in \mathcal{W}$, it holds that $\mathcal{W}\subseteq \mathrm{R}_y(\mathbb{B}(0_y,r))$ and $\mathrm{R}_y$ is a diffeomorphism on $\mathbb{B}(0_y,r)$. Hence, $\mathrm{R}_x^{-1}(y)$ is well-defined, whenever $x,y\in \mathcal{W}$.

For our RFedPP, another essential concept is vector transport, denoted by $\mathcal{T}$, which is usually associated with a retraction $\mathrm{R}$. 
Given a retraction $\mathrm{R}$, a vector transport associated with $\mathrm{R}$ maps from $\mathrm{T}\mathcal{M}\oplus \mathrm{T}\mathcal{M}$, the Whitney sum, to $\mathrm{T}\mathcal{M}$, i.e., $\mathcal{T}: \mathrm{T}\mathcal{M}\oplus \mathrm{T}\mathcal{M} \rightarrow \mathrm{T}\mathcal{M}$, and satisfies that for any $(x,\eta_x)\in \mathrm{domain}(\mathrm{R})$ and all $\zeta_x\in \mathrm{T}_{x}\mathcal{M}$, the followings hold that
\begin{itemize}
	\item[1.] $\mathcal{T}_{\eta_x}(\zeta_x)\in \mathrm{T}_{\mathrm{R}(\eta_x)}\mathcal{M}$; 
        \item[2.] $\mathcal{T}_{0_x}\zeta_x=\zeta_x$; 
	\item[3.] $\mathcal{T}_{\eta_x}$ is linear, i.e., for all $a_1,a_2\in \mathbb{R}$ and $\xi_x,\zeta_x\in \mathrm{T}_{x}\mathcal{M}$, it holds that $\mathcal{T}_{\eta_x}(a_1\xi_x+a_2\zeta_x)=a_1 \mathcal{T}_{\eta_x}(\xi_x)+a_2\mathcal{T}_{\eta_x}(\zeta_x)$. 
\end{itemize}
We say $\mathcal{T}$ is isometric if for any $(x,\eta_x)\in \mathrm{domain}(\mathrm{R})$, $\xi_x,\zeta_x\in \mathrm{T}_x \mathcal{M}$, it satisfies $\left<\mathcal{T}_{\eta_x}(\xi_x),\mathcal{T}_{\eta_x}(\zeta_x)\right>_{\mathrm{R}(\eta_x)}=\left<\xi_x,\zeta_x\right>_x$, which implies that $\|\mathcal{T}_{\eta_x}(\zeta_x)\|=\|\zeta_x\|$.  
An important vector transport is the parallel transport, which is isometric; refer to~\cite{AMS08,Bou23} for the rigorous definition.  

In the Euclidean setting, the convergence analyses of FedAvg are established under the assumption that $F$ is $L$-smooth, where a continuously differentiable function $f: \mathbb{R}^n \rightarrow \mathbb{R}$ is said $L$-smooth if 
\[
	\|\nabla f(x) - \nabla f(y)\| \le L \|x-y\| \; \forall x,y\in \mathbb{R}^n,
\] 
in which case we have
\[
 f(y) \le f(x) + \left<\nabla f(x),y-x\right> + \frac{L}{2}\|y-x\|^2.
\]
Both properties above are critical in the analyses of FedAvg. 
Similar assumptions are made in the Riemannian setting for the analysis of the proposed RFedAGS; see Definitions~\ref{AppA:def1}~\cite{HAG18} and~\ref{AppA:def1}~\cite{HW22}. The first one is called $L$-Lipschitz continuously differentiable (Definitions~\ref{AppA:def1}) and the second one is called $L$-retraction-smooth (Definitions~\ref{AppA:def2}). 

\begin{definition}[$L$-Lipschitz continuous differentiability] \label{AppA:def1}
	Let $\mathcal{T}$ be a vector transport associated with a retraction $\mathrm{R}$. A function $f:\mathcal{M} \rightarrow \mathbb{R}$ is said $L$-Lipschitz continuous differentiable with respect to $\mathcal{T}$ on $\mathcal{U}\subseteq \mathcal{M}$ if there exists a constant $L>0$ such that 
\[
	\|\mathcal{T}_{\eta}(\mathrm{grad}f(y)) - \mathrm{grad}f(x)\| \le L\|\eta\| 
\]
for all $x\in \mathcal{U}$ and $\eta \in \mathrm{T}_x \mathcal{M}$ satisfying $y=\mathrm{R}_x(\eta)$. 
\end{definition}

\begin{definition}[$L$-retracton-smoothness]\label{AppA:def2}
	A function $f: \mathcal{M} \rightarrow \mathbb{R}$ is called $L$-retraction-smooth with respect to a retraction $\mathbb{R}$ in $\mathcal{N}\subseteq \mathcal{M}$ if for any $x\in \mathcal{N}$ and any $\mathcal{N}_x \subseteq \mathrm{T}_x\mathcal{M}$ satisfying $\mathrm{R}_x(\mathcal{N}_x)\subseteq \mathcal{N}$, it holds that
	\[
		f(\mathrm{R}_x(\eta)) \le f(x) + \left<\mathrm{grad}f(x),\eta\right> + \frac{L}{2}\|\eta\|^2,
	\]
	for all $\eta\in \mathcal{N}_x$. 
\end{definition}
A function which is $L$-Lipschitz continuously differentiable is not necessarily $L$-retraction-smoooth, however it is the case in the Euclidean setting. It should be highlighted that there exist some cases where $L$-Lipschitz continuous differentiability implies also $L$-retraction smoothness~\cite{HAG18,BAC19,Bou23}. 

We next review convexity and strongly convexity in the Riemannian setting~\cite{HW22}. 

\begin{definition}[Strongly retraction-convex, retraction-convex] \label{AppA:def3}
A function $f: \mathcal{M} \rightarrow \mathbb{R}$ is called $\mu$-strongly retraction-convex with respect to a retraction $\mathrm{R}$ in $\mathcal{N}\subseteq \mathcal{M}$ if for any $x\in \mathcal{N}$ and any $\mathcal{N}_x \subseteq \mathrm{T}_{x}\mathcal{M}$ satisfying $\mathrm{R}_x(\mathcal{N}_x)\subseteq \mathcal{N}$, there exist a constant $\mu>0$ and a tangent vector $\zeta \in \mathrm{T}_{x}\mathcal{M}$ such that $f_x=f\circ \mathrm{R}_x$ satisfies
\[
	f_x(\eta) \ge f_x(\xi) + \left<\zeta,\eta - \xi\right> + \frac{\mu}{2}\|\eta-\xi\|^2 \;\forall \eta,\xi \in \mathcal{N}_x.
\]
In particular, if $\mu=0$, we call $f$ retraction-convex with respect to $\mathrm{R}$ in $\mathcal{N}$. 
\end{definition}
Note that $\zeta = \mathrm{grad} f_x (\xi )$ if $f$ is differentiable; otherwise, $\zeta$ is any Riemannian subgradient of $f_x$ at $\xi$. In literature, convexity has been studied based on geodesic; see, e.g.,~\cite{FO02,ZS16}, in which case a function $f: \mathcal{M} \rightarrow \mathbb{R}$ is called geodesic convex, if for any $x,y\in \mathcal{M}$, there exists a tangent vector $\zeta_x\in \mathrm{T}_{x}\mathcal{M}$ such that $f(y)\ge f(x) + \left<\zeta_x,\mathrm{Exp}_x^{-1}(y)\right>$. It can be verified if taking $\xi=0$ and exponential mapping as the retraction in Definition~\ref{AppA:def3}, then retraction-convexity reduces to geodesic convexity.

We end this section with an introduction to the concept of $\epsilon$-stationary points/solutions.

\begin{definition} \label{sec3:def1}
	We say that $x_T \in \mathcal{M}$, the output from Algorithm~\ref{alg:RFedAGS}, is an $\epsilon$-stationary point of Problem~(\ref{sec1:eq01}) if it holds that $\mathbb{E}[\|\mathrm{grad}F(x_T)\|^2]\le \epsilon$, or is an $\epsilon$-solution if it holds that $\mathbb{E}[F({x}_T)] - F(x^*) \le \epsilon$, where $x^*\in \argmin_{x\in \mathcal{M}}F(x)$.
\end{definition}


\section{Additional Discussions} \label{app:C}

\subsection{Discussions for Assumptions}
Assumptions~\ref{sec3:ass1}-\ref{sec3:ass7} are standard for Riemannian stochastic gradient-based methods. 
Assumptions~\ref{sec3:ass1} imposes requirements for the retraction under consideration to be $C^2$ and the vector transport under consideration to be continuous and bounded from above. These requirements are fairly standard in Riemannian optimization. Note that the boundedness for vector transport can be achieved by requiring isometricness, in which case we have $\|\mathcal{T}_{\eta_x}(\zeta_x)\| = \|\zeta_x\|$, implying $\Upsilon=1$. In fact, a lots of papers do have such requirements, e.g.,~\cite{SKM19,LM23}. Additionally, if the Riemannian manifold $\mathcal{M}$ is a submanifold embedded in a Euclidean space and equipped with the inner product as its Riemannian metric, then an option for vector transport is based on the orthogonal operation onto the tangent space, i.e., $\mathcal{T}_{\eta_x}(\zeta_x)=\mathcal{P}_{\mathrm{R}(\eta_x)}(\zeta_x)$ with $\mathcal{P}_x(u)=\argmin_{v\in \mathrm{T}_{x}\mathcal{M}}\|v-u\|_{\mathrm{F}}^2$, in which case by the nonexpansivity of the orthogonal projection we have $\|\mathcal{T}_{\eta_x}(\zeta_x)\|\le \|\zeta_x\|$, also implying $\Upsilon = 1$. 

In the deterministic optimization, the compactness of the sublevel set of the objective function is required to ensure that the iterates generated by the algorithms which are monotonically decreasing are still located in that compact set. However, in the stochastic setting, it is difficult to ensure that the iterates generated by the algorithms all fall within the sublevel set since the algorithms are not necessarily monotonically decreasing, and thus, it is not sufficient to require the sublevel set to be compact under stochastic optimization. 
In this case, Assumption~\ref{sec3:ass2} becomes a commonly used choice in Riemannian stochastic optimization; see, e.g.,~\cite{Bon13,ZS16,TFBJ18,SKM19,HG21,LM23}. 
For some manifolds that are compact themself, e.g., the Stiefel manifold and the Grassmann manifold, the compactness assumption naturally holds.
Moreover, in all experiments we conducted, it is observed that the generated iterates $x_t$, with $t\ge 1$, fall into the sublevel set $\{x\in \mathcal{M}: f(x) \le f(x_1)\}$. 

Assumptions~\ref{sec3:ass6} and~\ref{sec3:ass7} impose requirements on the first- and second-order moments for the local stochastic gradient estimator, which are necessary for Riemannian/Euclidean stochastic gradient-based methods. 
In the analyses for Euclidean federated learning algorithms, majority of works make extra assumptions for addressing the heterogeneity data. These assumptions essentially require that the divergence between local and global gradients is bounded, i.e., there exists a constant $\sigma>0$ such that for all~$x$, 
\[
	\|\nabla f_i(x) - \nabla F(x)\|^2 \le \sigma^2. 
\]
In our analyses for the proposed RFedAGS, we do not explicitly make the similar assumption, since Assumption~\ref{sec3:ass2} implies the counterpart requirement. Indeed, under Assumption~\ref{sec3:ass2}, there exists a constant $P>0$, such that $\|\mathrm{grad}f_i(x)\|\le P$ and $\|\mathrm{grad}F(x)\|\le P$ for all $i\in [N]$ and $x\in \mathcal{W}$. Hence, it holds that 
\[
	\|\mathrm{grad}f_i(x)-\mathrm{grad}F(x)\|^2 \le 2\|\mathrm{grad}f_i(x)\|^2 + 2\|\mathrm{grad}F(x)\|^2 \le 4P^2. 
\]

Assumption~\ref{sec3:ass8} imposes the requirement that the approximate probabilities are how close to the true probabilities. As discussed in Section~\ref{sec3:subsec4}, when using frequencies as the approximation, this assumption holds with high probability. Numerically, the reported results show that the performance using frequencies is comparable to the case using true probabilities. 
We note that in the fixed step size case, existing work~\cite{WJ24} also makes an equivalent assumption. The difference lies in that the assumption in~\cite{WJ24} only considers fixed step size cases, but Assumption~\ref{sec3:ass8} more finely encompasses the cases of decaying step sizes. 

In summary, except Assumption~\ref{sec3:ass8} that aims to address the arbitrary partial participation, there exists no assumption beyond those made for Riemannian (stochastic) optimization and federated learning.  
In theory, the proposed RFedAGS is the first algorithm that can simultaneously address the challenges caused by the partial participation and the heterogeneity data settings. The partial participation under consideration allows arbitrary participation which is more practical than the commonly-countered participation scheme based on random sampling. Even without the Riemannian manifold constraint, i.e., $\mathcal{M}=\mathbb{R}^n$, the proposed RFedAGS can reduce to one proposed in~\cite{WJ24}. This paper establishes the convergence propoerties of RFedAGS under both the decaying (see Theorems~\ref{sec3:th1},~\ref{sec3:th2}, and~\ref{sec3:th3}) and fixed (see Theorems~\ref{sec3:th4} and \ref{sec3:th5}) step size cases. Under the decaying step size case, global convergence is guaranteed.
 These analyses depend on a vital and non-trivial observation (see Assumption~\ref{sec3:ass8}). 
However, \cite{WJ24} only considered the assumption of the fixed step case, and thus only established convergence under the fixed step size case, which does not ensure global convergence rather only converges to a $\epsilon$-stationary point. 

\subsection{Discussions for Implementations}
In Algortihm~\ref{alg:RFedAGS}, there exists a scenario (called NA) where in certain round of communication no agent participates in communication. We emphasize that this scenario happens with fairly low probability. For example, considering a FL system where $20$ agents participate in communication with probability $p_i=0.1$, $i=1,2,\dots,20$, and $5$ agents participate with probability $p_i=0.5$, $i=21,22,\dots,25$. Then the scenario NA happens only with probability not greater than $0.38\%$. 
For the purpose of robustness, when the scenario NA happens, one option is set $x_{t+1}\gets x_t$ to restart the next round of  local updates.


\section{Proofs of Theorems in Section~\ref{sec3}}
\label{app:D}

\subsection{Supporting lemmas}

If the objective $F$ in Problem~(\ref{sec1:eq01}) is $L_g$-retraction smooth (Assumption~\ref{sec3:ass5}), under Assumption~\ref{sec3:ass1}, it follows that 
\begin{align} \label{eq:05}
	\mathbb{E}_t[F(x_{t+1})] - F(x_t) \le \mathbb{E}_t[\left<\mathrm{grad}F(x_t), \mathrm{R}_{x_t}^{-1}(x_{t+1})\right>] + \frac{L_g}{2} \mathbb{E}_t[\| \mathrm{R}_{x_t}^{-1}(x_{t+1}) \|^2].
\end{align}
Without considering the arbitrary participation, recalling~(\ref{TM}}) and~(\ref{AGS-RS}), we have 
\begin{align} 
	\mathrm{Exp}_{x_{t}}^{-1}(x_{t+1}) &= \frac{1}{S}\sum_{j\in \mathcal{S}_t}\mathrm{Exp}_{x_t}^{-1}(x_{t,K}^j), \hbox{ and } \label{eq:TM}\\
	\mathrm{R}_{x_t}^{-1}(x_{t+1}) &= - \alpha_t \frac{1}{S} \sum_{j\in \mathcal{S}_t}  \sum_{k=0}^{K-1} \mathcal{T}_{\tilde{\eta}_{t,k}^j}\left( \frac{1}{B_t} \sum_{b\in \mathcal{B}_{t,k}^j} f_j(x_{t,k}^j;\xi_{t,k,b}^j) \right). \label{eq:AGS-RS} 
\end{align}
When $K>1$ and $S>1$, from the increment of parameters of~(\ref{TM}) it follows that analyzing the upper bounds of the two terms in the right-hand side of~(\ref{eq:05}) is fairly challenging, since the nonlinearity of exponential and its inverse leads to difficulty expand~(\ref{eq:TM}) into the desired one involved gradient information. 
However, the form of (\ref{eq:AGS-RS}) is very similar to the Euclidean version and thus significantly address the issue.

Lemmas~\ref{lemm:1} together with~\ref{lemm:2} have provided an upper bound for the first term in the right-hand side of~(\ref{eq:05}). 
\begin{lemma} \label{lemm:1}
	Under Assumptions~\ref{sec3:ass1}-\ref{sec3:ass5}, at the $t$-th outer iteration of Algorithm~\ref{alg:RFedAGS} with a stepsize ${\alpha}_{t}$ and a batchsize ${B}_{t}$, we have that 
\begin{align} 
& \mathbb{E}_t[\left< \mathrm{grad}F( {x}_{t}),  \mathrm{R}_{ {x}_{t}}^{-1}({x}_{t+1}) \right>]  \le - \frac{\varpi\alpha_{t}K}{2}\|\mathrm{grad}F(x_{t})\|^{2}  + {\varpi\alpha_tL_f^2\delta^2_1}\sum_{k=0}^{K-1}\mathbb{E}_t[\|\mathrm{R}_{x_t}^{-1}(x_{t,k}^j)\|^2]  \nonumber \\ 
& \quad + {\varpi\alpha_t^2KGP^2\delta_2^2} - \frac{\varpi \alpha_t}{2}\sum_{k=0}^{K-1} \mathbb{E} \left[\left\| \sum_{j=1}^N \frac{p_j}{q_t^jN} \mathcal{T}_{\tilde{\eta}_{t,k}^j}(\mathrm{grad}f_j(x_{t,k}^j))  \right\|^2\right], \label{lemm:1:1} 
\end{align}
where $\delta_{1}^{2}=\max_{t\ge 1}\left\{ \frac{1}{N}\sum_{j=1}^{N} \left(\frac{p_{j}}{q_{t}^{j}}\right)^{2}\right\}$ and $\delta_{2}^{2}=\sum_{j=1}^{N}\frac{p_{j}^{2}}{N}$. 
\end{lemma}
\begin{proof}[Proof of Lemma~\ref{lemm:1}]
	On the one hand, we have 
\begin{align} 
& \mathbb{E}_t[\left< \mathrm{grad}F(x_{t}),\mathrm{R}_{x_{t}}^{-1}(x_{t+1}) \right>] = \mathbb{E}_t[\left< \mathrm{grad}F(x_{t}), \mathrm{R}_{x_{t}}^{-1}(x_{t+1})+ \varpi\alpha_{t}K\mathrm{grad}F(x_{t}) - \varpi\alpha_{t}K\mathrm{grad}F(x_{t}) \right>] \nonumber \\ 
	&=-\varpi\alpha_{t}K \|\mathrm{grad}F(x_{t})\|^{2} + \mathbb{E}_t[\left<\mathrm{grad}F(x_{t}), \mathrm{R}_{x_{t}}^{-1}(x_{t+1})+ \varpi\alpha_{t}K\mathrm{grad}F(x_{t})\right>],  \label{lemm:1:2}
\end{align}
where for the second term of the equality on the right-hand side, we have 
\begin{align}
&\quad \mathbb{E}_t[\left<\mathrm{grad}F(x_{t}),\mathrm{R}_{x_{t}}^{-1}(x_{t+1})+ \varpi\alpha_{t}K\mathrm{grad}F(x_{t})\right>] \nonumber \\
& =  \mathbb{E}_t\bigg[\bigg< \mathrm{grad}F(x_{t}), - \sum_{j\in \mathcal{S}_t} \frac{\varpi\alpha_{t}}{q_t^jN} \sum_{k=0}^{K-1} \frac{1}{B_t} \sum_{b\in \mathcal{B}_{t,k}^j} \mathcal{T}_{\tilde{\eta}_{t,k}^j}(\mathrm{grad}f_j(x_{t,k}^j;\xi_{t,k,b}^j)) + \varpi\alpha_{t}K\mathrm{grad}F(x_{t})\bigg>\bigg] \nonumber  \\ 
& =  \mathbb{E}_t\bigg[\bigg< \mathrm{grad}F(x_{t}), - \sum_{k=0}^{K-1}\bigg( \sum_{j\in \mathcal{S}_t} \frac{\varpi\alpha_{t}}{q_t^jN B_t} \sum_{b\in \mathcal{B}_{t,k}^j} \mathcal{T}_{\tilde{\eta}_{t,k}^j}(\mathrm{grad}f_j(x_{t,k}^j;\xi_{t,k,b}^j))  + \mathrm{grad}F(x_{t})\bigg)\bigg>\bigg]  \nonumber  \\ 
& =  \sum_{k=0}^{K-1}\mathbb{E}_t\bigg[\bigg< \mathrm{grad}F(x_{t}) , - \varpi\alpha_{t}\sum_{j\in \mathcal{S}_t} \bigg(\frac{1}{q_t^jN} \mathcal{T}_{\tilde{\eta}_{t,k}^j}(\mathrm{grad}f_j(x_{t,k}^j)) - \frac{1}{p_jN} \mathrm{grad}f_j(x_{t})\bigg)   \bigg> \biggl]    \nonumber \\
& =\sum_{k=0}^{K-1}\mathbb{E}_t\bigg[ \bigg< \sqrt{\varpi\alpha_t}\mathrm{grad}F(x_t) , -  \sum_{j\in \mathcal{S}_t}\frac{{\sqrt{\varpi\alpha_t}}}{N}\bigg(\frac{1}{q_t^j}   \mathcal{T}_{\tilde{\eta}_{t,k}^j}(\mathrm{grad}f_j(x_{t,k}^j)) - \frac{1}{p_j} \mathrm{grad}f_j(x_{t})\bigg)  \bigg> \bigg] \nonumber \\ 
& = \sum_{k=0}^{K-1}\mathbb{E}_t\bigg[ \bigg< \sqrt{\varpi\alpha_t}\mathrm{grad}F(x_t) , - \sum_{j=1}^N \mathbb{I}_{\mathcal{S}_t}(j)\frac{{\sqrt{\varpi\alpha_t}}}{N}\bigg(\frac{1}{q_t^j}   \mathcal{T}_{\tilde{\eta}_{t,k}^j}(\mathrm{grad}f_j(x_{t,k}^j)) - \frac{1}{p_j} \mathrm{grad}f_j(x_{t})\bigg)  \bigg> \bigg] \nonumber  \\ 
& = \sum_{k=0}^{K-1}\mathbb{E}_t\bigg[ \bigg< \sqrt{\varpi\alpha_t}\mathrm{grad}F(x_t) , -  \sum_{j=1}^N \frac{p_j{\sqrt{\varpi\alpha_t}}}{N}\bigg(\frac{1}{q_t^j}   \mathcal{T}_{\tilde{\eta}_{t,k}^j}(\mathrm{grad}f_j(x_{t,k}^j)) - \frac{1}{p_j} \mathrm{grad}f_j(x_{t})\bigg)  \bigg> \bigg] \nonumber  \\  
& = \frac{\varpi\alpha_t K}{2}\|\mathrm{grad}F(x_t)\|^2 + \frac{\varpi\alpha_t}{2}\sum_{k=0}^{K-1} \mathbb{E}_t \bigg[\bigg\| \sum_{j=1}^N \frac{p_j}{N} \bigg( \frac{1}{q_t^j}\mathcal{T}_{\tilde{\eta}_{t,k}^j}(\mathrm{grad}f_j(x_{t,k}^j)) - \frac{1}{p_j}\mathrm{grad}f_j(x_t) \bigg) \bigg\|^2\bigg] \nonumber \\  
& \quad - \frac{\varpi \alpha_t}{2}\sum_{k=0}^{K-1} \mathbb{E}_t \bigg[\bigg\| \sum_{j=1}^N \frac{p_j}{q_t^jN} \mathcal{T}_{\tilde{\eta}_{t,k}^j}(\mathrm{grad}f_j(x_{t,k}^j))  \bigg\|^2\bigg], \label{lemm:1:3}
\end{align}
where the third equality follows~(\ref{sec2:eq02}), the sixth equality follows $\mathbb{E}[\mathbb{I}_{\mathcal{S}_{t}}(j)]=p_j$, and the last equality is due to $\left<u,v\right>=\frac{1}{2}(\|u\|^2 + \|v\|^2 - \|u-v\|^2)$. 
Moreover, we note that 
\begin{align}
& \frac{\varpi\alpha_{t}}{2} \sum_{k=0}^{K-1} \mathbb{E}_{t} \bigg[\bigg\| \sum_{j=1}^{N} \frac{p_j}{N} \bigg( \frac{1}{q_{t}^{j}}\mathcal{T}_{\tilde{\eta}_{t,k}^{j}}(\mathrm{grad}f_{j}(x_{t,k}^{j})) - \frac{1}{p_{j}}\mathrm{grad}f_{j}(x_{t}) \bigg) \bigg\|^{2}\bigg] \nonumber \\
& = \frac{\varpi\alpha_{t}}{2} \sum_{k=0}^{K-1} \mathbb{E}_{t} \bigg[\bigg\| \sum_{j=1}^{N} \frac{p_{j}}{N} \bigg( \frac{1}{q_{t}^{j}}  \left(\mathcal{T}_{\tilde{\eta}_{t,k}^{j}}(\mathrm{grad}f_{j}(x_{t,k}^{j})) - \mathrm{grad}f_{j}(x_{t}) \right) \nonumber  \\ 
 & \quad + \bigg(\frac{1}{q_{t}^{j}} - \frac{1}{p_j}\bigg)\mathrm{grad}f_{j}(x_{t}) \bigg) \bigg\|^{2}\bigg] \nonumber  \\ 
& \le \varpi\alpha_t \sum_{k=0}^{K-1} \mathbb{E}_t\bigg[\bigg\|\sum_{j=1}^N\frac{p_j}{q_t^jN}\left(\mathcal{T}_{\tilde{\eta}_{t,k}^j}(\mathrm{grad}f(x_{t,k}^j))-\mathrm{grad}f_j(x_t)\right)\bigg\|^2\bigg] \nonumber \\ 
& \quad  + \varpi\alpha_t\sum_{k=0}^{K-1}\mathbb{E}_t\bigg[\bigg\|\sum_{j=1}^N\frac{p_j}{N}\bigg(\frac{1}{q_t^j}-\frac{1}{p_j}\bigg)\mathrm{grad}f_j(x_t)\bigg\|^2\bigg] \nonumber \\
&\le \frac{\varpi\alpha_{t}L_f^2}{N} \sum_{j=1}^{N}\left(\frac{p_{j}}{q_{t}^{j}}\right)^{2}  \sum_{k=0}^{K-1}\mathbb{E}_{t}[\|\mathrm{R}_{x_{t}}^{-1}(x_{t,k}^{j})\|^{2}] + \sum_{j=1}^{N}\frac{p_{j}^{2}}{N}KGP^{2}\varpi \alpha_{t}^{2} \nonumber \\
&\le \varpi\alpha_{t} \delta_{1}^{2}L_f^2 \sum_{k=0}^{K-1}\mathbb{E}_{t}[\|\mathrm{R}_{x_{t}}^{-1}(x_{t,k}^{j})\|^{2}] + \varpi \alpha_{t}^{2}KGP^{2}\delta_{2}^{2}, \label{lemm:1:4}
\end{align}
where the first inequality follows $\|u+v\|^2\le 2\|u\|^2+2\|v\|^2$, the second inequality is due to the $L_f$-retraction smoothness of $\mathrm{grad}f_j$ for $j=1,2,\dots,N$, Assumption~\ref{sec3:ass2} (which implies that there exists $P>0$ such that $\|\mathrm{grad}f_i(x_t)\|\le P$),~\ref{sec3:ass4}, and~\ref{sec3:ass8}, and the third inequality follows that $\delta_{1}^{2}=\max_{t\ge 1}\left\{ \frac{1}{N}\sum_{i=1}^{N} \left(\frac{p_{i}}{q_{t}^{i}}\right)^{2}\right\}$ and $\delta_{2}^{2}=\sum_{i=1}^{N}\frac{p_{i}^{2}}{N}$. 
Combining~(\ref{lemm:1:2}),~(\ref{lemm:1:3}), and~(\ref{lemm:1:4}) yields the desired result.
\end{proof}

In order to further bound $\mathbb{E}_t[\left< \mathrm{grad}F(x_{t}),  \mathrm{R}_{x_{t}}^{-1}(x_{t+1}) \right>]$ for $K>1$, from Lemma~\ref{lemm:1}, it is necessary to estimate the bounds for $\mathbb{E}_t[\|\mathrm{R}_{x_t}^{-1}(x_{t,k}^j)\|^2]$, as theoretically discussed in Lemma~\ref{lemm:2} which states that for agent $j$, the ``distance'' between the $k$-th local update $x_{t,k}^j$ and the the $t$-th outer iterate $x_{t}$ are controlled by the sum of squared step sizes. Intuitively, the ``distance'' increases as the number of local iterations grows, which is shown in Lemma~\ref{lemm:3}. Meanwhile, it also reflects the drift between an agent's local update parameter $x_{t,k}^j$ and the global parameter $x_t$. A general result is provided in Lemma~\ref{lemm:2-1}.

\begin{lemma} \label{lemm:2-1}
Under Assumptions~\ref{sec3:ass1}-\ref{sec3:ass3},	let $F: \mathcal{M} \rightarrow \mathbb{R}$ be a smooth function. If consider the following update formulation 
	\[
		x_{t,k+1} = \mathrm{R}_{x_{t,k}}(-\alpha_{t,k} \mathcal{G}_F(x_{t,k})),
	\]
	where $\mathcal{G}_F(x_{t,k})$ is an estimator of $\mathrm{grad}F(x_{t,k})$,  $x_t=x_{t,0}$, and $\alpha_{t,\tau}$ is the step size, then it follows that
	\[
		\| \mathrm{R}^{-1}_{x_t}(x_{t,k}) \|^2 \le 2k\sum_{\tau=0}^{k-1} \alpha_{t,\tau}^2 (J^2 + \alpha_{t,\tau}^2 H^2\|\mathcal{G}_F(x_{t,\tau})\|^2)\|\mathcal{G}_F(x_{t,\tau})\|^2,
	\]
	where $J$ and $H$ are two positive constants related with the manifold and retraction.
\end{lemma}

The proof of Lemma~\ref{lemm:2-1} needs the following inverse function theorem on manifolds.
\begin{theorem}[Inverse function theorem] \label{th:inver_fun}
	Given a smooth mapping $P : \mathcal{M} \rightarrow \mathcal{M'}$ defined between two manifolds, if $\mathrm{D}P(x)$ is invertible at some point $x\in \mathcal{M}$, then there exist neighborhoods $\mathcal{U}_x\subseteq \mathcal{M}$ of $x$ and $\mathcal{V}_{P(x)}\subseteq \mathcal{M'}$ of $P(x)$ such that $P|_{\mathcal{U}_x}: \mathcal{U}_x \rightarrow \mathcal{V}_{P(x)}$ is a diffeomorphism. Meanwhile, if $P^{-1}$ is the inverse of $P$ in $\mathcal{U}_x$, then we have $(\mathrm{D}P(x))^{-1}=\mathrm{D}P^{-1}(P(x))$. 
\end{theorem}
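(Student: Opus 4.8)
The plan is to reduce the statement to the classical inverse function theorem on open subsets of $\mathbb{R}^d$ by means of coordinate charts, and then to obtain the derivative identity by differentiating the relation $P^{-1}\circ P=\id$.

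First I would note that invertibility of $\mathrm{D}P(x):\mathrm{T}_x\mathcal{M}\to\mathrm{T}_{P(x)}\mathcal{M}'$ forces $\dim\mathcal{M}=\dim\mathcal{M}'=:d$. Choose a chart $(\mathcal{U},\varphi)$ of $\mathcal{M}$ around $x$ and a chart $(\mathcal{V},\psi)$ of $\mathcal{M}'$ around $P(x)$; by continuity of $P$ we may shrink $\mathcal{U}$ so that $P(\mathcal{U})\subseteq\mathcal{V}$. Set $\hat{P}:=\psi\circ P\circ\varphi^{-1}:\varphi(\mathcal{U})\to\psi(\mathcal{V})$, a smooth map between open subsets of $\mathbb{R}^d$. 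By the chain rule, $\mathrm{D}\hat{P}(\varphi(x))=\mathrm{D}\psi(P(x))\circ\mathrm{D}P(x)\circ(\mathrm{D}\varphi(x))^{-1}$, a composition of linear isomorphisms (chart differentials are isomorphisms), hence an invertible matrix.

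Next I would invoke the Euclidean inverse function theorem for $\hat{P}$ at $\varphi(x)$: there exist open sets $\hat{\mathcal{U}}\ni\varphi(x)$ and $\hat{\mathcal{V}}\ni\psi(P(x))$ with $\hat{P}|_{\hat{\mathcal{U}}}:\hat{\mathcal{U}}\to\hat{\mathcal{V}}$ a diffeomorphism (shrinking $\hat{\mathcal{U}}$ if needed so that $\hat{P}(\hat{\mathcal{U}})=\hat{\mathcal{V}}$). Pulling back, put $\mathcal{U}_x:=\varphi^{-1}(\hat{\mathcal{U}})$ and $\mathcal{V}_{P(x)}:=\psi^{-1}(\hat{\mathcal{V}})$. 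On $\mathcal{U}_x$ one has $P=\psi^{-1}\circ\hat{P}\circ\varphi$, a composition of diffeomorphisms onto $\mathcal{V}_{P(x)}$, so $P|_{\mathcal{U}_x}$ is a diffeomorphism with smooth inverse $P^{-1}=\varphi^{-1}\circ\hat{P}^{-1}\circ\psi$. For the derivative identity, I would differentiate $P^{-1}\circ P=\id_{\mathcal{U}_x}$ at $x$: the chain rule gives $\mathrm{D}P^{-1}(P(x))\circ\mathrm{D}P(x)=\mathrm{I}_{\mathrm{T}_x\mathcal{M}}$, and since $\mathrm{D}P(x)$ is invertible we conclude $\mathrm{D}P^{-1}(P(x))=(\mathrm{D}P(x))^{-1}$.

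The genuine analytic content is entirely in the classical Euclidean inverse function theorem, which I would cite rather than reprove; the only thing requiring care here is the bookkeeping with charts — making the two neighborhoods correspond under $\hat{P}$, and checking that the chart differentials compose so that invertibility transfers from $\mathrm{D}P(x)$ to $\mathrm{D}\hat{P}(\varphi(x))$. That bookkeeping, rather than any real difficulty, is the main thing to get right.
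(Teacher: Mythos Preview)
Your argument is correct and is the standard reduction of the manifold inverse function theorem to the Euclidean one via charts, followed by the chain-rule computation for the derivative of the inverse. There is nothing to compare against, however: the paper does not prove this theorem at all. It is merely restated (with a citation to \cite[Theorem~4.5]{Lee12}) as a tool used in the proof of Lemma~\ref{lemm:3-1}. So your proposal supplies a proof where the paper simply quotes a reference; if anything, you could shorten your write-up to a one-line citation as the paper does, unless an explicit proof is required in your context.
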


Now we are ready to prove Lemma~\ref{lemm:2-1}.
\begin{proof}[Proof of Lemma~\ref{lemm:2-1}]
For two points $x,y\in \mathcal{W}$, consider the map $P_{x,y}=\mathrm{R}_y^{-1}\circ \mathrm{R}_x: \mathrm{T}_x\mathcal{M} \rightarrow \mathrm{T}_y \mathcal{M}: \eta_x \mapsto \mathrm{R}_y^{-1}(\mathrm{R}_x(\eta_x))$, which is defined between two vector spaces. 
According to the chain rule for the differential of a map and the first-order property of the retraction, i.e., $\mathrm{D}\mathrm{R}_x(0_{x})=\mathrm{I}_{\mathrm{T}_x \mathcal{M}}$, we have 
\begin{align*}
	\mathrm{D}P_{x,y}(0_x) &= \mathrm{D}(\mathrm{R}_y^{-1}\circ \mathrm{R}_x)(0_x) = \mathrm{D}\mathrm{R}_y^{-1}(\mathrm{R}_x(0_x))\circ \mathrm{D}\mathrm{R}_x(0_x) \\ 
	&= (\mathrm{D}\mathrm{R}_y(\mathrm{R}_y^{-1}(R_x(0_x))))^{-1}\circ \mathrm{I}_{\mathrm{T}_x \mathcal{M}} = (\mathrm{D}\mathrm{R}_y(\mathrm{R}_y^{-1}(x)))^{-1}=(\Lambda_y^{x})^{-1},
\end{align*}
where the third equality is due to the inverse function Theorem~\ref{th:inver_fun}. 
Noting that the map $P_{\cdot,\cdot}(\cdot)$ is defined in $ \mathrm{T}_{\mathcal{W}} = \{(x,y,\eta):x,y\in \mathcal{W},\eta \in \mathrm{R}_x^{-1}(\mathcal{W}) \}$, which is inside a compact set, according to Assumption~\ref{sec3:ass2}, thus,
smoothness of the retraction implies that the Jacobin and Hessian of $P_{\cdot,\cdot}(\cdot)$ with respect to the third variable is uniformly bounded in norm on the compact set. 
We, thus, use $C_2,C_3>0$ to denote bounds on the operator norms of the Jacobin and Hessian of $P_{\cdot,\cdot}(\cdot)$ with respect to the third variable in the compact set.  
Noting that 
\begin{align*}	
&P_{x_{t,k-1}^j, x_{t}}( \eta_{x_{t,k-1}^j}) = \mathrm{R}_{x_{t}}^{-1}(\mathrm{R}_{{x}_{t,k-1}^j}(\eta_{x_{t,k-1}^j})) = \mathrm{R}^{-1}_{x_{t}}(x_{t,k}^j), \hbox{ and }\\
&P_{{x}_{t,k-1}^{j}, x_{t}}(0) =\mathrm{R}_{x_{t}}^{-1}(\mathrm{R}_{x_{t,k-1}^j}(0))=\mathrm{R}_{x_{t}}^{-1}(x_{t,k-1}^j)
\end{align*}
with $\eta_{x_{t,k-1}^j} =-{\alpha_{t,k-1}}\mathcal{G}_F(x_{t,k-1}^j)$, using a Taylor expansion for $P_{x,y}$ yields
\begin{align*}
 \mathrm{R}^{-1}_{x_{t}}(x_{t,k}^j) & = P_{x_{t,k-1}^j, x_{t}}( -{\alpha_{t,k-1}}\mathcal{G}_F(x_{t,k-1}^j)) \\
	& = P_{{x}_{t,k-1}^{j}, x_{t}}(0) + \mathrm{D}P_{x_{t,k-1}^j, x_{t}}(0) (-{\alpha_{t,k-1}}\mathcal{G}_F(x_{t,k-1}^j)) +  {\alpha_{t,k-1}} e_{t,k-1}^j \\ 
	&= \mathrm{R}_{x_{t}}^{-1}(x_{t,k-1}^j) - {\alpha_{t,k-1}}(\Lambda_{x_{t}}^{x_{t,k-1}^j})^{-1}(\mathcal{G}_F(x_{t,k-1}^j)) + {\alpha_{t,k-1}} e_{t,k-1}^j,
\end{align*}
where $\|e_{t,k-1}^j\|\le {\alpha_{t,k-1}} C_3 \|\mathcal{G}_F(x_{t,k-1}^j)\|^2$.
Hence, we have
\begin{align} \label{lemm:2-1:2}
	\mathrm{R}_{x_{t}}^{-1}(x_{t,k}^j) &= -\sum\limits_{\tau=0}^{k-1}\alpha_{t,\tau}(\Lambda_{x_{t}}^{x_{t,\tau}^{j}})^{-1}(\mathcal{G}_F(x_{t,\tau}^{j}))+\sum\limits_{\tau=0}^{k-1}\alpha_{t,\tau}e_{t,\tau}^{j},
\end{align}
where we used $\mathrm{R}_{x_t}^{-1}(x_{t})=0_{x_t}$. 
Combining~(\ref{lemm:2-1:2}), $ \| (\Lambda_{x_{t}}^{x_{t,k-1}^j})^{-1}(\mathcal{G}_F(x_{t,k-1}^j)\| \le C_2\|\mathcal{G}_F(x_{t,k-1}^j)\|$ (for all $t=1,2,\dots,T-1$ and $k=1,2,\dots,K-1$), and 
$\|\sum_{i=1}^nu_i\|^2\le n\sum_{i=1}^n\|u_i\|^2$ yields the desired result.
\end{proof}

When $\mathcal{M}$ reduces into a Euclidean space, e.g., $\mathcal{M}=\mathbb{R}^d$, the constants in Lemma~\ref{lemm:2-1} will become $C_2=1$ and $C_3=0$. In this case, the results correspondingly becomes 
$
	\|x_{t} - x_{t,k}^j\|^2 \le k\sum_{\tau=0}^{k-1}\alpha_{t,\tau}^2\|\mathcal{G}_F(x_{t,\tau}^j)\|^2.
$
In Lemma~\ref{lemm:2-1}, if one uses $\frac{1}{B_{t}}\sum_{b\in \mathcal{B}_{t,k}^j}\mathrm{grad}f_j(x_{t,k}^j;\xi_{t,k,b}^j)$ to replace $\mathcal{G}_F(x_{t,k}^j)$, then the desired result is obtained in Lemma~\ref{lemm:2}.

\begin{lemma}\label{lemm:2}
Under Assumptions~\ref{sec3:ass1}-\ref{sec3:ass3}, at the $k$-th inner iteration of the  $t$-th outer iteration of Algorithm~\ref{alg:RFedAGS}, for each agent $j\in \mathcal{S}_{t}$ and $k=1,2,\dots,K-1$, we have 
	\begin{align} \label{lemm:2:1} 
		\| \mathrm{R}^{-1}_{ {x}_{t}}(x_{t,k}^j) \|^2 \le 2k^2{\alpha}_{t}^2P^2(J^2+{\alpha}^2_{t}P^2H^2),
	\end{align}
where $P$ is a positive constant such that for all $x\in \mathcal{W}$, $j=1,2,\dots,N$ and $\xi\sim \mathcal{D}_j$, it holds that $\|\mathrm{grad}F(x)\|\le P$, $\|\mathrm{grad}f_j(x)\|\le P$ and $\|\mathrm{grad}f_j(x;\xi)\|\le P$ by Assumption~\ref{sec3:ass2}. 
\end{lemma}

\begin{proof}[Proof of Lemma~\ref{lemm:2}]
	From Algorithm~\ref{alg:RFedAGS}, letting $\mathcal{G}_F(x^j_{t,k}) =  - \frac{1}{B_{t}}\sum_{b\in \mathcal{B}_{t,k}^j}\mathrm{grad}f_j(x_{t,k}^j;\xi_{t,k,b}^j) $, then, we have 
\begin{align*}
	\|\mathcal{G}_F(x_{t,k}^j)\| &= \left\| - \frac{1}{B_{t}}\sum_{b\in \mathcal{B}_{t,k}^j}\mathrm{grad}f_j(x_{t,k}^j;\xi_{t,k,b}^j) \right\| \le \frac{1}{B_{t}}\sum_{b\in \mathcal{B}_{t,k}^j}\|\mathrm{grad}f_j(x_{t,k}^j;\xi_{t,k,b}^j)\| \le P.
\end{align*}
Hence, combining the inequality above and Lemma~\ref{lemm:2-1} gives rise to the desired result~(\ref{lemm:2:1}). 
\end{proof}

Under the same conditions as Lemma~\ref{lemm:1}, plugging~(\ref{lemm:2:1}) into~(\ref{lemm:1:1}) yields
\begin{align}
& \mathbb{E}_t[\left< \mathrm{grad}F( {x}_{t}),  \mathrm{R}_{ {x}_{t}}^{-1}( {x}_{t+1}) \right>]   \le -
\frac{\varpi\alpha_t}{2}\sum_{k=0}^{K-1}\mathbb{E}_t\left[ \left\| \sum_{j=1}^N \frac{p_j}{q_t^jN}\mathcal{T}_{\tilde{\eta}_{t,k}^j}(\mathrm{grad}f_j(x_{t,k}^j)) \right\|^2 \right]  \nonumber  \\  
&  -  \frac{\varpi\alpha_t K}{2} \|\mathrm{grad}F(x_t)\|^2  + {\varpi\alpha_t^2KGP^2\delta_2^2} + \frac{1}{6}{(2K-1)K(K-1)L_f^2\delta^2_1P^2(J^2+\alpha_t^2P^2H^2)\varpi\alpha_t^3}. \label{globalConv:inner_bound}
\end{align}

The next is to bound the second term $\mathbb{E}_t[ \|\mathrm{R}_{x_t}^{-1}(x_{t+1})\|^2 ]$. 
\begin{lemma} \label{lemm:3}
Under Assumptions~\ref{sec3:ass1}-\ref{sec3:ass8}, the iterates $\{x_t\}_{t=1}^{T}$ generated by Algorithm~\ref{alg:RFedAGS} with fixed stepsize $\alpha_t$ and fixed batchsize $B_t$ within parallel inner iterations satisfies 
\begin{align} \label{lemm:3:0}
	\mathbb{E}_t[\|\mathrm{R}_{x_t}^{-1}(x_{t+1})\|^2] &\le \frac{\varpi^{2}\alpha_{t}^{2}\Upsilon^{2}\sigma^{2}_{L}\delta_{3}^{2}K}{B_{t}} + {\varpi^{2}\alpha_{t}^{2}K}\sum_{k=0}^{K-1} \mathbb{E}_{t} \left[\left\| \sum_{j=1}^{N}\frac{p_{j}}{q_{t}^{j}N}\mathcal{T}_{\tilde{\eta}_{t,k}^{j}} ( \mathrm{grad}f_{j}(x_{t,k}^{j}) )  \right\|^{2}\right] \nonumber \\
	& \quad  + {\varpi^{2}\alpha_{t}^{2}P^{2}K^{2}}\delta_{4}^{2}
\end{align}
where $\delta_{3}^{2}=\frac{1}{N^{2}}\sum_{j=1}^{N}\frac{p_{j}}{(q_{t}^{j})^{2}}$ and $\delta_{4}^{2}=\frac{1}{N^{2}}\sum_{j=1}^{N}\frac{p_{j}(1-p_{j})}{(q_{t}^{j})^{2}}$.  
\end{lemma}
\begin{proof}[Proof of Lemma~\ref{lemm:3}]
Let ${x}_{t}$ denote the $t$-th aggregation by the server. 
Then, 
\begin{align*}
& \mathbb{E}_t[\|\mathrm{R}_{x_t}^{-1}(x_{t,k}^j)\|^2] = \varpi^{2} \alpha_{t}^{2} \mathbb{E}_{t} \bigg[ \bigg\|  \sum_{j\in \mathcal{S}_t}\frac{1}{q_{t}^{j}N} \sum_{k=0}^{K-1} \mathcal{T}_{\tilde{\eta}_{t,k}^{j}} \bigg( \frac{1}{B_{t}}\sum_{b\in \mathcal{B}_{t,k}^{j}}\mathrm{grad}f_{j}(x_{t,k}^{j};\xi_{t,k,b}^{j}) \bigg) \bigg\|^{2}\bigg]  \\
& =  \varpi^{2} \alpha_{t}^{2} \mathbb{E}_{t} \bigg[ \bigg\|  \sum_{j=1}^{N}\mathbb{I}_{\mathcal{S}_{t}}(j)\frac{1}{q_{t}^{j}N} \sum_{k=0}^{K-1} \mathcal{T}_{\tilde{\eta}_{t,k}^{j}} \bigg( \frac{1}{B_{t}}\sum_{b\in \mathcal{B}_{t,k}^{j}}\mathrm{grad}f_{j}(x_{t,k}^{j};\xi_{t,k,b}^{j}) \bigg) \bigg\|^{2}\bigg]  \\
& =  {\varpi^{2} \alpha_{t}^{2}} \mathbb{E}_{t} \bigg[ \bigg\|  \sum_{j=1}^{N}\mathbb{I}_{\mathcal{S}_{t}}(j)\frac{1}{q_{t}^{j}N} \sum_{k=0}^{K-1} \mathcal{T}_{\tilde{\eta}_{t,k}^{j}} \bigg( \frac{1}{B_{t}}\sum_{b\in \mathcal{B}_{t,k}^{j}}\mathrm{grad}f_{j}(x_{t,k}^{j};\xi_{t,k,b}^{j}) - \mathrm{grad}f_{j}(x_{t,k}^{j}) \nonumber \\ 
& \quad  + \mathrm{grad}f_{j}(x_{t,k}^{j}) \bigg) \bigg\|^{2}\bigg]  \\ 
&=  {\varpi^{2} \alpha_{t}^{2}} \mathbb{E}_{t} \bigg[ \bigg\|  \sum_{j=1}^{N}\frac{\mathbb{I}_{\mathcal{S}_{t}}(j)}{q_{t}^{j}N} \sum_{k=0}^{K-1} \mathcal{T}_{\tilde{\eta}_{t,k}^{j}} \bigg( \frac{1}{B_{t}}\sum_{b\in \mathcal{B}_{t,k}^{j}}\mathrm{grad}f_{j}(x_{t,k}^{j};\xi_{t,k,b}^{j}) - \mathrm{grad}f_{j}(x_{t,k}^{j}) \bigg) \bigg\|^{2}\bigg]  \\
& \quad + {\varpi^{2} \alpha_{t}^{2}} \mathbb{E}_{t} \bigg[ \bigg\|  \sum_{j=1}^{N}\mathbb{I}_{\mathcal{S}_{t}}(j)\frac{1}{q_{t}^{j}N} \sum_{k=0}^{K-1} \mathcal{T}_{\tilde{\eta}_{t,k}^{j}} \bigg( \mathrm{grad}f_{j}(x_{t,k}^{j}) \bigg) \bigg\|^{2}\bigg] \\
&=  {\varpi^{2} \alpha_{t}^{2}} \mathbb{E}_{t} \bigg[ \bigg\|  \sum_{j=1}^{N}\frac{\mathbb{I}_{\mathcal{S}_{t}}(j)}{q_{t}^{j}N} \sum_{k=0}^{K-1} \mathcal{T}_{\tilde{\eta}_{t,k}^{j}} \bigg( \frac{1}{B_{t}}\sum_{b\in \mathcal{B}_{t,k}^{j}}\mathrm{grad}f_{j}(x_{t,k}^{j};\xi_{t,k,b}^{j}) - \mathrm{grad}f_{j}(x_{t,k}^{j}) \bigg) \bigg\|^{2}\bigg]  \\
& \quad + {\varpi^{2} \alpha_{t}^{2}} \mathbb{E}_{t} \bigg[ \bigg\|  \sum_{j=1}^{N}(\mathbb{I}_{\mathcal{S}_{t}}(j)-p_{j}+p_{j})\frac{1}{q_{t}^{j}N} \sum_{k=0}^{K-1} \mathcal{T}_{\tilde{\eta}_{t,k}^{j}} \bigg( \mathrm{grad}f_{j}(x_{t,k}^{j}) \bigg) \bigg\|^{2}\bigg] \\
&\le \frac{\varpi^{2}\alpha_{t}^{2}\Upsilon^{2} \sigma_{L}^{2}K}{N^{2}B_{t}} \sum_{j=1}^{N}\frac{p_{j}}{(q_{t}^{j})^{2}} + {\varpi^{2}\alpha_{t}^{2}}\mathbb{E}_{t}  \bigg[ \bigg\| \sum_{j=1}^{N}(\mathbb{I}_{\mathcal{S}_{t}}(j)-p_{j})\frac{1}{q_{t}^{j}N}\sum_{k=0}^{K-1}\mathcal{T}_{\tilde{\eta}_{t,k}^{j}}(\mathrm{grad}f_{j}(x_{t,k}^{j}))  \bigg\|^{2} \bigg] \\
& \quad + {\varpi^{2}\alpha_{t}^{2}}\mathbb{E}_{t}  \bigg[ \bigg\| \sum_{j=1}^{N}\frac{p_{j}}{q_{t}^{j}N}\sum_{k=0}^{K-1}\mathcal{T}_{\tilde{\eta}_{t,k}^{j}}(\mathrm{grad}f_{j}(x_{t,k}^{j}))  \bigg\|^{2} \bigg] \\ 
&= \frac{\varpi^{2}\alpha_{t}^{2}\Upsilon^{2}\sigma^{2}_{L}K}{N^{2}B_{t}}\sum_{j=1}^{N}\frac{p_{j}}{(q_{t}^{j})^{2}} + \frac{\varpi^{2}\alpha_{t}^{2}}{N^{2}}\sum_{j=1}^{N}\frac{p_{j}(1-p_{j})}{(q_{t}^{j})^{2}}\mathbb{E}_{t} \bigg[\bigg\|\sum_{k=0}^{K-1}\mathcal{T}_{\tilde{\eta}_{t,k}^{j}}( \mathrm{grad}f_{j}(x_{t,k}^{j}) ) \bigg\|^{2}\bigg] \\
&\quad + {\varpi^{2}\alpha_{t}^{2}}\mathbb{E}_{t}  \bigg[ \bigg\| \sum_{j=1}^{N}\frac{p_{j}}{q_{t}^{j}N}\sum_{k=0}^{K-1}\mathcal{T}_{\tilde{\eta}_{t,k}^{j}}(\mathrm{grad}f_{j}(x_{t,k}^{j}))  \bigg\|^{2} \bigg] \\
&\le \frac{\varpi^{2}\alpha_{t}^{2}\Upsilon^{2}\sigma^{2}_{L}\delta_{3}^{2}K}{B_{t}} + {\varpi^{2}\alpha_{t}^{2}\Upsilon^2 P^{2}K^{2}}\delta_{4}^{2} + {\varpi^{2}\alpha_{t}^{2}K}\sum_{k=0}^{K-1} \mathbb{E}_{t} \bigg[\bigg\| \sum_{j=1}^{N}\frac{p_{j}}{q_{t}^{j}N}\mathcal{T}_{\tilde{\eta}_{t,k}^{j}} ( \mathrm{grad}f_{j}(x_{t,k}^{j}) )  \bigg\|^{2}\bigg] 
\end{align*}
where the fourth equality follows that 
\begin{align*}	
& \mathbb{E}\left[\sum_{j=1}^N\sum_{k=0}^{K-1}\frac{\mathbb{I}_{\mathcal{S}_{t}}(j)}{q_t^jN}\mathcal{T}_{\tilde{\eta}_{t,k}^j}\bigg(\frac{1}{B_t}\sum_{b\in \mathcal{B}_{t,k}^j}\mathrm{grad}f_j(x_{t,k}^j;\xi_{t,k,b}^j)\bigg)\right] 
 = \sum_{j=1}^N \sum_{k=0}^{K-1}\frac{\mathbb{I}_{\mathcal{S}_{t}}(j)}{q_t^jN}\mathcal{T}_{\tilde{\eta}_{t,k}^j}(\mathrm{grad}f_j(x_{t,k}^j))
\end{align*}
and that $\mathbb{E}[\|u\|^2]=\mathbb{E}[\|u-\mathbb{E}[u]\|^2]+\|\mathbb{E}[u]\|^2$, the first inequality follows that 
\[
\mathbb{E}\bigg[\sum_{j=1}^N\sum_{k=0}^{K-1}\frac{\mathbb{I}_{\mathcal{S}_{t}}(j)}{q_t^jN}\mathcal{T}_{\tilde{\eta}_{t,k}^j} \bigg(\frac{1}{B_t}\sum_{b\in \mathcal{B}_{t,k}^j}\mathrm{grad}f_j(x_{t,k}^j;\xi_{t,k,b}^j)-\mathrm{grad}f_j(x_{t,k}^j) \bigg) \bigg]=0
\]
 and that $\mathbb{E}[\|\sum_{i=1}^nu_i\|^2]=\sum_{i=1}^n \mathbb{E}[\|u_i\|^2]$ with $u_i$ being independent and having zero mean, that $\|\mathcal{T}_{\eta}(\zeta)\|\le \Upsilon$ (Assumption~\ref{sec3:ass1}), and Assumption~\ref{sec3:ass6},
 the sixth equality follows that 
 \begin{align*}
 	\mathbb{E} \left[ \sum_{j=1}^N (\mathbb{I}_{\mathcal{S}_{t}}(j)-p_j)\frac{1}{q_t^jN}\sum_{k=0}^{K-1}\mathcal{T}_{\tilde{\eta}_{t,k}^j}(\mathrm{grad}f_j(x_{t,k}^j))\right]=0, 
 \end{align*}
 and that $\mathbb{E}[(\mathbb{I}_{\mathcal{S}_{t}}(j)-p_j)^2]=p_j(1-p_j)$, 
 and the last inequality follows that $\delta_{3}^{2}=\max_{t\ge 1}\left\{\frac{1}{N^{2}}\sum_{j=1}^{N}\frac{p_{j}}{(q_{t}^{j})^{2}}\right\}$, $\delta_{4}^{2}=\max_{t\ge 1}\left\{\frac{1}{N^{2}}\sum_{j=1}^{N}\frac{p_{j}(1-p_{j})}{(q_{t}^{j})^{2}}\right\}$,
 and $\|\sum_{i=1}^nu_i\|^2\le n\sum_{i=1}^n\|u_i\|^2$. 
\end{proof}

Now we can formally state the descent lemma in the Riemannian FL setting. 
\begin{lemma} \label{lemm:4}
	Under Assumptions~\ref{sec3:ass1}-\ref{sec3:ass8}, we run Algorithm~\ref{alg:RFedAGS} with batch size $B_{t}$ and step sizes $\varpi >0$ and $\{\alpha_t\}$ satisfying 
	\begin{equation}\label{lemm:4:1}		
		1 \ge KL_g\varpi\alpha_t . 
	\end{equation}
	Then, we have 
	\begin{equation}		
	\begin{aligned} \label{lemm:4:2}
		\mathbb{E}_t[F(x_{t+1})] - F(x_t) &\le  -\frac{\varpi\alpha_t K}{2}\|\mathrm{grad}F(x_t)\|^2 + {\varpi\alpha_t^2K}Q(K,B_{t},\alpha_t,\varpi) ,
	\end{aligned}
	\end{equation}
	where $Q(K,B_{t},\alpha_t,\varpi) = (2K-1)(K-1)L_f^2\delta^2_1P^2(J^2+\alpha_t^2P^2H^2)\alpha_t/{6} + GP^2\delta_2^2 + \Upsilon^2 P^{2}\delta_{4}^{2}KL_g\varpi +\frac{L_g\delta^2_3\sigma_L^2\Upsilon^2\varpi}{2B_t} $.
\end{lemma}
\begin{proof}[Proof of Lemma~\ref{lemm:4}]
	By the $L_g$-retraction smoothness of $F$, it follows for $t\ge 1$ that
	\[
		F(x_{t+1}) \le F(x_t) + \left< \mathrm{grad}F(x) , \mathrm{R}_{x_t}^{-1}(x_{t+1})\right> + \frac{L_g}{2}\|\mathrm{R}_{x_t}^{-1}(x_{t+1})\|^2,
	\]
	where the existence of $\mathrm{R}_{x_t}^{-1}(x_{t+1})$ is guaranteed by Assumption~\ref{sec3:ass2}. Taking expectation on both sides over the randomness over the $t$-th outer iteration yields 
	\begin{align} \label{lemm:4:3}
		\mathbb{E}_t[F(x_{t+1})] \le F(x_t) + \mathbb{E}_t[\left< \mathrm{grad}F(x_t), \mathrm{R}_{x_t}^{-1}(x_{t+1}) \right>] + \frac{L_g}{2}\mathbb{E}[\|\mathrm{R}_{x_t}^{-1}(x_{t+1})\|^2].
	\end{align} 
Inequality~\eqref{lemm:4:3} together with Lemmas~\ref{lemm:1},~\ref{lemm:2}, and~\ref{lemm:3} give rise to	
\begin{align} 
	&\mathbb{E}_t[F(x_{t+1})] - F(x_t) \le  - \frac{\varpi \alpha_tK}{2}\|\mathrm{grad}F(x_t)\|^2 + \frac{1}{6}{(2K-1)K(K-1)L_f^2\delta^2_1P^2(J^2+\alpha_t^2P^2H^2)\varpi \alpha_t^3} \nonumber \\ 
	& \quad - \frac{\varpi\alpha_t}{2}(1 - KL_g\varpi \alpha_t)\sum_{k=0}^{K-1}\mathbb{E}_t \left[ \left\| \sum_{j=1}^N\frac{p_j}{q_t^jN}\mathcal{T}_{\tilde{\eta}_{t,k}^j}(\mathrm{grad}f_j(x_{t,k}^j)) \right\|^2 \right] \nonumber \\ 
	& \quad  + {\varpi\alpha_t^2KGP^2\delta_2^2} + {L_g\varpi^{2}\alpha_{t}^{2}\Upsilon^2 P^{2}K^{2}}\delta_{4}^{2} + \frac{KL_g\sigma_L^2\Upsilon^2\delta_3^2\varpi^2\alpha_t^2}{2B_t} . \label{lemm:4:4}
\end{align}
Under Condition~(\ref{lemm:4:1}), the third term on the right-hand side of~(\ref{lemm:4:4}) can be discarded and then we obtain 
\begin{align*}
	\mathbb{E}_t[F(x_{t+1})]  - F(x_t) & \le -\frac{\varpi\alpha_t K}{2}\|\mathrm{grad}F(x_t)\|^2 + {\varpi\alpha_t^2KGP^2 \delta_2^2}  + \frac{KL_g\sigma_L^2\Upsilon^2\delta_3^2\varpi^2\alpha_t^2}{2B_t}  \\ 
	& \quad + {L_g\varpi^{2}\alpha_{t}^{2}\Upsilon^2P^{2}K^{2}}\delta_{4}^{2} + \frac{1}{6}{(2K-1)K(K-1)L_f^2\delta^2_1P^2(J^2+\alpha_t^2P^2H^2)\varpi \alpha_t^3}  \\
	& = -\frac{\varpi\alpha_t K}{2}\|\mathrm{grad}F(x_t)\|^2 + {\varpi\alpha_t^2K}Q(K,B_{t},\alpha_t,\varpi) 
\end{align*}
where $Q(K,B_{t},\alpha_t,\varpi) = (2K-1)(K-1)L_f^2\delta^2_1P^2(J^2+\alpha_t^2P^2H^2)\alpha_t/{6} + GP^2\delta_2^2 + \Upsilon^2P^{2}\delta_{4}^{2}KL_g\varpi +\frac{L_g\delta^2_3\sigma_L^2\Upsilon^2\varpi}{2B_t} $.
\end{proof}

Note that $Q(K,B_t,\alpha_t,\varpi)$ in~(\ref{lemm:4:2}) consists of four error terms: the first one resulted from the agent drift effect and non-I.I.D. setting, the second one brought by the probability approximating, the third one caused by partial participation, and the fourth one caused by the local stochastic gradient. 

\subsection{Proof of Theorem~\ref{sec3:th1}}

Now we are ready to prove Theorem~\ref{sec3:th1}.

\begin{proof}[Theorem~\ref{sec3:th2}]
The second condition in~(\ref{sec3:th1:1}) ensure $\{\alpha_t\}\rightarrow 0$, and thus, without loss of generality, we may assume that $L_gK\varpi \alpha_t\le 1$ for all $t\in \mathbb{N}_+$. Then, it follows from~\ref{lemm:4} that 
\[
	\alpha_t\|\mathrm{grad}F(x_t)\|^2 \le \frac{2 (F(x_{t+1})-\mathbb{E}_t[F(x_{t})])}{K\varpi} + {\alpha_t^2}Q(K,B_t,\alpha_t,\varpi).
\]
Summing the inequality above over $t=1,2,\dots,T$ and taking total expectation yields 
\begin{align*}
	\sum_{t=1}^T \alpha_t \mathbb{E}[\|\mathrm{grad}F(x_t)\|^2] &\le \frac{2\mathbb{E}[F(x_0)-F(x_{T+1})]}{K \varpi} + \sum_{t=1}^T{\alpha_t^2}Q(K, B_t, \alpha_t,\varpi) \\ 
	& \le \frac{2 (F(x_0) - F(x^*))}{K\varpi} + \sum_{t=1}^T{\alpha_t^2}Q(K, B_t, \alpha_t,\varpi). 
\end{align*}
Dividing the both side by $A_T=\sum_{t=1}^T\alpha_t$ results in the bound for the weighted average norm of the squared gradients as follows 
\begin{align} \label{th:1:1} 
\frac{1}{A_T}\sum_{t=1}^T  {\alpha_t}\mathbb{E}[\|\mathrm{grad}F(x_t)\|^2]  
	& \le \frac{2 (F(x_0) - F(x^*))}{K\varpi A_T} + \frac{1}{A_T}\sum_{t=1}^T{\alpha_t^2}Q(K, B_t, \alpha_t,\varpi), 
\end{align} 
which, under Conditions~(\ref{sec3:th1:1}), implies that 
\[
\lim_{T \rightarrow \infty} \frac{1}{A_T} \sum_{t=1}^T {\alpha_t} \mathbb{E}[\|\mathrm{grad}F(x_t)\|^2] = 0.
\]
The desired result follows the fact above. 
\end{proof}

\subsection{Proof of Theorem~\ref{sec3:th2}}
\label{app:D:3}
\begin{proof}[Theorem~\ref{sec3:th2}]
	By the definition of $\alpha_t$, there exists a positive constant $M>0$ such that $\sum_{t=1}^{T}\alpha_t^2$, $\sum_{t=1}^{T}\alpha_t^3$,  $\sum_{t=1}^{T}\alpha_t^4$, $\sum_{t=1}^{T}\alpha_t^5 \le  M$ for all $T\ge 1$. Then, 
	\begin{align}	\label{th:2:1}
		\sum_{t=1}^T\alpha_t^2Q(K,B_{\mathrm{low}},\alpha_t,\varpi) &\le  \frac{1}{6}(2K-1)(K-1)L_f^2\delta^2_1P^2(J^2+P^2H^2)M \nonumber \\ 
	 & \quad + GP^2\delta_2^2M + P^2\delta_4^2KL_g\varpi M+\frac{L_g\delta^2_3\sigma_L^2\Upsilon^2\varpi}{2B_{\mathrm{low}}}M.  
	\end{align}
	On the other hand, 
	\[
		A_T  = \sum_{t=1}^T \frac{\alpha_0}{(\beta+t)^p} \ge \int_{t=1}^{T+1} \frac{\alpha_0}{(\beta+t)^p}\mathrm{d}t = \begin{cases} \alpha_0(\ln(T+1+\beta)-\ln(\beta+1)) & p=1,\\ 
			\frac{\alpha_0}{1-p}((T+1+\beta)^{1-p} - (b+1)^{1-p}) & p\in(1/2,1),
		\end{cases}
	\]
	which gives 
\begin{align}	\label{th:2:2}
	\frac{1}{A_T} \le \begin{cases}
		\frac{1}{\alpha_0(\ln(T+1+\beta)-\ln(\beta+1))  }  & p=1, \\ 
		\frac{1-p}{\alpha_0 ((T+1+\beta)^{1-p} - (b+1)^{1-p})} & p\in (1/2,1).
	\end{cases}
\end{align}
Plugging~(\ref{th:2:1}) and~(\ref{th:2:2}) into~(\ref{th:1:1}) ensures the desired result. 
\end{proof}

In particular, if full agents participate in any round of communication and agents use local full gradient in local updates, implying $G=0$, $\delta_4^2=0$, and $\sigma_L^2=0$, then we have
\[
\sum_{t=1}^{T}\alpha_t^2Q(K,B_{\mathrm{low}},\alpha_t,\varpi)=\frac{1}{6}(2K-1)(K-1)L_f^2\delta_1^2P^2(J^2\sum_{t=1}^T\alpha_t^3+P^2H^2\sum_{t=1}^T\alpha_t^4).
\]
Hence, we can relax the condition for $\alpha_t$ as $\sum_{t=1}^{\infty}\alpha_t=\infty$ and $\sum_{t=1}^{\infty}\alpha_t^3<\infty$. If one takes $\alpha_t=\frac{\alpha_0}{(\beta+t)^{p}}$ with constants $\alpha_0,\beta,p=1/3+a$ and $a\in(0,2/3)$ properly small, it follows that 
\[
\frac{1}{A_T}\sum_{t=1}^T\alpha_t \mathbb{E}_t[\|\mathrm{grad}F(x_t)\|^2] \le \frac{M(a)}{(\beta+T)^{2/3-a}},
\]
where $M(a)$ is a constant depended on $a$. The smaller $a$ the larger $M(a)$.

\subsection{Proof of Theorem~\ref{sec3:th3}}

\begin{proof}[Theorem~\ref{sec3:th3}]
	By Lemma~\ref{lemm:4} and the RPL condition, we have 
\[
	\mathbb{E}_t[F(x_{t+1})] - F(x^*) + (F(x^*) - F(x_t)) \le - \mu \varpi K\alpha_t(F(x_t) - F(x^*)) + {\varpi\alpha_t^2K}Q(K,B_{\mathrm{low}},\alpha_t,\varpi). 
\]
Rearranging this inequality yields 
\begin{align} \label{th:decaying_RPL:5}
	\mathbb{E}[F(x_{t+1})] - F(x^*) \le ( 1 - \mu \varpi K \alpha_t)(\mathbb{E}[F(x_t)] - F(x^*)) + {\varpi\alpha_t^2K}Q(K,B_{\mathrm{low}},\alpha_1,\varpi),
\end{align}
where we take the total expectation on both sides. 
Subsequently, we prove the desired result by induction. For $t=1$, it follows from the definition of $\nu$. Now assume that~(\ref{sec3:th3:3}) holds for $t\ge 1$. Then, from~(\ref{th:decaying_RPL:5}), it follows that 
\begin{align} \label{th:decaying_RPL:6}
	\mathbb{E}[F(x_{t+1})] - F(x^*) &\le \left( 1 - \frac{\beta\mu \varpi K}{\mathfrak{t}} \right) \frac{\nu}{\mathfrak{t}} + \frac{\varpi K\beta^2}{\mathfrak{t}^2}Q(K,B_{\mathrm{low}},\alpha_1,\varpi) \nonumber \\ 
	& = \left( \frac{ \mathfrak{t} - \beta \mu \varpi K }{\mathfrak{t}^2} \right){\nu} +  \frac{\varpi K\beta^2}{\mathfrak{t}^2}Q(K,B_{\mathrm{low}},\alpha_1,\varpi)  \nonumber \\
	&= \left(\frac{\mathfrak{t} - 1}{\mathfrak{t}^2}\right) \nu - \left( \frac{\beta\mu \varpi K - 1 }{\mathfrak{t}^2} \right)\nu + \frac{\varpi K\beta^2}{\mathfrak{t}^2}Q(K,B_{\mathrm{low}},\alpha_1,\varpi) \nonumber \\ 
	& \le \frac{\nu}{\mathfrak{t}+1},
\end{align}
where $\mathfrak{t} = \gamma + t$, the last inequality is due to $- \left( \frac{\beta\mu \kappa K - 1 }{\mathfrak{t}^2} \right)\nu + \frac{\varpi K\beta^2}{\mathfrak{t}^2}Q(K,B_{\mathrm{low}}) \le 0$ by the definition of $\nu$ and $\mathfrak{t}^2 \ge (\mathfrak{t}-1)(\mathfrak{t}+1)$. 

On the other hand, for any two points $x,y\in \mathcal{W}$, it follows from the $L_g$-smoothness of $F$ that  
\[
	F(y) \le F(x) + \left<\mathrm{grad}F(x),\mathrm{R}_x^{-1}(y)\right> + \frac{L_g}{2}\|\mathrm{R}_{x}^{-1}(y)\|^2.
\]
Plugging $y=\mathrm{R}_x(-\frac{1}{L_g}\mathrm{grad}F(x))$ into the inequality above yields 
\[
	F(x^*) \le F(y) \le F(x) - \frac{1}{2L_g}\|\mathrm{grad}F(x)\|^2,
\]
which gives $\frac{1}{2L_g}\|\mathrm{grad}F(x)\|\le F(x) - F(x^*)$. Replacing $x$ with $x_t$ and plugging the replaced inequality into Inequality~(\ref{th:decaying_RPL:6}) yields 
\[
	\mathbb{E}[\|\mathrm{grad}F(x_t)\|^2]\le \frac{2L_g\nu}{\gamma+t},
\]
which completes the proof.
\end{proof}

\subsection{Proof of Theorem~\ref{sec3:th4}}

Here we rewrite Theorem~\ref{sec3:th5} as the following more complete statement. 

\begin{theorem}  \label{sec3:th4_}
Suppose that Assumptions~\ref{sec3:ass1}-\ref{sec3:ass8} hold. We run Algorithm~\ref{alg:RFedAGS} with a fixed global step size $\varpi$, a fixed batch size $B$, and a fixed number of local updates $K$.
\begin{enumerate}[1.]
\item \label{sec3:th4:item1_} If the fixed step sizes $\alpha $ and $\varpi$ satisfy $\alpha\varpi K L_g\le 1$, then 
\begin{align} \label{sec3:th4:1_}
\frac{1}{T}\sum_{t=1}^T \mathbb{E}[\|\mathrm{grad}F(x_t)\|^2]
	& \le \frac{2 \Theta(x_1)}{\varpi \alpha K T} + {2\alpha Q(K,B,\alpha,\varpi)}.
\end{align}
\item \label{sec3:th4:item2_} If local full gradient descent step is performed in local updates, i.e., $\sigma_L=0$, and one takes a local fixed step size $\alpha>0$ such that $\alpha = \sqrt{\frac{\Theta(x_1)}{2\varpi P^2(G\delta_2^2+\Upsilon^2\delta_4^2KL_g\varpi)KT}}$ with $T$ satisfying 
$
		T \ge \max \left\{ \frac{\varpi KL_g^2\Theta(x_1)}{2P^2(G\delta_2^2+KL_g\varpi\Upsilon^2\delta_4^2)}, \frac{ \Theta(x_1) (2K-1)^2(K-1)^2L_f^4\delta_1^4(\varpi^2L_g^2J^2K^2+P^2H^2)^2}{72P^2L_g^4K^4\varpi^5(G\delta_2^2+KL_g\varpi\Upsilon^2\delta_4^2)^3} \right\},
$
then 
\[
	\frac{1}{T}\sum_{t=1}^T \mathbb{E}[\|\mathrm{grad}F(x_t)\|^2] \le 4P\sqrt{2\Theta(x_1)\left(\frac{G\delta_2^2}{\varpi KT}+\frac{L_g\Upsilon^2\delta_4^2}{T}\right)}.
\]
\item \label{sec3:th4:item3_} If the true probabilities are known, meaning $G=0$, and 
 one takes local and global step sizes $\alpha$ and $\varpi$ such that $\alpha\varpi = \sqrt{\frac{\Theta(x_1)B}{(\delta_3^2\sigma_L^2+2P^2\delta_4^2KB)\Upsilon^2L_gKT}}$ with $T$ satisfying 
$ T \ge \max \bigg\{ \frac{KL_g\Theta(x_1) B}{(\delta_3^2\sigma_L^2+2P^2\delta_4^2KB)\Upsilon^2},$ $ \frac{\Theta(x_1)(2K-1)^2(K-1)^2L_f^4\delta_1^4P^4(L_g^2\varpi^2J^2K^2+P^2H^2)^2B^3}{ 9(\delta_3^2\sigma_L^2+2P^2\delta_4^2KB)^3\Upsilon^6L_g^7\varpi^6K^5} \bigg\},$
then 
\[
\frac{1}{T}\sum_{t=1}^T \mathbb{E}[\|\mathrm{grad}F(x_t)\|^2] \le 4\Upsilon\sqrt{ L_g\Theta(x_1)\left( \frac{\delta_3^2\sigma_L^2}{KTB} + \frac{2P^2\delta_4^2}{T} \right)}.
\]
\end{enumerate}
\end{theorem}

\begin{proof}[Proof]
Item~\ref{sec3:th4:item1_}. Using $\alpha_t=\alpha$ and $B_t=B$ in Lemma~\ref{lemm:4}, we have 
\[
	\mathbb{E}[\|\mathrm{grad}F(x_t)\|^2] \le \frac{2 \mathbb{E}[F(x_{t}) - F(x_{t+1})]}{\varpi \alpha  K} + {2\alpha Q(K,B,\alpha_t,\varpi)}. 
\]
Summing the inequality above over $t=1,2,\dots,T$ gives rise to 
\begin{align*}
	\frac{1}{T}\sum_{t=1}^T \mathbb{E}[\|\mathrm{grad}F(x_t)\|^2] & \le \frac{2 \mathbb{E}[F(x_0) - F(x_{T+1})]}{\varpi \alpha K T}  + {2\alpha Q(K,B,\alpha,\varpi)} \\ 
	& \le \frac{2 (F(x_0) - F(x^*))}{\varpi \alpha K T} + {2\alpha Q(K,B,\alpha,\varpi)},
\end{align*}
where the last inequality follows $F(x^*)\le F(x_{T+1})$. 	\\
Item~\ref{sec3:th4:item2_}. In particular, suppose that let  $\alpha$ and $\varpi$ satisfy 
\begin{align}	 \label{sec3:th4:2}
\frac{1}{6}(2K-1)(K-1)L_f^2\delta_1^2P^2(J^2+\alpha^2 P^2H^2)\alpha \le GP^2\delta_2^2 + \Upsilon^2P^2\delta_4^2KL_g\varpi.
\end{align}
Define $h(\alpha)=\frac{2\Theta(x_1)}{\varpi \alpha K T} + 4\alpha GP^2\delta_2^2+4\Upsilon^2P^2\delta_4^2KL_g\varpi\alpha$. Solving $\alpha^*=\argmin_{\alpha>0}h(\alpha)$ results in 
\begin{align*}
	\alpha^* = \sqrt{\frac{\Theta(x_1)}{2\varpi P^2(G\delta_2^2+\Upsilon^2\delta_4^2KL_g\varpi)KT}}, \hbox{ and } h(\alpha^*) = 4P\sqrt{2\Theta(x_1)\left(\frac{G\delta_2^2}{\varpi KT}+\frac{L_g\Upsilon^2\delta_4^2}{T}\right)}.
\end{align*}
Taking 
\[
	T \ge \max \left\{ \frac{\varpi KL_g^2\Theta(x_1)}{2P^2(G\delta_2^2+KL_g\varpi\Upsilon^2\delta_4^2)}, \frac{ \Theta(x_1) (2K-1)^2(K-1)^2L_f^4\delta_1^4(\varpi^2L_g^2J^2K^2+P^2H^2)^2}{72P^2L_g^4K^4\varpi^5(G\delta_2^2+KL_g\varpi\Upsilon^2\delta_4^2)^3} \right\}
\]
can ensure that $\alpha^*\varpi K L_g \le 1$ and that~(\ref{sec3:th4:2}) holds. Hence, the left-hand side of~(\ref{sec3:th4:1_}) is not greater than $h(\alpha^*)$. The proof for Item~\ref{sec3:th4:item3_} is similar to that for Item~\ref{sec3:th4:item2_}.
\end{proof}

\begin{remark}
	Continuing with Remark~\ref{sec3:remark3}, If the probabilities $p_i$ are known, i.e., $q_t^i=p_i$, and $p_{\min}=\min_i\{p_i\}$ is not too small and not fairly far away from $p_{\max}=\max_i\{p_i\}$, Item~\ref{sec3:th4:item2_} gives the upper bound as $\mathcal{O}(\frac{1}{\sqrt{\varpi KT}}) + \mathcal{O}(\frac{1}{\sqrt{p_{\min}NT}})$.  
	In particular, if $p_i=\frac{S}{N}$ with $S\le N$, the upper bound becomes $\mathcal{O}(\frac{1}{\sqrt{\varpi KT}}) + \mathcal{O}(\frac{1}{\sqrt{ST}})$. 
\end{remark}

\subsection{Proof of Theorem~\ref{sec3:th5}}

\begin{proof}[Theorem~\ref{sec3:th5}]
	Using a fixed stepsize $\alpha_{t}=\alpha\le 1/(\mu\varpi K)$ satisfying Condition~(\ref{lemm:4:1}) and batchsize $B_{t,k}\in[B_{\mathrm{low}},B_{\mathrm{up}}]$, it follows from~(\ref{th:decaying_RPL:5}) that 
\[
	\mathbb{E}[F(x_{t+1})] - F(x^*) \le (1 - \mu \varpi K \alpha) \mathbb{E}[F(x_{t})] - F(x^*) + {\varpi\alpha^2K}Q(K,S,B_{\mathrm{low}},\alpha,\varpi),
\]
which implies that
\begin{align*}
	 &	\mathbb{E}[F(x_{T})] - F(x^*) \le (1 - \mu \varpi K \alpha) \mathbb{E}[F(x_{T-1})] - F(x^*) + {\varpi\alpha^2K}Q(K,B_{\mathrm{low}},\alpha,\varpi)\\
		& \quad \le (1 - \mu \varpi K \alpha)^2 (\mathbb{E}[F(x_{T-2})] - F(x^*) ) + (( 1- \mu \varpi K \alpha) + 1 ) {\varpi\alpha^2K}Q(K,B_{\mathrm{low}},\varrho,\varpi) \\ 
		& \quad \dots \\
		& \quad \le ( 1- \mu \varpi K \alpha)^{T-1}(\mathbb{E}[F(x_1)] - F(x^*)) + \sum_{\tau=0}^{T-1}(1 - \mu \varpi K \alpha)^{\tau}{\varpi\alpha^2 K}Q(K,B_{\mathrm{low}},\alpha,\varpi) \\
		& \quad = (1-\mu \varpi K\alpha)^{T-1}\Theta(x_1) + \frac{ 1 - (1-\mu\varpi K \alpha)^{T} }{\mu \varpi K\alpha} {\varpi\alpha^2K}Q(K,B_{\mathrm{low}},\alpha,\varpi) \\
		& \quad \le (1-\mu\varpi K\alpha)^{T-1}\Theta(x_1)  + \frac{\alpha }{\mu} Q(K,B_{\mathrm{low}},\alpha,\varpi), 
\end{align*}
which completes the proof. 
\end{proof}

\subsection{Proof of Theorem~\ref{sec3:th6}} \label{Proof:th6}
\begin{proof}[Theorem~\ref{sec3:th6}]		
Restricting $q_t^i\in[p_i/2,3p_i/2]$ yields $\mathbb{P}\{|q_t^i-p_i|\le p_i/2\}\ge 1 - \min\{2 e^{-tp_i^2/2},4(1-p_i)/(tp_i)\}$ by the Hoeffding's and Chebyshev's inequalities. Then 
\[
	\left|\frac{1}{q_t^i}-\frac{1}{p_i}\right| = \left|\frac{q_t^i-p_i}{q_t^ip_i}\right| \le \frac{2}{p_i^2}|q_t^i-p_i|
\]
holds with probability not less than $1- \min\{2 e^{-tp_i^2/2},4(1-p_i)/(tp_i)\}$. Noting that under $q_t^j\in[p_i/2,3p_i/2]$, $\frac{2}{p_i^2}|q_t^i-p_i| \le \mathcal{G}t^{-a/2}$ (i.e., $|q_t^i-p_i|\le \frac{\mathcal{G}}{2}p_i^2t^{-a/2}$) implies $|(q_t^i)^{-1}-p_i^{-1}|\le \mathcal{G}t^{-a/2}$, and that  
\[
	\mathbb{P}\left\{|q_t^i-p_i|\le \frac{\mathcal{G}}{2}p_i^2t^{-a/2}\right\} \ge 1 - \min\left\{ 2 e^{-\frac{\mathcal{G}^2 p_i^4}{2}t^{1-a}}, \frac{4(1-p_i)}{G^2p_i^3 t^{1-a}} \right\}, 
\]
where we use the Hoeffding's and Chebyshev's inequalities again. Let $\mathcal{A}:=\{|(q_{t}^i)^{-1}-p_i^{-1}|\le \mathcal{G}t^{-a/2}\}$, $\mathcal{B}:=\{q_t^i\in[p_i/2,3p_i/2]\}$, and $\mathcal{C}:=\{|q_t^i-p_i|\le \frac{\mathcal{G}}{2}p_i^2t^{-a/2}\}$.
The desired result follows $\mathcal{B}\cap \mathcal{C} \subseteq \mathcal{A}$ and $\mathbb{P}\{\mathcal{B}\cap \mathcal{C}\}\ge1-\mathbb{P}\{\mathcal{B}^c\}-\mathbb{P}\{\mathcal{C}^c\}$. 
\end{proof}

\section{Supplementary Proofs} \label{app:E}

\subsection{Proof of Theorem~\ref{sec2:th1}} \label{Proof:th21}

\begin{lemma} \label{lemm:3.1}
Let $x_1,x_2,\dots,x_N$ be independent Bernoulli random variables with $p_i>0$, i.e., $x_i \sim \mathrm{Bernoulli}(p_i)$. Then, 
\[
	\mathbb{E}\left[\frac{1}{1+\sum_{i=1}^Nx_i}\right] = \int_0^1 \prod_{i=1}^{N}(1-p_i+p_it)\mathrm{d}t.
\]
\end{lemma}
\begin{proof}
	Let $S=\sum_{i=1}^Nx_i$. Considering that for any $a>0$, it follows 
$
		\frac{1}{a} = \int_0^{\infty} e^{-at}\mathrm{d}t. $
Picking $\alpha = 1+S>0$ yields 
\[
	\frac{1}{1+\sum_{i=1}^Nx_i}=\frac{1}{1+S} = \int_0^{\infty} e^{-t}e^{-St}\mathrm{d}t. 
\]
Taking expectation for both sides of the equality above, we have 
\[
	\mathbb{E}\left[\frac{1}{1+\sum_{i=1}x_i}\right] = \mathbb{E}\left[\int_0^{\infty} e^{-t}e^{-St}\mathrm{d}t \right] = \int_0^{\infty} e^{-t}\mathbb{E}[e^{-St}]\mathrm{d}t,
\]
where the second equality is due to that $e^{-St}$ is a discrete random variable. 
Since $x_i$ is independent and $S=\sum_{i=1}^Nx_i$, it follows $\mathbb{E}[e^{-St}]=\prod_{i=1}^N \mathbb{E}[e^{-x_it}]$. Noting that $\mathbb{E}[e^{-x_it}]=p_ie^{-t}+(1-p_i)$, we obtain $\mathbb{E}[e^{-St}]=\prod_{i=1}^N(p_ie^{-t}+(1-p_i))$. Finally, let $u=e^{-t}$. Then $\mathrm{d}u=-e^{-t}\mathrm{d}t$, $u\rightarrow 1$ as $t \rightarrow 0$, and $u \rightarrow 0$ as $t \rightarrow \infty$. Hence, 
\[
	\int_0^{\infty} e^{-t} \mathbb{E}[e^{-St}]\mathrm{d}t = \int_0^1\prod_{i=1}^N(1-p_i+p_iu)\mathrm{d}u,
\] 
which completes the proof. 
\end{proof}

Now we are ready to prove Theorem~\ref{sec2:th1}. 
\begin{proof}[Proof of Theorem~\ref{sec2:th1}]
	At the $t$-th outer iteration, $\mathcal{S}_t$ denotes the indices set of agents who send their gradient streams to the server. Let $x_i=\begin{cases}
		1 & i\in \mathcal{S}_t, \\ 0 & i\notin \mathcal{S}_t.\end{cases}$ Then 
		\begin{align} \label{app:E:1}			
		\mathbb{E}\left[ \sum_{i\in \mathcal{S}_t} \frac{1}{|\mathcal{S}_t|}\mathrm{grad}f_i(x)\right] = \sum_{i=1}^N \mathrm{grad}f_i(x) \mathbb{E}\left[ \frac{x_i}{\sum_{i=1}^Nx_i}  \right].
		\end{align}
Noting that $\mathbb{E}\left[ \frac{x_i}{\sum_{i=1}^Nx_i}  \right] = \mathbb{E} \left[ \mathbb{E} \left[\frac{x_i}{\sum_{i=1}x_i}\right] \bigg| x_i \right] = p_i \mathbb{E}\left[\frac{1}{1+\sum_{j\not=i}^Nx_j}\right]$. Since $x_j\sim \mathrm{Bernoulli}(p_j)$ is independent, by Lemma~\ref{lemm:3.1}, we have $\mathbb{E}\left[ \frac{1}{1+\sum_{j\not=i}^Nx_i} \right]=\int_0^1\prod_{j\not=i}^N(1-p_j+p_jt)\mathrm{d}t$. Plugging these intermediate results into~(\ref{app:E:1}) leads to the desired result. 
\end{proof}

\subsection{Proof of the claim in Remark~\ref{sec3:rem3}} \label{Proof:remark3.4}

In general, it is difficult to verify directly whether the objective function satisfies the PL (in the Euclidean setting) or RPL (in the Riemannian setting) property. There are some stronger but useful sufficient conditions that imply PL or RPL condition. Specifically, in the Euclidean setting, a strongly convex function satisfies the PL condition~\cite{BCN18}. Similarly, in the Riemannian setting, the geodesic strong convexity of real-valued functions implies the RPL property~\cite{Bou23}. However, geodesic strong convexity usually requires the use of exponential mapping and its inverse, whose the closed-form expression is not available in some manifolds, e.g., the Stiefel manifold. 
In the next theorem, we use a more general notion of the strong convexity of real-valued functions---strong retraction-convexity, in the Riemannian setting than geodesic strong convexity and claim that a strongly retraction-convex function also satisfies RPL condition.

\begin{theorem} \label{th:RPL}
	Suppose that function $q : \mathcal{M} \rightarrow \mathbb{R}$ is twice continuously differentiable and $\mu$-strongly retraction-convex with respect to the retraction $\mathrm{R}$ on $\mathcal{W} \subseteq \mathcal{M}$, which is a totally retractive neighborhood of $x^*$, a minimizer of $q$ on $\mathcal{W}$. Then,
\[
	q(x) - q(x^*) \le \frac{1}{2\mu}\|\mathrm{grad}q(x)\|^2,
\]
that is, $q$ satisfies the RPL condition on $\mathcal{W}$. 
\end{theorem}
\begin{proof}
	From the poof of~\cite[Lemma~3.2]{HGA15}, the $\mu$-strongly retraction-convexity of $q$ implies that
\begin{align} \label{th:RPL:1}
	q(y) - q(x) \ge \left<\mathrm{grad}q(x),\eta\right> + \frac{\mu}{2}\|\eta\|^2,
\end{align}
for any $x\in \mathcal{W}$, $\eta \in \mathrm{T}_x \mathcal{M}$, and $y=\mathrm{R}_x(\eta) \in \mathcal{W}$. Define $q_x(\eta)=q(x) + \left<\mathrm{grad}q(x),\eta\right> + \frac{\mu}{2}\|\eta\|^2$ with $\eta\in \mathrm{T}_{x}\mathcal{M}$, which is $\mu$-strongly convex with respect to $\eta$ (in classical), implying that the unique minimizer of $q_x$ is given by 
	$\eta^* = - \frac{1}{\mu} \mathrm{grad}q(x)$.
Thus, $\min_{\eta \in \mathrm{T}_x \mathcal{M}}q_x(\eta)=q_x(\eta^*)=q(x) - \frac{1}{2\mu}\|\mathrm{grad}q(x)\|^2$. It follows from~(\ref{th:RPL:1}) that 
\[
	q(x^*) \ge q(x) + \left<\mathrm{grad}q(x),\eta\right> + \frac{\mu}{2}\|\eta\|^2 \ge q_x(\eta^*)=q(x) - \frac{1}{2\mu}\|\mathrm{grad}q(x)\|^2,
\]
which completes the proof. 
\end{proof}

\end{document}